\newtheorem{theorem}{Theorem}
\newtheorem{lemma}{Lemma}
\newtheorem{definition}{Definition}
\newtheorem{remark}{Remark}
\title{Causal Discovery from Poisson Branching Structural Causal Model Using High-Order Cumulant with Path Analysis}
\author {
    Jie Qiao\textsuperscript{\rm 1}\equalcontrib,
    Yu Xiang\textsuperscript{\rm 1}\equalcontrib,
    Zhengming Chen\textsuperscript{\rm 1},
    Ruichu Cai\textsuperscript{\rm 1,2}\thanks{Corresponding author.},
    Zhifeng Hao\textsuperscript{\rm 3}
}
\begin{document}
\maketitle
\begin{abstract}
Count data naturally arise in many fields, such as finance, neuroscience, and epidemiology, and discovering causal structure among count data is a crucial task in various scientific and industrial scenarios. One of the most common characteristics of count data is the inherent branching structure described by a binomial thinning operator and an independent Poisson distribution that captures both branching and noise. For instance, in a population count scenario, mortality and immigration contribute to the count, where survival follows a Bernoulli distribution, and immigration follows a Poisson distribution.
However, causal discovery from such data is challenging due to the non-identifiability issue: a single causal pair is Markov equivalent, i.e.,  $X\rightarrow Y$ and $Y\rightarrow X$ are distributed equivalent. Fortunately, in this work, we found that the causal order from $X$ to its child $Y$ is identifiable if $X$ is a root vertex and has at least two directed paths to $Y$, or the ancestor of $X$ with the most directed path to $X$ has a directed path to $Y$ without passing $X$. Specifically, we propose a Poisson Branching Structure Causal Model (PB-SCM) and perform a path analysis on PB-SCM using high-order cumulants. Theoretical results establish the connection between the path and cumulant and demonstrate that the path information can be obtained from the cumulant. With the path information, causal order is identifiable under some graphical conditions. A practical algorithm for learning causal structure under PB-SCM is proposed and the experiments demonstrate and verify the effectiveness of the proposed method.

\end{abstract}

\section{Introduction}
Causal discovery from observational data especially for count data is a crucial task that arises in numerous applications in biology \cite{wiuf2006binomial}, economic \cite{weiss2014diagnosing}, network operation maintenance \cite{ijcai2023p0633,cai2022thps}, etc. In online services, for instance, the reason for the number of product purchases is of particular interest, while finding the underlying causal structure among user behavior from purely observational data is appealing and pivotal for online operation. 

Much effort has been made to address the identification of causal structure from observational data \cite{spirtes2000causation,zhang2018learning,glymour2019review,cai2018self}. In particular, constraint-based methods \cite{pearl2009causality,spirtes1995causal}, score-based methods \cite{chickering2002optimal,tsamardinos2006max} identify the causal structure by exploring the conditional independence relation among variables, but these methods only focus on the category domain and can only identify up to the Markov equivalent class  \cite{pearl2009causality}. 
Thus, proper count data modeling is required to further identify the causal structure beyond the equivalence class. 
Recent work by \cite{park2015learning} introduces a Poisson Bayesian network to model the count data and shows that it is identifiable using the overdispersion properties of Poisson BNs. Subsequently, it has been extended by accommodating a broader spectrum of distributions \cite{park2017learning}. In addition, the modeling of the zero-inflated Poisson data \cite{choi2020bayesian} and the ordinal relation data \cite{ni2022ordinal} and its identifiability of causal structure are investigated. However, the majority of these methods model the count data using Bayesian network ignoring the inherent branching structure among the counting relationship which is frequently encountered \cite{weiss2018introduction}.

\begin{figure}[t]
    \centering
    \subfigure[Branching Structure]{
            \includegraphics[width=0.21\textwidth]{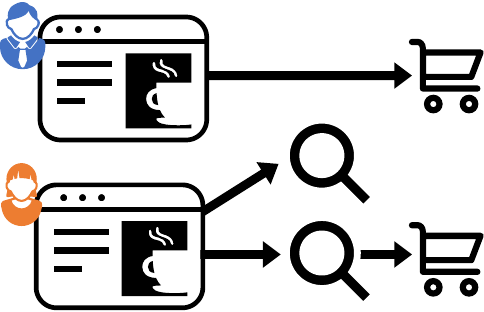}
    }
    \subfigure[Causal Graph]{
            \includegraphics[width=0.145\textwidth]{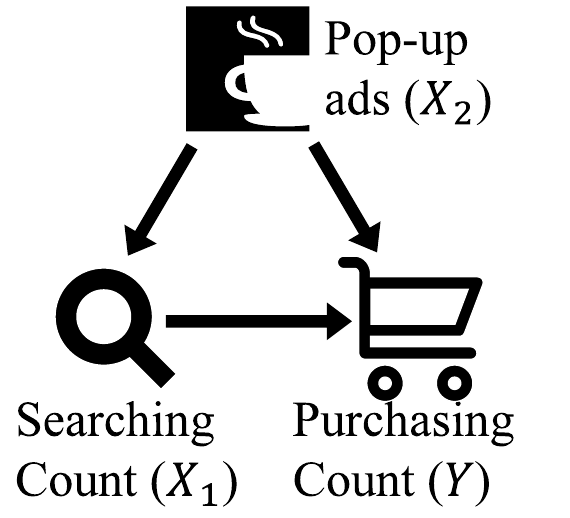}
    }
    \caption{Illustration of branching structure causal modeling.}
    \label{fig:motivate_exp}
\end{figure}

Take Figure \ref{fig:motivate_exp} as an example, the cause of the purchasing event can be inherited from some of the searching events, the pop-up ads event, or exogenously occurs. As a result, the causal relationship among counts constitutes a branching structure that can be modeled by a binomial thinning operator `$\circ$' \cite{steutel1979discrete} with an additive independent Poisson distribution for innovation. That is, the purchasing count ($Y$) is affected by the pop-up ads count ($X_2$) and the searching count ($X_1$) which can be modeled by $Y=a_1\circ X_1 + a_2\circ X_2+\epsilon$ where $a \circ X \coloneq \sum_{n=1}^X\xi^{(a)}_n$, and $\xi^{(a)}_n\sim \text{Bern}(a),\epsilon \sim \text{Pois}$. Generally speaking, the thinning operator models the branching structure that not every click will lead to purchasing while the additional noise models the general count of exogenous events. 
That is, a count represents the random size of an imaginary population, and the thinning operation randomly deletes some of the members of this population while concurrently introducing new immigration. 
This modeling approach finds widespread utility across various domains, notably within the context of the integer-value autoregressive model \cite{weiss2018introduction}, which is first proposed by \citet{al1987first,mckenzie1985some}. Despite its extensive used, how to identify the causal structure in such type of model from purely observational data is still unclear.

To explicitly account for the branching structure, we propose a Poisson Branching Structural Causal Model (PB-SCM). We establish the identifiability theory for the proposed PB-SCM using high-order cumulant with path analysis. Theoretical results suggest that for any adjacent vertex $X$ and $Y$, the causal order is identifiable if $X$ is a root vertex and has at least two directed paths to $Y$, or the ancestor of $X$ with the most directed path to $X$ has a directed path to $Y$ without passing $X$. Based on the results of the causal order we further propose an efficient causal skeleton learning approach featured with FFT acceleration. 
We demonstrate the effectiveness of the proposed causal discovery method using synthetic data and real data.

\section{Poisson Branching Structural Causal Model}
In this section, we first formalize the Poisson branching structural causal model, and then we introduce the preliminary of cumulant and some necessary properties in this model.

\subsection{Problem Formulation}

Our framework is in the causal graphical models. We use $Pa( i) =\left\{j|j\rightarrow i\right\}$, $An( i) =\{j|j\leadsto i \}$ denote the set of parents, ancestors of vertex $i$ in a directed acyclic graph (DAG), respectively, and $An( i,j) =An( i) \cap An( j)$ denote the set of common ancestors of vertex $i$ and vertex $j$. Moreover, we define a \textit{directed path} $P=( i_0 ,i_1 ,...,i_n)$ in $G$ is a sequence of vertices of $G$ where there is a directed edge from $i_j$ to $i_{j+1}$ for any $0\leqslant j\leqslant n-1$ with the coefficient $\alpha _{i_j ,i_{j+1}}$ of each edge. The set of vertices can be arranged in \textit{causal order}, such that no later variable causes any earlier variable.

Now, we show the causal relationship in a causal graph can be formalized as the \textbf{P}oisson \textbf{B}ranching \textbf{S}tructural \textbf{C}ausal \textbf{M}odel (PB-SCM). Let $X =\{X_1 ,\dotsc ,X_{|V|} \}$ denotes a set of random Poisson counts, of which the causal relationship consist of a causal DAG $G(V,E)$ with the vertex set $V=\{1,2,...,|V|\}$ and edge set $E$ such that each causal relation follows the PB-SCM:
\begin{definition}[Poisson Branching Structural Causal Model]
For each random variable $X_i \in X$, let $\epsilon _i \sim \text{Pois} (\mu _i )$ be the noise component of $X_i$, then $X_i$ is generated by: 
\begin{equation}
X_i =\sum _{j\in Pa(i)} \alpha _{j,i} \circ X_j +\epsilon _i ,
\end{equation}
where $\alpha _{j,i} \in (0,1]$ is the coefficient from vertex $j$ to $i$, $Pa( i)$ is the parent set of $X_i$ in $G$, and $\alpha \circ X_i:=\sum\nolimits _{n=1}^{X_i} \xi _n^{(\alpha )}$ is a Binomial thinning operation with $\xi _n^{(\alpha )}\stackrel{\mathrm{i.i.d.}}{\sim }\text{Bern} (\alpha )$, $\text{Bern} (\alpha )$ is the Bernoulli distribution with parameter $\alpha $.
\end{definition}

We further define some graphical concepts.
We use $\mathbf{P}^{i \leadsto j} =\left\{P_k^{i \leadsto j}\right\}_{k=1}^{|\mathbf{P}^{i \leadsto j} |}$ denotes the set of all directed paths from vertex $i$ to $j$, where $P_k^{i \leadsto j}=(i,k_1,k_2,...,k_p,j)$, $p=|P_k^{i \leadsto j}|-2$, denote the $k$-th directed path from vertex $i$ to $j$. For each directed path $\displaystyle P_k^{i \leadsto j}$, we use $\displaystyle A_k^{i \leadsto j} =(\alpha_{i ,k_1}, \alpha_{k_1 ,k_2},\dotsc ,\alpha _{k_p ,j})$ denote the corresponding \textit{coefficients sequence} of path $P_k^{i \leadsto j}$. We let $\mathbf{P}^{i \leadsto i}=\{P^{i \leadsto i}\}$ also be a valid directed path for simplicity.
Besides, we use $A_k^{i \leadsto j} \circ X_i \coloneq \alpha _{k_p ,j} \circ \cdots\circ \alpha_{k_1 ,k_2} \circ \alpha _{i ,k_1} \circ X_i$ denote to perform a consecutive thinning operation on $X_i$ based on the path sequence. 

\noindent\textbf{Goal:} Given i.i.d. samples $\mathcal{D}=\{x_1^{(j)},\dots,x_{|V|}^{(j)}\}_{j=1}^m$ from the joint distribution $P(X)$, our goal is to identify the unknown causal structure $G$ from $\mathcal{D}$, assuming the data generative mechanism follows PB-SCM. 

\subsection{Preliminary}
To address the identification of PB-SCM, cumulant are used in our work for building a connection to the path, providing a solution to the identifiability issue. Here, we recall the definition of cumulant and some basis properties.

\begin{definition}[k-th order joint cumulant tensor]
    The k-th order joint cumulant tensor of a random vector $\displaystyle X =[ X_1 ,...,X_n]^T$ is the $\displaystyle k$-way tensor $\displaystyle \mathcal{T}_X^{( k)}$ in $\displaystyle R^{n\times \cdots \times n} \equiv \left( R^n\right)^k$ whose entry in $\displaystyle ( i_1 ,...,i_k)$ is the joint cumulant:
\begin{equation}\label{eq:cumulant_definition}
\small
\begin{aligned}
 & \mathcal{T}{_X^{( k)}}_{i_1 ,...,i_k} =\operatorname{\kappa } (X_{i_1} ,\dotsc ,X_{i_k} ):=\\
 & \sum _{(B_1 ,\dotsc ,B_L )} (-1)^{L-1} (L-1)!\mathbb{E}\biggl[\prod _{j\in B_1} X_j\biggr] \cdots \mathbb{E}\biggl[\prod _{j\in B_L} X_j\biggr] ,
\end{aligned}
\end{equation}
where the sum is taken over all partitions $( B_1 ,\dots,B_L)$ of the multiset $\{i_1 ,...,i_k\}$.
\end{definition}

In this work, we use the following specific cumulant form:
\begin{definition}[2D slice of joint cumulant tensor]
For a random vector $X$ with $k$-th order joint cumulant tensor $\mathcal{T}_X^{( k)}$ where $k\geq2$, denote its 2D matrix slice of $k$-th order joint cumulant tensor as $\mathcal{C}^{( k)}$, where
\begin{equation}\label{eq:2dcumulant}
\mathcal{C}_{i,j}^{(k)} :=\kappa (X_i ,\underbrace{X_j ,\cdots ,X_j}_{k-1\ \mathrm{times}} ).
\end{equation}
\end{definition}

Cumulant has the property of \textit{multilinearity} such that $\kappa(X+Y,Z_1,\dots)=\kappa(X,Z_1,\dots)+\kappa(Y,Z_1,\dots)$. Furthermore, any cumulant involving two (or more) independent random variables equals zero, i.e., $\kappa(\epsilon_i,\epsilon_j,\dots)=0$ if $\epsilon_i$ and $\epsilon_j$ are independent. More importantly, any two variables in cumulant are exchangeable, e.g., $\kappa(X,Y,\dots)=\kappa(Y,X,\dots)$.

\section{Identifiability}

In this section, we deal with the identification problem of causal structure under PB-SCM. Due to our identifiability result benefit from the `reducibility' of cumulant in Poisson distribution, we first characterize such property in Theorem \ref{th: reduce}. After which, an example is provided to reveal the intrinsic relation between the cumulant and the path in a causal graph under PB-SCM. Based on such connection, we complete the identifiability results that are divided into the case when the cause variable is root (Theorem \ref{th:Identifiability}) and the case when the cause variable is not root (Theorem \ref{thm:identification}).

\begin{figure}[t]
    \centering
    \includegraphics[width=0.43\textwidth]{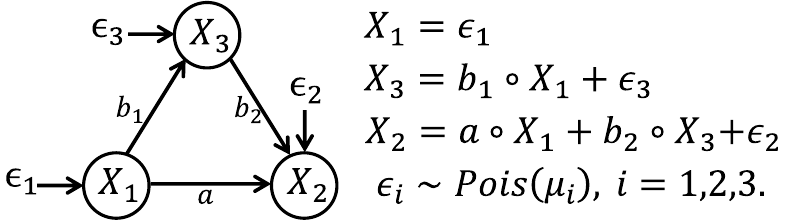}
    \caption{Triangular structure. For simplicity, we denote directed path $P_1 : X_1\xrightarrow{a} X_2$ and $P_2 :X_1\xrightarrow{b_1} X_3\xrightarrow{b_2} X_2$ with sequence of path coefficients $A_1 =(a)$ and $A_2 =(b_1 ,b_2)$.}
    \label{fig:triangular_structure}
\end{figure}

\begin{figure*}[t]
    \centering
    \subfigure[Cumulant decomposition of the causal pair $X_1\leadsto X_2$ where $X_1$ is root.]{
            \includegraphics[width=0.4\textwidth]{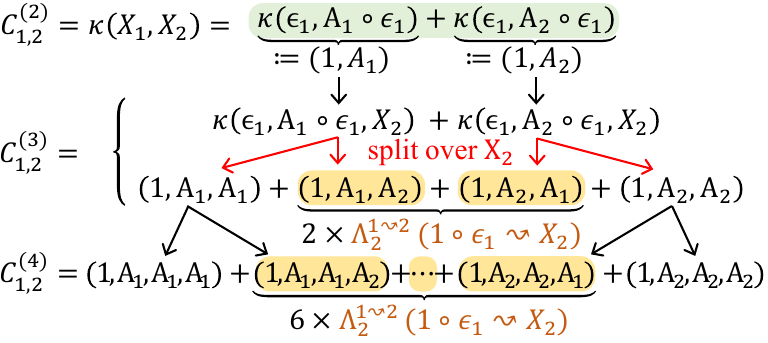}
            \label{pic: X1-X2}
    }
    \subfigure[Cumulant decomposition of the causal pair $X_3\leadsto X_2$ where $X_3$ is not root.]{
            \includegraphics[width=0.5\textwidth]{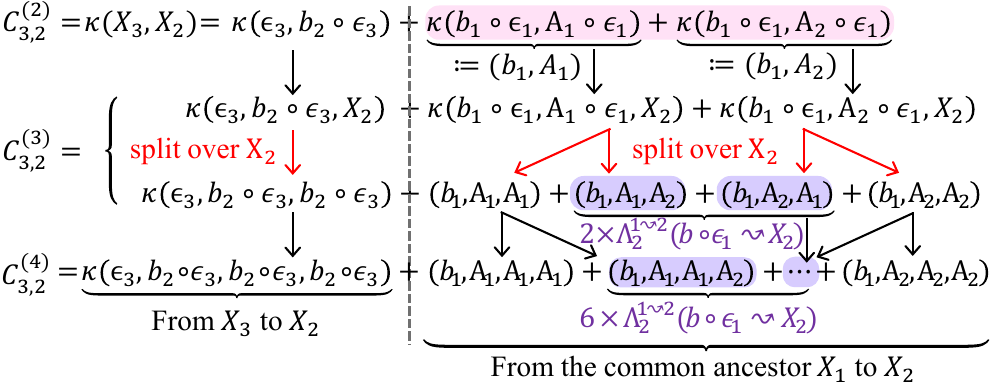}
            \label{pic: X3-X2}
    }
    \caption{Illustration of decomposing the cumulant of causal direction, $\mathcal{C}_{1,2}$ and $\mathcal{C}_{3,2}$, in triangular structure (Fig. \ref{fig:triangular_structure}). For simplicity, we denote $\kappa(\epsilon_i,A_i\circ \epsilon_i,...,A_j\circ \epsilon_i)$ by $(1,A_i,...,A_j)$ and denote $\kappa(b_1\circ\epsilon_i,A_i\circ \epsilon_i,...,A_j\circ \epsilon_i)$ by $ (b_1,A_i,...,A_j).$
    }
    \label{fig: cumulant decomposition}
\end{figure*}

\begin{figure}[t]
    \centering
    \subfigure[Cumulant decomposition of $X_2\!\!\leadsto\!\! X_1$.]{
            \includegraphics[width=0.16\textwidth]{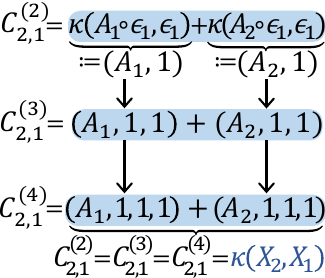}
            \label{pic: X2-X1}
    }
    \subfigure[Cumulant decomposition of $X_2\leadsto X_3$.]{
            \includegraphics[width=0.28\textwidth]{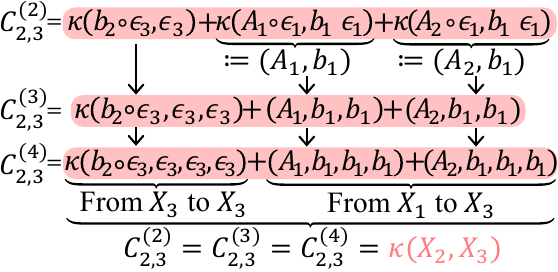}
            \label{pic: X2-X3}
    }
    
    \caption{Illustration of decomposing the cumulant of reverse direction, $\mathcal{C}_{2,1}$ and $\mathcal{C}_{2,3}$, in triangular structure (Fig. \ref{fig:triangular_structure}). }
    \label{fig: cumulant decomposition reverse}
\end{figure}

We first introduce a fundamental property of cumulant in PB-SCM that the cumulant is reducible:
\begin{theorem}[Reducibility]
\label{th: reduce}
Given a Poisson random variable $\epsilon $ and $n$ distinct sequences of coefficients $A_1 ,...,A_n$, we have
\begin{equation}
\begin{aligned}
 & \kappa (\underbrace{A_1 \circ \epsilon ,...,A_1 \circ \epsilon }_{k_1 \ \text{times}} ,...,\underbrace{A_n \circ \epsilon ,...,A_n \circ \epsilon }_{k_n \ \text{times}} )\\
 & =\kappa ( A_1 \circ \epsilon ,\dotsc , A_n \circ \epsilon )
\end{aligned}
\end{equation}
where each $A_i \circ \epsilon $ repeats $k_i\ge 1$ times in the original cumulant and only appears once in the reduced cumulant.
\end{theorem}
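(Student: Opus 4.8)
The plan is to compute the joint cumulant directly from the cumulant generating function (CGF), exploiting the fact that any iterated thinning of a Poisson variable is a \emph{compound Poisson} sum. I would write $\epsilon\sim\mathrm{Pois}(\mu)$ as a collection of $\epsilon$ independent ``individuals'', and for each individual $m$ and each coefficient sequence $A_i$ introduce a variable $Z_{i,m}\in\{0,1\}$ recording whether that individual survives the consecutive thinning along $A_i$. Then $A_i\circ\epsilon=\sum_{m=1}^{\epsilon}Z_{i,m}$, where the vectors $(Z_{1,m},\dots,Z_{n,m})$ are i.i.d.\ across $m$ and independent of $\epsilon$. Crucially, overlapping sequences share their common survival indicators while diverging edges use fresh independent Bernoulli trials, so the $Z_{i,m}$ may be dependent across $i$ but remain $\{0,1\}$-valued; if the sequences are realized by independent trials the coordinates simply factorize, but the argument will not rely on this.

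Given this representation, the standard compound Poisson identity (which packages the multilinearity and vanishing-on-independents properties recalled above) yields the joint CGF
\begin{equation}
K(t_1,\dots,t_n):=\log\mathbb{E}\Bigl[e^{\sum_i t_i (A_i\circ\epsilon)}\Bigr]=\mu\Bigl(\mathbb{E}\bigl[e^{\sum_i t_i Z_i}\bigr]-1\Bigr),
\end{equation}
where $(Z_1,\dots,Z_n)$ is one copy of the per-individual reach vector. The joint cumulant in which each argument $A_i\circ\epsilon$ is repeated $k_i$ times is exactly the mixed partial $\partial_{t_1}^{k_1}\cdots\partial_{t_n}^{k_n}K$ evaluated at $t=0$; since the additive constant is annihilated by differentiation and the $Z_i$ are bounded, I differentiate under the expectation to obtain
\begin{equation}
\kappa\bigl(\underbrace{A_1\circ\epsilon,\dots}_{k_1},\dots,\underbrace{A_n\circ\epsilon,\dots}_{k_n}\bigr)=\mu\,\mathbb{E}\Bigl[\prod_{i=1}^n Z_i^{k_i}\Bigr].
\end{equation}

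The crux is then a one-line observation: because each $Z_i\in\{0,1\}$ and every multiplicity satisfies $k_i\ge 1$, we have $Z_i^{k_i}=Z_i$, so $\mathbb{E}[\prod_i Z_i^{k_i}]=\mathbb{E}[\prod_i Z_i]$ independently of the $k_i$. Hence the repeated cumulant equals $\mu\,\mathbb{E}[\prod_i Z_i]$, which is precisely the value obtained by setting every $k_i=1$, i.e.\ the reduced cumulant $\kappa(A_1\circ\epsilon,\dots,A_n\circ\epsilon)$, establishing the claimed reducibility.

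I expect the main obstacle to lie in the first step rather than in the algebra: carefully justifying the joint compound Poisson representation when the coefficient sequences overlap, so that shared thinning operations are reflected as shared survival indicators (genuinely correlating the $Z_i$) while the full vector stays a sum of i.i.d.\ $\{0,1\}^n$ contributions over the $\mathrm{Pois}(\mu)$ many individuals. Once that representation is secured, the interchange of differentiation and expectation is routine by boundedness, and the cancellation $Z_i^{k_i}=Z_i$ does all the remaining work.
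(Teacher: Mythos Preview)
Your proof is correct and follows a genuinely different route than the paper's. Both arguments work through the cumulant generating function and ultimately exploit that the relevant survival variables are $\{0,1\}$-valued, but they organize the computation differently. The paper builds a tree representation of the nested thinning operations and applies the law of total expectation level by level (conditioning first on $\epsilon$, then on each intermediate thinned count) to obtain a recursive formula for the joint MGF; from this it shows that $\partial_{t_1}\cdots\partial_{t_n}K_{\mathbf R}(\mathbf t)=\beta\,e^{t_1+\cdots+t_n}$, so that any further differentiation in some $t_i$ leaves the value at $\mathbf t=0$ unchanged. You instead pass immediately to the per-individual decomposition $A_i\circ\epsilon=\sum_{m=1}^{\epsilon}Z_{i,m}$, which collapses the whole computation into the single compound-Poisson identity $K(\mathbf t)=\mu\bigl(E[e^{\sum_i t_i Z_i}]-1\bigr)$ and then the idempotence $Z_i^{k_i}=Z_i$. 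Your route is shorter and makes the mechanism transparent---the reducibility is literally the Bernoulli identity $Z^k=Z$---at the cost of having to justify the individual-level coupling when paths share prefixes, exactly the step you flag as the main obstacle. The paper's layer-by-layer conditioning avoids that coupling argument by staying at the level of counts, and as a by-product yields the explicit value $\beta=\mu\prod_{e}\alpha_e$; your expression $\mu\,E[\prod_i Z_i]$ recovers the same constant once one notes that $\prod_i Z_i=\prod_e\xi_e$ over all tree edges.
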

Such a result is a generalization of the property of the Poisson distribution since the cumulant of the Poisson distribution is identical in every order.

\subsection{Motivating Example}

Before describing our theoretical results, we use a motivating example to show the challenges of the non-identifiability issues and then introduce the basic intuition regarding in what case and how can we identify the PB-SCM. 

To see the non-identifiability issue, we can show that a reversed model always exists in a two-variable system.
\begin{remark}
    For any two variables causal graph, the causal direction of PB-SCM is not identifiable and a distributed equivalent reversed model exists.
\end{remark}
For instance, consider $X_1\to X_3$ in Fig. \ref{fig:triangular_structure}, the distributed equivalent reverse model satisfies $X_1 =\hat{b}_1 \circ X_3 +\hat{\epsilon }_1$, where $\hat{b}_1 =b_1 \mu _1/(b_1 \mu _1 +\mu _3)$ and $\hat{\epsilon }_1 \sim \text{Pois}( \mu _1 -b_1 \mu _1)$ such that this direction is not identifiable.

Fortunately, we find that the causal direction is still possible to identify in a more general structure. Considering the causal relationship between $X_1$ and $X_2$ in Fig. \ref{fig:triangular_structure}, here we provide an intuitive example to show how to identify such causal direction by utilizing the relationship between cumulant and path.
Considering the cumulant $\mathcal{C}_{1,2}$ with different orders, we can observe different behaviors of cumulant in the causal direction and the reverse direction. Thanks to the reducibility in Theorem \ref{th: reduce}, e.g., $\kappa(A_1\circ \epsilon_1,\epsilon_1)=\kappa(A_1\circ \epsilon_1,\epsilon_1,\epsilon_1)$, the cumulants with different orders for $X_1$ and $X_2$ is shown in Fig. \ref{fig: cumulant decomposition}(a) and Fig. \ref{fig: cumulant decomposition reverse}(a). Interestingly, we have $\mathcal{C}_{2,1}^{(2)}=\mathcal{C}_{2,1}^{(3)}$ in the reverse direction (Fig. \ref{fig: cumulant decomposition reverse}(a)) but $\mathcal{C}_{1,2}^{(2)}\ne \mathcal{C}_{1,2}^{(3)}$ in the causal direction (Fig. \ref{fig: cumulant decomposition}(a)), i.e., there exists an asymmetry in the inequality relations of cumulants. Such asymmetry intriguing possibility to identify the causal order between two variables using the cumulant.

To understand how this asymmetry occurs and hence use it to identify the causal relations. We first discuss the identification in the simple scenario that the cause variable is a root vertex in $G$, and then we generalize such results into the scenario that the cause variable is not root.

\subsection{Identification When Cause Variable Is Root}

We start with the case that the cause variable is root vertex, in which our goal is to identify causal direction even though we do not know it is a root vertex. Recall the previous example, the key of identification is the inequality $\mathcal{C}_{1,2}^{(2)}\ne \mathcal{C}_{1,2}^{(3)}$ rendering an asymmetry for a causal pair. To understand how it occurs, we seek to character and leverage such inequality constraints of cumulants in a causal graph to infer the causal order (Theorem \ref{th:graphical criteria}).

Here, we begin with two basic observations, which illustrate that inequality constraints of cumulants are driven by the number of paths between two variables. As shown in  Fig. \ref{fig: cumulant decomposition}(a), one may see that (i) the decomposition of $\mathcal{C}_{1,2}$ is composed by a series of cumulants of the \textit{common noise} ($\epsilon_1$ in this example) between $X_1$ and $X_2$, which is due to the fact that any cumulant involving two (or more) independent random variables equals zero; (ii) moreover, such decomposition relates to the number of paths between $X_1$ and $X_2$ since $X_2=A_1\circ \epsilon_1+A_2\circ \epsilon_1+b_2\circ \epsilon_3+\epsilon_2$ and by multilinearity, the cumulant will be split exponentially as the order of cumulant increase. With these observations, the reason why $\mathcal{C}_{1,2}^{(2)}\ne \mathcal{C}_{1,2}^{(3)}$ is that there exists more than one path in the causal direction while zero path in the reverse direction, i.e., $|\mathbf{P}^{1\leadsto 2}|=2,|\mathbf{P}^{2\leadsto 1}|=0$. As a result, $\mathcal{C}_{2,1}^{(2)}= \mathcal{C}_{2,1}^{(k)}$ for all $k\geq 2$ order cumulant in the reverse direction.

In the following, we articulate the underlying law of the cumulant in PB-SCM and propose a closed-form solution to it. The first important observation is that due to the reducibility and the exchangeability of cumulant, the $\mathcal{C}_{1,2}^{(k)}$ for $k\geq 3$ is only composed by three distinct cumulants: $\kappa(\epsilon_1, A_1\circ \epsilon_1)$, $\kappa(\epsilon_1, A_2\circ \epsilon_1)$, and $\kappa(\epsilon_1, A_1\circ \epsilon_1, A_2\circ \epsilon_1)$ with varying number of these cumulants.
In particular, if we define the summation of cumulants that only contains one path as $\Lambda_1^{1\leadsto 2}(\epsilon_1\leadsto X_2)\coloneq\kappa(\epsilon_1,A_1\circ \epsilon_1)+\kappa(\epsilon_1,A_2\circ \epsilon_1)$ and the summation of cumulants that contains two paths as $\Lambda_2^{1\leadsto 2}(\epsilon_1\leadsto X_2)\coloneq\kappa(\epsilon_1,A_1\circ \epsilon_1,A_2\circ \epsilon_1)$, we will have the following closed-form solution:
\begin{equation}
\small
    \mathcal{C}_{1,2}^{(4)} =\Lambda _1^{1\leadsto 2} (\epsilon_1 \leadsto X_2 )+\!\!\!\sum\limits _{\overset{m_1 +m_2 =3}{m_1,m_2>0}}\binom{3}{m_1 \ m_2} \Lambda _2^{1\leadsto 2} (\epsilon_1 \leadsto X_2 )
\end{equation}
where $\binom{3}{m_1 \ m_2}$ is the multinomial coefficient, indicating the number of ways of placing $3$ distinct objects into $2$ distinct bins with $m_1$ objects in the first bin, $m_2$ objects in the second bin. As a result, we will eventually have $6 \times \Lambda _2^{1\leadsto 2} (\epsilon_1 \leadsto X_2 )$ as shown in Fig. \ref{fig: cumulant decomposition}(a).
Generally, we define $\Lambda_k^{i\leadsto j}(A\circ \epsilon _i \leadsto X_j )$ as the summation of cumulants that contain $k$ paths from root vertex $i$ to $j$:
\begin{definition}[$k$-path cumulants summation for root vertex]
Given two vertices $i$ and $j$, for $k\leqslant |\mathbf{P}^{i\leadsto j} |$, the $k$-path cumulants summation from vertex $i$ to $j$ is given by:
\begin{equation}\label{eq:lambda_root}
\begin{aligned}
 & \Lambda _k^{i\leadsto j} (A\circ \epsilon _i \leadsto X_j )\\
 & =\!\!\!\!\!\!\sum _{1 \leq l_1 < l_2<...<l_k\leq |\mathbf{P}^{i\leadsto j}|}\!\!\!\!\!\!\!\!
 \kappa (A\circ \epsilon _i ,A_{l_1}^{i\leadsto j} \circ \epsilon _i ,...,A_{l_k}^{i\leadsto j} \circ \epsilon _i ),
\end{aligned}
\end{equation}
where $l_1,\dots,l_k\in \mathbb{Z}^+$, $A$ is an arbitrary sequence of coefficients. For $k >|\mathbf{P}^{i\leadsto j} |$, $\Lambda^{i\leadsto j}_k\equiv 0$ and for $k=1$, $\Lambda^{i\leadsto i}_1 (A\circ \epsilon _i \leadsto X_i )= \kappa(A\circ \epsilon _i,\epsilon _i)$, and $k>1,\Lambda^{i\leadsto i}_k\equiv 0$.
\end{definition}

Intuitively, Eq. \ref{eq:lambda_root} is a summation of all cumulants that contain $k$ paths information from vertex $i$ to $j$ , and $\Lambda^{i\leadsto i}_1$ denotes the relation from the noise to itself. Based on the $k$-path cumulants summation, $\mathcal{C}^{(n)}_{i,j}$ can be decomposed as follows:

\begin{theorem}\label{th:lambda and cumulant root}
For any two vertices $i$ and $j$ where $i$ is root vertex, i.e., vertex $i$ has an empty parent set, the 2D slice of joint cumulant $\mathcal{C}_{i,j}^{(n)}$ satisfies:
\begin{equation}\label{eq:cumulant_lambda_root}
\begin{aligned}
\mathcal{C}_{i,j}^{(n)} & =\sum\nolimits _{k=1}^{n-1}\!\!\!\!\sum\limits _{\overset{m_1 +\cdots +m_k =n-1}{m_l  >0}}\!\!\binom{n-1}{m_1 \ m_2 \cdots m_k} \Lambda_k^{i \leadsto j}( 1\!\circ\! \epsilon_i \!\leadsto\! X_j).
\end{aligned}
\end{equation}
where $\binom{n-1}{m_1 \ m_2 \cdots m_k} =\frac{(n-1)!}{m_1 !m_2 !\cdots m_k !}$ is the multinomial coefficients.
\end{theorem}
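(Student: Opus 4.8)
The plan is to expand $\mathcal{C}_{i,j}^{(n)}=\kappa(X_i,\underbrace{X_j,\dots,X_j}_{n-1})$ by unrolling the generative equations and then to collect the surviving terms according to how many distinct paths they encode. Since $i$ is a root vertex, $Pa(i)=\emptyset$ and hence $X_i=\epsilon_i$, so the first argument is already the source noise $\epsilon_i$. First I would establish the path representation $X_j \stackrel{d}{=}\sum_{l}\sum_{P\in\mathbf{P}^{l\leadsto j}}A_P^{l\leadsto j}\circ\epsilon_l$, obtained by recursively substituting the PB-SCM equation into itself and using the distributivity $\alpha\circ(Y+Z)=\alpha\circ Y+\alpha\circ Z$ together with the composition rule for consecutive thinnings, where the thinnings attached to distinct source noises are mutually independent. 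Substituting this into each of the $n-1$ copies of $X_j$ and invoking multilinearity turns $\mathcal{C}_{i,j}^{(n)}$ into a sum over all ways of selecting one path-term $A_{t_s}\circ\epsilon_{l_s}$ for each copy $s=2,\dots,n$.

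Next I would prune this expansion using the independence-vanishing property of cumulants. Because the noises $\{\epsilon_l\}$ are mutually independent and thinning introduces only fresh, independent Bernoulli variables, two path-terms are dependent only if they share the same source noise; consequently any selected tuple whose source noises are not all equal splits into independent blocks and its cumulant vanishes. As the first slot is fixed to $\epsilon_i$, the only surviving tuples are those in which every copy selects a path $P_{t_s}\in\mathbf{P}^{i\leadsto j}$, leaving terms of the form $\kappa(\epsilon_i, A_{t_2}^{i\leadsto j}\circ\epsilon_i,\dots,A_{t_n}^{i\leadsto j}\circ\epsilon_i)$ with each $t_s\in\{1,\dots,|\mathbf{P}^{i\leadsto j}|\}$.

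Then I would collapse repeated paths. By Theorem~\ref{th: reduce} (reducibility) together with the exchangeability of cumulant, each such term depends only on the \emph{set} of distinct paths it uses: a tuple using exactly the $k$ distinct paths $l_1<\dots<l_k$ contributes $\kappa(\epsilon_i,A_{l_1}^{i\leadsto j}\circ\epsilon_i,\dots,A_{l_k}^{i\leadsto j}\circ\epsilon_i)$, which is precisely a summand of $\Lambda_k^{i\leadsto j}(1\circ\epsilon_i\leadsto X_j)$ in Eq.~\ref{eq:lambda_root}. It then remains to count, for a fixed choice of $k$ distinct paths, how many ordered $(n-1)$-tuples use each of them at least once. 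Assigning $m_l\ge1$ of the $n-1$ copies to path $l$ with $m_1+\dots+m_k=n-1$ yields $\binom{n-1}{m_1\ m_2\cdots m_k}$ tuples, and summing over all such compositions gives a count independent of \emph{which} $k$ paths were chosen. Factoring this count out and recognizing that summing the reduced cumulant over all $k$-subsets of paths reproduces $\Lambda_k^{i\leadsto j}$ yields exactly Eq.~\ref{eq:cumulant_lambda_root}, with $k$ running from $1$ to $n-1$ and the convention $\Lambda_k^{i\leadsto j}\equiv0$ for $k>|\mathbf{P}^{i\leadsto j}|$ absorbing the impossible selections.

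The step I expect to be the main obstacle is the justification underlying the second paragraph: one must argue that the path representation preserves \emph{joint} cumulants (not merely marginal distributions) despite consecutive thinnings being only equal in distribution, and that the block-independence characterization genuinely forces a common source $\epsilon_i$ among all surviving arguments. Once this is secured, the reducibility collapse and the multinomial counting are routine bookkeeping.
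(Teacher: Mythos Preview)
Your proposal is correct and follows essentially the same route as the paper's proof: expand $X_j$ along all paths, use multilinearity and the vanishing of cumulants with independent arguments to reduce to tuples sourced at $\epsilon_i$, then apply reducibility (Theorem~\ref{th: reduce}) and count via multinomial coefficients. The concern you flag about the path representation preserving joint cumulants is handled because the recursive substitution of the PB-SCM equations is an almost-sure identity (the Bernoullis are part of the underlying probability space), not merely a distributional one; the paper's proof glosses over this point as well.
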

Theorem \ref{th:lambda and cumulant root} plays an important role in the identification of the causal order as it introduces the connection between the joint cumulant and path information. Moreover, since every order of the 2D slice joint cumulant can be obtained by Eq. \ref{eq:2dcumulant}, and thus every order of $\Lambda_k$ can also be obtained by solving the equation in Eq. \ref{eq:cumulant_lambda_root}. 
By using $\Lambda_k$ we are able to understand the identifiability in the following theorem:
\begin{theorem}[Identifiability for root vertex]\label{th:Identifiability}
    For any vertex $i$ and $j$, where $i$ is the root vertex in graph $G$, if $\mathcal{C}_{i,j}^{(3)}-\mathcal{C}_{i,j}^{(2)}\ne 0$, then $\mathcal{C}_{j,i}^{(3)}-\mathcal{C}_{j,i}^{(2)}=0$ and $X_i$ is the ancestor of $X_j$.
\end{theorem}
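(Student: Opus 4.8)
The plan is to read off both cumulant differences from the closed form in Theorem~\ref{th:lambda and cumulant root} for the causal direction, to handle the reverse direction by a direct reducibility argument exploiting that a root has no parents, and to convert the hypothesis into a statement about the number of directed paths.

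First I would evaluate the causal direction by substituting $n=2$ and $n=3$ into Eq.~\ref{eq:cumulant_lambda_root}. For $n=2$ the only admissible composition of $n-1=1$ is $k=1$, $m_1=1$, giving $\mathcal{C}_{i,j}^{(2)}=\Lambda_1^{i\leadsto j}(1\circ\epsilon_i\leadsto X_j)$. For $n=3$ the compositions of $n-1=2$ are $(2)$ at $k=1$ and $(1,1)$ at $k=2$, contributing $\Lambda_1^{i\leadsto j}$ and $\binom{2}{1\ 1}\Lambda_2^{i\leadsto j}=2\Lambda_2^{i\leadsto j}$ respectively. Subtracting, the $\Lambda_1$ terms cancel and
\[
\mathcal{C}_{i,j}^{(3)}-\mathcal{C}_{i,j}^{(2)}=2\,\Lambda_2^{i\leadsto j}(1\circ\epsilon_i\leadsto X_j).
\]
Since $\Lambda_k^{i\leadsto j}\equiv 0$ whenever $k>|\mathbf{P}^{i\leadsto j}|$, the hypothesis $\mathcal{C}_{i,j}^{(3)}-\mathcal{C}_{i,j}^{(2)}\ne 0$ forces $\Lambda_2^{i\leadsto j}\ne 0$ and hence $|\mathbf{P}^{i\leadsto j}|\ge 2$. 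In particular there is at least one directed path $i\leadsto j$, so $X_i$ is an ancestor of $X_j$, which settles the second claim.

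For the reverse difference I cannot apply Theorem~\ref{th:lambda and cumulant root} directly, since $j$ need not be a root. Instead I would use the root assumption on $i$: an empty parent set gives $X_i=\epsilon_i$, so $\mathcal{C}_{j,i}^{(n)}=\kappa(X_j,\underbrace{\epsilon_i,\dots,\epsilon_i}_{n-1})$. Unrolling the structural equations writes $X_j$ as a sum of thinned noises $A_P\circ\epsilon_a$ ranging over ancestors $a$ and paths $P$ to $j$; by multilinearity and the vanishing of cumulants on independent arguments, only the terms thinning $\epsilon_i$ survive against the remaining $\epsilon_i$ slots, leaving $\sum_{l}\kappa(A_l^{i\leadsto j}\circ\epsilon_i,\epsilon_i,\dots,\epsilon_i)$. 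Now reducibility (Theorem~\ref{th: reduce}) collapses the $n-1$ identical bare copies $\epsilon_i=1\circ\epsilon_i$ to a single copy, so each summand equals $\kappa(A_l^{i\leadsto j}\circ\epsilon_i,\epsilon_i)$, which is independent of $n$. Hence $\mathcal{C}_{j,i}^{(n)}$ does not depend on $n\ge 2$, giving $\mathcal{C}_{j,i}^{(3)}-\mathcal{C}_{j,i}^{(2)}=0$ and proving the first claim.

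The delicate step, and the one I expect to be the main obstacle, is the noise decomposition of $X_j$ in the reverse-direction argument: I must justify that $X_j$ distributes over \emph{independent} thinnings so that its $\epsilon_i$-component is exactly $\sum_{P\in\mathbf{P}^{i\leadsto j}}A_P^{i\leadsto j}\circ\epsilon_i$ while all remaining components are independent of $\epsilon_i$. This rests on the distributivity of the binomial thinning operator over independent sums and the i.i.d.\ nature of the Bernoulli thinnings, which I would isolate as a preliminary lemma (or inherit from the path-analysis setup) before invoking multilinearity. A minor accompanying point is that the reducibility collapse requires each $A_l^{i\leadsto j}$ to differ from the trivial identity sequence; this holds automatically because $i\ne j$ forces every path $P_l^{i\leadsto j}$ to carry at least one edge.
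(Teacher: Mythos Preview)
Your proposal is correct and follows essentially the same route as the paper's proof: apply Theorem~\ref{th:lambda and cumulant root} at $n=2,3$ to extract $2\Lambda_2^{i\leadsto j}$ in the causal direction, and for the reverse direction use $X_i=\epsilon_i$, drop the $\epsilon_i$-independent pieces of $X_j$, and collapse the repeated $\epsilon_i$ slots via reducibility (Theorem~\ref{th: reduce}). The decomposition of $X_j$ into its $\epsilon_i$-component plus an independent remainder, which you flag as the delicate step, is likewise used without further comment in the paper's own argument.
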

Intuitively, based on Theorem \ref{th:lambda and cumulant root}, we have $\mathcal{C}_{i,j}^{(3)}-\mathcal{C}_{i,j}^{(2)} = \Lambda_2^{i\leadsto j}(1\circ\epsilon_i \!\!\leadsto\!\! X_j)$, and thus $\mathcal{C}_{i,j}^{(3)}-\mathcal{C}_{i,j}^{(2)}\ne 0$ indicates that there exists more than one path from $i$ to $j$ than the reverse direction. That is, the causal direction for root vertex is identifiable if there are at least two directed paths:
\begin{theorem}[Graphical Implication of Identifiability for Root Vertex]\label{th:graphical criteria}
    For a pair of vertices $i$ and $j$ in graph $G$, if vertex $i$ is a root vertex and exists at least two directed paths from $i$ to $j$, i.e., $|\mathbf{P}^{i\leadsto j}|\geq 2$, then the causal order between $i$ and $ j$ is identifiable. 
\end{theorem}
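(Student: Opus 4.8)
The plan is to reduce the graphical hypothesis $|\mathbf{P}^{i\leadsto j}|\geq 2$ to the algebraic nonvanishing condition $\mathcal{C}_{i,j}^{(3)}-\mathcal{C}_{i,j}^{(2)}\neq 0$ required by Theorem~\ref{th:Identifiability}, and then let that theorem supply the identification. In this way the statement becomes a corollary of the closed-form decomposition in Theorem~\ref{th:lambda and cumulant root} together with a positivity claim about the $2$-path cumulant summation.

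First I would expand the two low-order slices using the closed form in Theorem~\ref{th:lambda and cumulant root}. Taking $n=2$ gives $\mathcal{C}_{i,j}^{(2)}=\Lambda_1^{i\leadsto j}(1\circ\epsilon_i\leadsto X_j)$, since the only admissible partition is $k=1,\ m_1=1$. Taking $n=3$ leaves two partition types: $k=1$ with $m_1=2$, contributing $\Lambda_1^{i\leadsto j}$, and $k=2$ with $m_1=m_2=1$, contributing $\binom{2}{1\ 1}\Lambda_2^{i\leadsto j}$. Subtracting cancels the common $\Lambda_1^{i\leadsto j}$ term, so $\mathcal{C}_{i,j}^{(3)}-\mathcal{C}_{i,j}^{(2)}$ is a positive multiple of $\Lambda_2^{i\leadsto j}(1\circ\epsilon_i\leadsto X_j)$. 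Thus the graphical criterion reduces to showing $\Lambda_2^{i\leadsto j}\neq 0$.

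Next I would show that $\Lambda_2^{i\leadsto j}(1\circ\epsilon_i\leadsto X_j)\neq 0$ exactly under the hypothesis. By its definition, $\Lambda_2^{i\leadsto j}$ is the sum $\sum_{1\leq l_1<l_2\leq|\mathbf{P}^{i\leadsto j}|}\kappa(\epsilon_i,\,A_{l_1}^{i\leadsto j}\circ\epsilon_i,\,A_{l_2}^{i\leadsto j}\circ\epsilon_i)$, which is a nonempty sum precisely when there are at least two distinct directed paths, i.e. $|\mathbf{P}^{i\leadsto j}|\geq 2$. The key is then to argue that every summand is strictly positive and hence that the terms cannot cancel. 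Here I would use that $\epsilon_i\sim\text{Pois}(\mu_i)$ has all cumulants equal to $\mu_i>0$, together with the reducibility in Theorem~\ref{th: reduce} and multilinearity, to evaluate each joint cumulant of thinned copies of the shared source $\epsilon_i$ as $\mu_i$ times the product of the coefficients along the two path sequences; since every coefficient lies in $(0,1]$ and $\mu_i>0$, each term is strictly positive. This positivity (no-cancellation) step is where the main work lies: one must confirm that a thinning operator acting on a common Poisson source contributes exactly the product of its Bernoulli parameters to the joint cumulant, so that no summand vanishes and no two cancel.

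Finally, with $\mathcal{C}_{i,j}^{(3)}-\mathcal{C}_{i,j}^{(2)}\neq 0$ established, Theorem~\ref{th:Identifiability} immediately yields $\mathcal{C}_{j,i}^{(3)}-\mathcal{C}_{j,i}^{(2)}=0$ together with the conclusion that $X_i$ is an ancestor of $X_j$. This asymmetry---nonzero in one orientation and zero in the other---is computable from the joint distribution alone via Eq.~\ref{eq:2dcumulant}, so it pins down the causal order between $i$ and $j$, which is exactly the identifiability asserted. I expect the only delicate point to be the strict positivity of the summands of $\Lambda_2^{i\leadsto j}$; the remainder is bookkeeping with the multinomial expansion of Theorem~\ref{th:lambda and cumulant root} and a direct appeal to Theorem~\ref{th:Identifiability}.
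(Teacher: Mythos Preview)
Your proposal is correct and follows essentially the same route as the paper: reduce the graphical hypothesis to $\mathcal{C}_{i,j}^{(3)}-\mathcal{C}_{i,j}^{(2)}=2\Lambda_2^{i\leadsto j}\neq 0$ via Theorem~\ref{th:lambda and cumulant root}, then invoke Theorem~\ref{th:Identifiability}. The paper's own argument is framed as a contrapositive (assuming non-identifiability and deriving $|\mathbf{P}^{i\leadsto j}|<2$) and simply asserts that $\Lambda_2^{i\leadsto j}=0$ implies at most one path, whereas you supply the underlying positivity justification explicitly; your remark that each summand equals $\mu_i$ times a product of edge coefficients is exactly what the proof of Theorem~\ref{th: reduce} yields (with a slight caveat that when two paths share an initial segment the constant is not literally the product of both full path-coefficient sequences, but it is still strictly positive), so the no-cancellation step goes through.
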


\subsection{Identification When Cause Variable Is Not Root}
In this section, we aim to generalize the identification result from the root vertex to the non-root vertex.

When vertex $i$ is not root, the main difference is that there might exist more than one common noise between two variables due to the possible common ancestor. Therefore, one may extend the result from the root vertex by considering each noise term as the separated root vertex. We present a general version of $k$-path cumulants summation as follows, which can be expressed as the aggregation of the $k$-path cumulants summations for the root vertices.

\begin{definition}[$k$-path cumulants summation]
The $k$-path cumulants summation from vertex $i$ to vertex $j$ is given by:
\begin{equation}
    \begin{aligned}
  \tilde{\Lambda}_k( X_i \!\leadsto\! X_j) =&\Lambda _k^{i\leadsto j}( 1\!\circ\! \epsilon_i \!\leadsto\! X_j)\\
+ & \!\!\!\!\!\sum_{m\in An( i,j)\cup \{j\}}\!\!\sum _{h=1}^{|\mathbf{P}^{m\leadsto i} |} \Lambda _k^{m\leadsto j}\left(A_h^{m\leadsto i} \!\circ\! \epsilon_m \!\leadsto\! X_j\right).
\end{aligned}
\end{equation}
where $\Lambda_k$ is the $k$-path cumulants summation for root vertex, $|\mathbf{P}^{m\leadsto i}|$ is the number of directed paths from $m$ to $i$.
\end{definition}
With the general $k$-path cumulants summation, the general joint cumulant can be decomposed as follows:
\begin{theorem}\label{th:lambda and cumulant}
For any two vertices $i$ and $j$, the 2D slice of joint cumulant $\mathcal{C}_{i,j}^{(n)}$ satisfies:
\begin{equation}\label{eq:cumulant_lambda}
\small
\begin{aligned}
\mathcal{C}_{i,j}^{(n)} & =\sum\nolimits _{k=1}^{n-1}\!\!\!\!\sum\limits _{\overset{m_1 +\cdots +m_k =n-1}{m_l  >0}}\!\!\binom{n-1}{m_1 \ m_2 \cdots m_k} \tilde{\Lambda}_k( X_i \!\leadsto\! X_j),
\end{aligned}
\end{equation}
where $\binom{n-1}{m_1 \ m_2 \cdots m_k} =\frac{(n-1)!}{m_1 !m_2 !\cdots m_k !}$ is the multinomial coefficients.
\end{theorem}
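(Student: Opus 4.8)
The plan is to reduce Theorem~\ref{th:lambda and cumulant} to the already-established root case (Theorem~\ref{th:lambda and cumulant root}) by expanding only the first slot $X_i$ of the cumulant $\mathcal{C}_{i,j}^{(n)}=\kappa(X_i,X_j,\dots,X_j)$ over the noises that generate it, and then treating each such noise as a root. The conceptual starting point is a genealogical representation of a PB-SCM count: unrolling the structural recursion $X_i=\sum_{j\in Pa(i)}\alpha_{j,i}\circ X_j+\epsilon_i$ and bookkeeping the binomial thinning per source-individual and per edge, every individual counted in $X_i$ traces back through a unique lineage to a unique exogenous noise $\epsilon_m$. Partitioning the individuals of $X_i$ by this lineage yields the almost-sure decomposition $X_i=\sum_{m\in An(i)\cup\{i\}}\sum_{h=1}^{|\mathbf{P}^{m\leadsto i}|}A_h^{m\leadsto i}\circ\epsilon_m$, where the path-thinned variables are literally the process variables (so overlapping paths from a common source share the thinning on their common edges). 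I would state and justify this representation first, since everything downstream rests on it.

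Given the representation, I would apply multilinearity to the first slot alone: $\mathcal{C}_{i,j}^{(n)}=\sum_{m\in An(i)\cup\{i\}}\sum_{h}\kappa(A_h^{m\leadsto i}\circ\epsilon_m,X_j,\dots,X_j)$. Each summand is now a cumulant whose first argument depends only on the single noise $\epsilon_m$, which is exactly the situation covered by Theorem~\ref{th:lambda and cumulant root} with $m$ in the role of the root. (Theorem~\ref{th:lambda and cumulant root} is stated with first-slot coefficient $1$, but its proof uses only reducibility and multilinearity and never the value $1$, so it holds verbatim for the arbitrary sequence $A_h^{m\leadsto i}$, as already anticipated by the definition of $\Lambda_k$ for general $A$.) Applying it termwise gives $\kappa(A_h^{m\leadsto i}\circ\epsilon_m,X_j,\dots,X_j)=\sum_{k=1}^{n-1}\sum_{m_1+\dots+m_k=n-1}\binom{n-1}{m_1\,\cdots\,m_k}\Lambda_k^{m\leadsto j}(A_h^{m\leadsto i}\circ\epsilon_m\leadsto X_j)$; the non-$\epsilon_m$ part of each $X_j$ drops out inside this step because it is independent of the first argument, so only the paths $m\leadsto j$ contribute.

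Finally I would interchange the order of summation, pulling the $k$-sum and multinomial sum outside, and identify the inner sum with $\tilde\Lambda_k$. Separating the $m=i$ term, whose only lineage is the trivial path $P^{i\leadsto i}$ giving $A_1^{i\leadsto i}\circ\epsilon_i=1\circ\epsilon_i$, reproduces the first summand $\Lambda_k^{i\leadsto j}(1\circ\epsilon_i\leadsto X_j)$ of the definition of $\tilde\Lambda_k(X_i\leadsto X_j)$. For the remaining $m\in An(i)$, a term is nonzero only when there is also a path $m\leadsto j$, i.e. $m\in An(j)\cup\{j\}$; intersecting with $An(i)$ collapses the range to $An(i,j)\cup(\{j\}\cap An(i))$, and since adjoining $m=j$ when $j\notin An(i)$ only adds terms with $|\mathbf{P}^{j\leadsto i}|=0$ and hence an empty inner sum, the range may be written as $An(i,j)\cup\{j\}$ without changing the value. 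This matches the second summand of $\tilde\Lambda_k$ exactly, so the bracket equals $\tilde\Lambda_k(X_i\leadsto X_j)$ and Eq.~\ref{eq:cumulant_lambda} follows.

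I expect the genealogical decomposition of the first step to be the crux rather than the combinatorics. One must justify, as an almost-sure identity in a single probability space, that $X_i$ is the superposition over all sources $m$ and all paths $h$ of the lineage counts $A_h^{m\leadsto i}\circ\epsilon_m$, and verify that these are precisely the random variables entering the cumulants that define $\Lambda_k$, so that the joint law used by Theorem~\ref{th:lambda and cumulant root} is inherited consistently. Once this is established, multilinearity, one invocation of Theorem~\ref{th:lambda and cumulant root} per source, and the elimination of identically-zero terms from the index set complete the argument.
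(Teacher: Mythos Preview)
Your proposal is correct and follows essentially the same route as the paper: expand $X_i$ in the first slot via its noise decomposition $X_i=\epsilon_i+\sum_{m\in An(i)}\sum_h A_h^{m\leadsto i}\circ\epsilon_m$, apply multilinearity, drop the terms with $m\notin An(i,j)$ by independence, invoke the root-case Theorem~\ref{th:lambda and cumulant root} on each surviving summand (noting, as the paper also does, that the first-slot coefficient being $A_h^{m\leadsto i}$ rather than $1$ does not affect that argument), and then regroup to recognize $\tilde\Lambda_k$. Your extra care about the index set $An(i,j)\cup\{j\}$ and about the almost-sure nature of the path decomposition is welcome but not a departure from the paper's strategy.
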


To see the connection with the case of root vertex, we take $X_3\to X_2$ in Fig. \ref{fig:triangular_structure} as example. Since $X_3$ can be expressed as $X_3=b_1\circ \epsilon_1+\epsilon_3$, as shown in Fig. \ref{fig: cumulant decomposition}(b), we can separate the cumulant into two parts $\kappa (\epsilon_3,X_2)$, $\kappa (b_1\circ \epsilon_1,X_2)$, which can be considered as the cumulant starting from vertex $X_3$ to $X_2$ and $X_1$ to $X_2$, respectively. As a result, the general $k$-path cumulants summation can be expressed as the aggregate of all different $\Lambda_k$ starting with the corresponding noise terms. For instance, for $X_3\to X_2$ in Fig. \ref{fig:triangular_structure}, we have:
\begin{equation}\label{eq:example_lambda_3to2}
    \begin{aligned}
 & \tilde{\Lambda }_2( X_3 \leadsto X_2)\\
 & =\underbrace{\Lambda_2^{3\leadsto 2}( 1\circ \epsilon_3 \leadsto X_2)}_{=0} + \underbrace{\Lambda _2^{1\leadsto 2}( b_1 \circ \epsilon_1 \leadsto X_2)}_{=\kappa ( b_1 \circ \epsilon_1 ,A_1 \circ \epsilon_1 ,A_2 \circ \epsilon_1)}  \neq 0,
\end{aligned}
\end{equation}
where Eq. \ref{eq:example_lambda_3to2} contains two different terms starting from $\epsilon_3$ and $\epsilon_1$, respectively. In particular, since there only exists one directed path from $X_3$ to $X_2$, $\Lambda_2^{3\leadsto 2}$ is zero while $X_1$ to $X_2$ has two paths and thus $\Lambda_2^{1\leadsto 2}$ is not zero. Similarly, for the reverse direction, we have
\begin{equation}
    \begin{aligned}
  \tilde{\Lambda }_2( X_2 \!\!\leadsto\!\! X_3)&
  =\underbrace{\Lambda _2^{2\leadsto 3}( 1 \circ \epsilon_2 \!\!\leadsto\!\! X_3)}_{=0}+\underbrace{\Lambda _2^{3\leadsto 3}( b_2 \circ \epsilon_3 \!\!\leadsto\!\! X_3)}_{=0}\\
 & \!\!\!\!\!\!\!\!\!\!\!\!+ \underbrace{\Lambda _2^{1\leadsto 3}( A_1 \circ \epsilon_1 \!\!\leadsto\!\! X_3)}_{=0}+ \underbrace{\Lambda _2^{1\leadsto 3}( A_2 \circ \epsilon_1 \!\!\leadsto\!\! X_3)}_{=0}=0,
\end{aligned}
\end{equation}
where $\tilde{\Lambda}_2$ is zero since there are $0$ directed path from $X_2$ to $X_3$ and only $1$ directed path from $X_1$ or $\epsilon_3$ to $X_3$. 
Intuitively, the general $k$-path cumulants summation $\tilde{\Lambda}(X_i\leadsto X_j)$ captures the number of directed paths from the common ancestor to $j$. Moreover, for any two adjacency vertex $i \to j$ and their common ancestor $m$, the number of directed paths from $m$ to $j$ is greater or equal to that from $m$ to $i$, and thus, the causal order can be identified using the following strategy:

\begin{theorem}[Identification of PB-SCM]\label{thm:identification}
    If there exist $k\in \mathbb{Z}^+$ such that $\tilde{\Lambda}_k(X_i\!\leadsto\! X_j)\ne0$ and $\tilde{\Lambda}_k(X_j\!\leadsto\! X_i)=0$ for any two adjacency vertex $i$ and $j$, then $X_i$ is the parent of $X_j$.
\end{theorem}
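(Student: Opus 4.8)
The plan is to argue by contradiction on the graph structure. Since $i$ and $j$ are adjacent in the DAG, exactly one of $i\to j$ or $j\to i$ holds, so it suffices to show that the hypothesis rules out $j\to i$. For a vertex $v$ write $R(v)=\{v\}\cup An(v)$ for the set of vertices that reach $v$. Inspecting the definition of $\tilde{\Lambda}_k$, the vertices $m$ that index a structurally present summand of $\tilde{\Lambda}_k(X_i\leadsto X_j)$ are exactly those in $R(i)\cap R(j)$: the leading term $\Lambda_k^{i\leadsto j}(1\circ\epsilon_i\leadsto X_j)$ accounts for $m=i$ (present iff $i\leadsto j$), the common ancestors $An(i,j)$ reach both endpoints, and the $m=j$ term survives iff $j\leadsto i$. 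Moreover, such an $m$ contributes a nonzero $\Lambda_k^{m\leadsto j}$ iff $|\mathbf{P}^{m\leadsto j}|\ge k$, using the self-path convention $|\mathbf{P}^{j\leadsto j}|=1$.

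First I would establish the crucial \emph{non-cancellation} fact: every constituent cumulant in $\tilde{\Lambda}_k$ is strictly positive, so $\tilde{\Lambda}_k$ vanishes if and only if it has no structurally present summand. Each summand has the form $\kappa(A_0\circ\epsilon_m,A_1\circ\epsilon_m,\dots,A_k\circ\epsilon_m)$, a joint cumulant of thinnings of a single Poisson noise $\epsilon_m\sim\mathrm{Pois}(\mu_m)$. Writing $\beta_r\in(0,1]$ for the product of the coefficients in the sequence $A_r$, a short computation with the conditional generating function (in the spirit of the reducibility in Theorem \ref{th: reduce} and the fact that all Poisson cumulants equal $\mu_m$) gives $\kappa(A_0\circ\epsilon_m,\dots,A_k\circ\epsilon_m)=\mu_m\prod_{r=0}^{k}\beta_r>0$. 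Since every term is positive, no cancellation is possible, yielding the clean characterization
\begin{equation*}
\tilde{\Lambda}_k(X_i\leadsto X_j)\neq 0 \iff \max_{m\in R(i)\cap R(j)}|\mathbf{P}^{m\leadsto j}|\ge k,
\end{equation*}
where the index set $R(i)\cap R(j)$ is symmetric in $i$ and $j$.

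Next I would prove the \emph{path-monotonicity} forced by the edge. Assume, for contradiction, that $j\to i$. For any $m\in R(i)\cap R(j)$, appending the edge $j\to i$ to a directed path $P\in\mathbf{P}^{m\leadsto j}$ produces a directed path in $\mathbf{P}^{m\leadsto i}$; this map is injective and well defined because $i$ cannot lie on $P$ (otherwise $i\leadsto j$ together with $j\to i$ would form a cycle). Hence $|\mathbf{P}^{m\leadsto i}|\ge|\mathbf{P}^{m\leadsto j}|$ for every such $m$, and taking maxima over the common index set gives $\max_{m}|\mathbf{P}^{m\leadsto i}|\ge\max_{m}|\mathbf{P}^{m\leadsto j}|$.

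Finally I would combine the two steps. The hypothesis supplies a $k$ with $\tilde{\Lambda}_k(X_i\leadsto X_j)\neq 0$, so the characterization gives $\max_m|\mathbf{P}^{m\leadsto j}|\ge k$; monotonicity then yields $\max_m|\mathbf{P}^{m\leadsto i}|\ge k$, and the characterization applied in the reverse direction forces $\tilde{\Lambda}_k(X_j\leadsto X_i)\neq 0$, contradicting the assumption $\tilde{\Lambda}_k(X_j\leadsto X_i)=0$. Thus $j\to i$ is impossible and $X_i$ is the parent of $X_j$. I expect the main obstacle to be the first step, namely ruling out accidental cancellation among the cumulant summands: the positivity identity $\mu_m\prod_r\beta_r$ is precisely what makes the ``$\neq 0 \iff$ a path-term exists'' equivalence rigorous and lets me avoid any genericity or faithfulness assumption, so carefully verifying it (including degenerate cases where arguments coincide, which are controlled by Theorem \ref{th: reduce}) is the part that demands the most care.
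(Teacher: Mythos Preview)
Your argument is correct and follows the same contradiction-via-path-counting route as the paper, whose proof of the non-root case makes the identical inference (nonzero $\tilde{\Lambda}_k$ forces $\ge k$ paths from some common source to $X_j$, which under $j\to i$ would in turn force $\tilde{\Lambda}_k(X_j\leadsto X_i)\neq 0$) but leaves the positivity/non-cancellation step you spell out entirely implicit. One caveat on the step you flag as most delicate: the closed form $\kappa(A_0\circ\epsilon_m,\dots,A_k\circ\epsilon_m)=\mu_m\prod_{r}\beta_r$ holds only when the sequences $A_r$ share no common prefix (the paper's ``specific case''); when paths overlap, the cumulant equals $\mu_m$ times a product of edge coefficients taken along the tree of shared prefixes (this is exactly what the recursive computation in the paper's general proof of Theorem~\ref{th: reduce} yields), which is still strictly positive, so your non-cancellation conclusion and the rest of the argument go through unchanged.
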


In addition, the $k$-path cumulants summation $\tilde{\Lambda}_k(X_i\leadsto X_j)$ will be `dominated' by the variables (might be the common ancestor or $i$ itself) that has the most paths to $j$ since it is the aggregation of all the directed paths from both common ancestor and $i$. Therefore, for a non-root vertex, it is possible to be non-identifiable by Theorem \ref{th:Identifiability} if the dominant variable is the common ancestor. Specifically, we provide the graphical implication of such identifiability given as follows:
\begin{theorem}[Graphical Implication of Identifiability]\label{th:graphical criteria 2}
    For a pair of causal relationship $i \to j$. The causal order of $i,j$ is identifiable by Theorem \ref{thm:identification}, if (i) vertex $i$ is a root vertex and $|\mathbf{P}^{i\leadsto j}|\geq 2$; or (ii) there exists a common ancestor $ {k\in \underset{l}{\arg\max}\left\{|\mathbf{P}^{l\leadsto i} ||l\in An( i,j)\right\}}$ has a directed path from $k$ to $j$ without passing $i$ in $G$. 
\end{theorem}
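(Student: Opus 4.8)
The plan is to show that under either graphical condition the certificate of Theorem~\ref{thm:identification} fires in the true direction $i\to j$ but not in its reverse; since every $\tilde{\Lambda}_k$ is recoverable from the observable slices $\mathcal{C}^{(n)}$ through the triangular (hence invertible) system of Theorem~\ref{th:lambda and cumulant}, exhibiting such a $k$ certifies the order. Concretely, I would first reduce the whole statement to a purely combinatorial fact: for $k\ge 2$,
\begin{align*}
\tilde{\Lambda}_k(X_i\leadsto X_j)\ne 0 &\iff \exists\, m\in\{i\}\cup An(i,j):\ |\mathbf{P}^{m\leadsto j}|\ge k,\\
\tilde{\Lambda}_k(X_j\leadsto X_i)\ne 0 &\iff \exists\, m\in An(i,j):\ |\mathbf{P}^{m\leadsto i}|\ge k.
\end{align*}
The reverse identity uses that $\Lambda^{i\leadsto i}_k\equiv 0$ for $k>1$ (killing the source $m=i$) and that $|\mathbf{P}^{j\leadsto i}|=0$ in a DAG (killing the source $m=j$), so only common ancestors survive.

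The key lemma making these equivalences \emph{exact}, i.e. free of accidental cancellation, is a positivity statement for the constituent cumulants. Using the marked-Poisson view of a single source $\epsilon_m\sim\mathrm{Pois}(\mu_m)$ --- each immigrant survives each edge $e$ through an independent $\mathrm{Bern}(\alpha_e)$ that is shared across paths traversing $e$ --- together with the reducibility of Theorem~\ref{th: reduce} (which is exactly $\xi_e^2=\xi_e$), I would establish the closed form
\begin{equation*}
\kappa\bigl(A_0\circ\epsilon_m,\dots,A_k\circ\epsilon_m\bigr)=\mu_m\!\!\prod_{e\in A_0\cup\cdots\cup A_k}\!\!\alpha_e\;>\;0 .
\end{equation*}
Hence each $\Lambda^{m\leadsto j}_k(A\circ\epsilon_m\leadsto X_j)$ is a sum of strictly positive terms whenever its index set is nonempty (i.e. $k\le|\mathbf{P}^{m\leadsto j}|$) and is $0$ otherwise, and $\tilde{\Lambda}_k$, being a sum over sources $m$ of such nonnegative quantities, vanishes precisely when \emph{all} of them are empty. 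This yields the two equivalences above.

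With the equivalences in hand the two cases are short. For (i), $i$ being a root gives $An(i,j)=\emptyset$, so taking $k=2$ makes $\tilde{\Lambda}_2(X_j\leadsto X_i)=0$ while $|\mathbf{P}^{i\leadsto j}|\ge 2$ forces $\tilde{\Lambda}_2(X_i\leadsto X_j)\ne 0$; this recovers Theorems~\ref{th:Identifiability} and~\ref{th:graphical criteria}. For (ii), let $K=\max_{m\in An(i,j)}|\mathbf{P}^{m\leadsto i}|$, attained at the dominant common ancestor (the vertex called $k$ in the statement, which I denote $m^\star$), and take the order $k^{\dagger}=K+1$. No common ancestor has more than $K$ paths to $i$, so $\tilde{\Lambda}_{k^{\dagger}}(X_j\leadsto X_i)=0$. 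For the forward direction I would count paths $m^\star\leadsto j$: appending the edge $i\to j$ to each of the $K$ distinct paths $m^\star\leadsto i$ yields $K$ distinct directed paths $m^\star\leadsto j$ through $i$ (these remain simple because $j$, a descendant of $i$, cannot lie on an ancestral $m^\star\leadsto i$ path), and the bypass path guaranteed by (ii) supplies one more path $m^\star\leadsto j$ avoiding $i$; hence $|\mathbf{P}^{m^\star\leadsto j}|\ge K+1=k^{\dagger}$ and $\tilde{\Lambda}_{k^{\dagger}}(X_i\leadsto X_j)\ne 0$. In both cases Theorem~\ref{thm:identification} then certifies $i\to j$.

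I expect the main obstacle to be the positivity/closed-form step rather than the combinatorics: one must handle the interaction of paths sharing edges carefully (so that a shared edge is counted once, via $\xi_e^2=\xi_e$) and verify that the extra argument $A_h^{m\leadsto i}\circ\epsilon_m$ in the definition of $\tilde{\Lambda}_k$ enters the same formula, guaranteeing every term is strictly positive so the support characterization is tight. The remaining points are bookkeeping: confirming that the source $m=i$ contributes only at $k=1$ (through $\Lambda^{i\leadsto i}_k$) so it cannot spoil the reverse-direction vanishing, and that the path extensions in (ii) are genuinely distinct simple directed paths.
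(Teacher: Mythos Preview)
Your proposal is correct and follows essentially the same route as the paper: reduce identifiability to path counting via strict positivity of the constituent cumulants, then choose an order that separates the two directions (you take $K+1$; the paper takes $|\mathbf{P}^{k\leadsto j}|$, and both work for the same reason---you are in fact more explicit than the paper about why positivity rules out cancellation, which the paper's own proof of this theorem leaves implicit).

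One correction worth flagging: your closed form $\mu_m\prod_{e\in A_0\cup\cdots\cup A_k}\alpha_e$ over the \emph{union of DAG edges} is not quite right when paths share a non-prefix edge. An immigrant from $\epsilon_m$ that reaches a vertex $u$ along two distinct incoming routes is counted twice in $X_u$, and each copy receives an \emph{independent} Bernoulli at any outgoing edge $(u,v)$; so $\alpha_{u,v}$ enters with multiplicity, not once, and the idempotency $\xi_e^2=\xi_e$ applies only along shared \emph{prefixes}. The correct product runs over the edges of the tree representation of the paths, as in the paper's CGF derivation underlying Theorem~\ref{th: reduce}. This does not damage your argument: the tree formula is still a product of strictly positive factors, and positivity is all you actually use downstream.
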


One of the examples is given in Fig. \ref{fig:illustration identifiability}, in which Fig. \ref{before_add_edge} is not identifiable but Fig. \ref{after_add_edge} is identifiable. The reason is that $Z$ is the dominant common ancestor of $X,Y$, and all directed paths from $Z$ to $Y$ will pass $X$ making it unidentifiable based on Theorem \ref{th:graphical criteria 2}. In contrast, Fig. \ref{after_add_edge} includes an additional directed path $Z\to C \to Y$ without passing $X$ making $X\to Y$ identifiable. This intriguingly implies that a denser structure would facilitate the effectiveness of our method.

\begin{figure}[t]
	\centering
	\subfigure[Not identifiable.]{
	\includegraphics[width=0.16\textwidth]{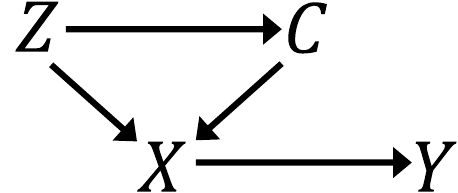}
	\label{before_add_edge}
}~~~~~~~~
	\subfigure[Identifiable.]{
		\includegraphics[width=0.16\textwidth]{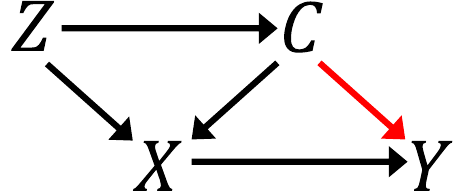}
		\label{after_add_edge}
	}
 \caption{Illustration of the identifiability of $X\to Y$.}
 \label{fig:illustration identifiability}
\end{figure}

Generally speaking, once the causal order is identified, one may identify the complete causal structure by orienting edges based on the causal order in the causal skeleton. Such implementation will be provided in the next section. By this, the identifiability of causal structure under PB-SCM is answered.

\section{Learning Casual Structure For PB-SCM}
In this section, we propose a causal structure learning algorithm for PB-SCM. Our method involves two steps: learning the skeleton of DAG $G$ and inferring the causal direction using the results developed in Theorem \ref{thm:identification}.

\subsubsection{Learning Causal Skeleton}To learn the causal skeleton, instead of using the constraint-based method, we propose a likelihood-based method. This boosts sample efficiency as the likelihood of PB-SCM captures its branching structure but the constraint-based method does not. 

Given a set of count data $\mathcal{D}$ and model parameters $\Theta =\left\{\mathbf{A} =[ \alpha _{i,j}] \in [ 0,1]^{|V|\times |V|} ,\boldsymbol{\mu} =[\mu_i] \in \mathbb{R}_{\geq 0}^{|V|}\right\}$, the log-likelihood is Markov respect to $G$, that is $\mathcal{L} (G,\Theta ;\mathcal{D} )=\sum _{j=1}^{|\mathcal{D}|} \sum _{i=1}^{|V|} \log P_{\Theta}\left( X_i =x_i^{( j)} |X_{Pa( i)} =x_{Pa( i)}^{( j)}\right)$.
However, calculating the likelihood directly using the probability mass function is costly. Therefore, we propose to calculate the probability mass function by using the probability-generating function (PGF). In detail, for each conditional distribution of $X_i$, the likelihood can be calculated as follows:

\begin{theorem}
    Let $G_{X_i |X_{Pa( i)}}\!( s)$ be the PGF of random variable $X_i$ given its parents variable $X_{Pa( i)}$, we have:
    \begin{equation}
    \label{eq: pgf}
    \small
    \begin{aligned}
     & P( X_i =k|X_{Pa( i)} =x_{Pa( i)})=\frac{1}{k!}\frac{\partial ^k G_{X_i |X_{Pa( i)}}( s)}{( \partial s)^k}\Bigl|_{s=0}\\
     & =\!\!\sum _{
        t_i +\!\!\sum\limits_{j\in Pa( i)}\!\! t_j =k
        }\!\! \frac{\mu _i^{t_i}\exp( -\mu _i)}{t_i !}\!\!\prod\limits _{j\in Pa(i)}\! \!\!\frac{( x_j)_{t_j}\alpha _{j,i}^{t_j} (1-\alpha _{j,i} )^{x_j -t_j}}{t_j !},
    \end{aligned}
    \end{equation}
where $t_j \leq x_j$, $\displaystyle ( x_j)_{t_j} \!\coloneq\!\frac{x_j !}{( x_j -t_j) !}$ is the falling factorial, $\mu _i \!=\!E[ \epsilon _i]$, and $\epsilon _i$ is the noise component of $X_i$.
\end{theorem}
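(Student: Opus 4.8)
The plan is to compute the conditional probability-generating function (PGF) in closed form, expand it as a power series in $s$, and read off the coefficient of $s^k$; by the standard PGF inversion $P(X_i=k\mid\cdot)=\frac{1}{k!}\,\partial_s^k G|_{s=0}$ (which is the first equality already displayed in the statement), this coefficient is exactly the desired conditional probability, so the whole task reduces to identifying $[s^k]G_{X_i|X_{Pa(i)}}(s)$.

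First I would fix the parent values and observe that, conditional on $X_{Pa(i)}=x_{Pa(i)}$, the defining relation $X_i=\sum_{j\in Pa(i)}\alpha_{j,i}\circ X_j+\epsilon_i$ writes $X_i$ as a sum of mutually independent summands. Each thinned term $\alpha_{j,i}\circ x_j=\sum_{n=1}^{x_j}\xi_n^{(\alpha_{j,i})}$ is a sum of $x_j$ i.i.d. $\text{Bern}(\alpha_{j,i})$ variables, hence $\alpha_{j,i}\circ x_j\sim\text{Binomial}(x_j,\alpha_{j,i})$, while $\epsilon_i\sim\text{Pois}(\mu_i)$ is independent of all of them. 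Since the PGF of a sum of independent variables factorizes, and using the binomial PGF $\mathbb{E}[s^{\text{Binomial}(x_j,\alpha_{j,i})}]=(1-\alpha_{j,i}+\alpha_{j,i}s)^{x_j}$ together with the Poisson PGF $\mathbb{E}[s^{\epsilon_i}]=\exp(\mu_i(s-1))$, I obtain
\begin{equation}
G_{X_i|X_{Pa(i)}}(s)=\exp\bigl(\mu_i(s-1)\bigr)\prod_{j\in Pa(i)}\bigl(1-\alpha_{j,i}+\alpha_{j,i}s\bigr)^{x_j}.
\end{equation}

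Next I would expand each factor in $s$: the Poisson factor gives $\exp(-\mu_i)\sum_{t_i\ge0}\frac{\mu_i^{t_i}}{t_i!}s^{t_i}$, and each binomial factor, by the binomial theorem, gives $\sum_{t_j=0}^{x_j}\binom{x_j}{t_j}\alpha_{j,i}^{t_j}(1-\alpha_{j,i})^{x_j-t_j}s^{t_j}$. Multiplying these finitely many series and collecting the coefficient of $s^k$ amounts to summing over all nonnegative tuples $(t_i,(t_j)_{j\in Pa(i)})$ with $t_i+\sum_{j\in Pa(i)}t_j=k$ and $t_j\le x_j$. Finally, rewriting $\binom{x_j}{t_j}=\frac{(x_j)_{t_j}}{t_j!}$ in terms of the falling factorial yields precisely the stated closed form, and $P(X_i=k\mid\cdot)=[s^k]G_{X_i|X_{Pa(i)}}(s)$ closes the argument.

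This result is essentially a routine PGF computation, so there is no deep obstacle; the one point genuinely deserving care is the factorization in the second step. I must verify that conditioning on the parent counts turns each random thinning into a genuine $\text{Binomial}(x_j,\alpha_{j,i})$ — which uses that the Bernoulli family $\{\xi_n\}$ attached to edge $j\to i$ is independent of $X_j$ and of the families on the other edges — and that all these thinning families together with $\epsilon_i$ are jointly independent given $X_{Pa(i)}$, which is what licenses the product form of the PGF. Once factorization is established, the remaining work is bookkeeping the index constraint $t_i+\sum_{j}t_j=k$ in the multi-index convolution of power series rather than any analytic difficulty.
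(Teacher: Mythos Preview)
Your proposal is correct and follows essentially the same route as the paper: both factor the conditional PGF into the Poisson piece $\exp(\mu_i(s-1))$ and the binomial pieces $(1-\alpha_{j,i}+\alpha_{j,i}s)^{x_j}$, then extract the coefficient of $s^k$. The only cosmetic difference is that the paper computes $\frac{1}{k!}\partial_s^k$ of the product via the generalized Leibniz rule for higher derivatives, whereas you expand each factor as a power series and take the Cauchy product; these are two standard, equivalent ways to obtain the same multi-index convolution.
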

The result of Eq. \ref{eq: pgf} can be converted to a polynomial coefficient after taking polynomial multiplication, which can be accelerated via Fast Fourier Transform (FFT) \cite{cormen2022introduction}. A detailed discussion is given in the supplement.

Generally, the likelihood-based method will tend to produce excessive redundant causal edges. Such effect can be alleviated by introducing the Bayesian Information Criterion (BIC) penalty $d\log( m)/2$ into the $\mathcal{L} (G,\Theta;\mathcal{D} )$, where $d$ is the number of edge of $G$ and $m$ is the size of dataset $\mathcal{D}$. The penalized objective function is updated as follows:
\begin{equation}\label{eq:likelihood}
\mathcal{L}_p (G,\Theta ; \mathcal{D} )=\mathcal{L} (G,\Theta ;\mathcal{D} )-{d\log( m)}/{2}
\end{equation}
We maximum the objective function $\mathcal{L}_p (G,\Theta;\mathcal{D} )$ by using a Hill-Climbing-based algorithm as shown in Lines 2-6 of Algorithm 1. It mainly consists of two phases. 
First, we perform a structure searching scheme by taking one step adding, deleting, and reversing the graph $G^{*}$ in the last iteration, i.e., in Line 4, $\mathcal{V}\left( G^{*}\right)$ represents a collection of the one-step modified graph of $G^{*}$. Second, by fixing the graph $G'$, we estimate the parameter $\Theta '$ of the model via optimizer with initial values from approximated covariance estimates and then calculate the $\mathcal{L}'_p (G',\Theta ';\mathcal{D} )$ in Lines 5. Iterating the two steps above until the likelihood no longer increases. In the end, we transform $G^{*}$ into a skeleton (Line 6). The correctness of such a procedure can be guaranteed by the consistent property of BIC which is discussed in \cite{chickering2002optimal}.

\subsubsection{Learning Causal Direction}
Given the learned skeleton, we orient each undirected edge using the $k$-path cumulants summation, according to Theorem \ref{thm:identification}. In detail, for each undirected edge $( i,j) \in E$, we calculate $\tilde{\Lambda} _k( X_i \leadsto X_j)$ and $\tilde{\Lambda}_k( X_j \leadsto X_i)$ for $k=1,\dots,K$ until one of them being zero or $k$ reaches the upper limit $K$. We then orient the direction based on Theorem \ref{thm:identification} (Lines 11-14). 

To assess whether $\tilde{\Lambda }_k$ is equal to 0, a bootstrap hypothesis test is conducted \cite{efron1994introduction} while a threshold can be used for orientation once such testing fails. In detail, we calculate the statistic $\tilde{\Lambda }^{+}_k$ from $N$ resampling dataset $\mathcal{D}^{+}\in \{\mathcal{D}_i^{+}|\mathcal{D}_{i=1,..,N}^{+}\subset\mathcal{D},\}$. Then, we estimate the distribution $P(\tilde{\Lambda }^{+}_k)$ by kernel density estimator and centralize it to mean zero. Finally, the p-value of $\tilde{\Lambda }_k$ from the original dataset can be obtained.

\noindent\textbf{Complexity Analysis} We provide the complexity of calculating likelihood in the worst cases---when graph is complete. Specifically, the complexity of Eq. \ref{eq:likelihood} is $
    \mathcal{O}(\sum _{j=1}^m\sum _{i=1}^{|V|}\frac{\left( |V|+x_i^{( j)} -i\right) !}{( |V|-i) !x_i^{( j)} !})$, by using FFT acceleration, this complexity can be reduced to ${\mathcal{O}(\sum _{j=1}^m\sum _{i=1}^{|V|}( |V|-i+1)^2 x_i^{( j)}\log( |V|-i+1)^2 x_i^{( j)})}$, where $\displaystyle m$ is the sample size.

\begin{algorithm}[t]
    \caption{Causal Discovery for PB-SCM}
    \KwIn{{Data set $\mathcal{D}$}, Max order $K$}
    \KwOut{Learning Causal Graph $G$}
    
     $G' \leftarrow empty\ graph, \mathcal{L}_p^* \leftarrow -\infty;$\\
    \tcp{Learning Causal Skeleton} 
    \While{$\mathcal{L}_p^*(G^*, \Theta ^*; \mathcal{D})<\mathcal{L}_p'(G', \Theta '; \mathcal{D})$}{
        $G^* \leftarrow G'$ with largest $\mathcal{L}_p'(G', \Theta '; \mathcal{D})$\\
        \For{every $G'\in \mathcal{V}(G^*)$}{
            Estimate $\Theta '$ and record score $\mathcal{L}_p'(G', \Theta '; \mathcal{D})$\\
        }        
    }
    $G\gets$ Transfer $G^*$ to a skeleton\\
    \tcp{Learning Causal Direction}
    \For{each pair $X_i - X_j\in G$}{
    \For{$k\leftarrow 1:K$}{
        Obtain $\tilde{\Lambda} _k$ at each side by solving Eq. \ref{eq:cumulant_lambda}\\
        Test whether $\tilde{\Lambda} _k$ equal to $0$ for each side\\
        \uIf{$\tilde{\Lambda} _k(X_i\!\!\leadsto\!\! X_j)\neq0 \land \tilde{\Lambda}_k(X_j\!\!\leadsto\!\! X_i)=0 $}{
            Orient $``X_i\rightarrow X_j"$ in $G$
        }
        \If{$\tilde{\Lambda} _k(X_i\!\!\leadsto\!\! X_j)=0 \land \tilde{\Lambda}_k(X_j\!\!\leadsto\!\! X_i)\neq0 $}{
            Orient $``X_i\leftarrow X_j"$ in $G$
        }
        }
    }
    \textbf{Return} $G$
\end{algorithm}

\section{Experiment}
\begin{figure*}[t]
	\centering
	\subfigure[Sensitivity to Avg. Indegree Rate]{
	\includegraphics[width=0.31\textwidth]{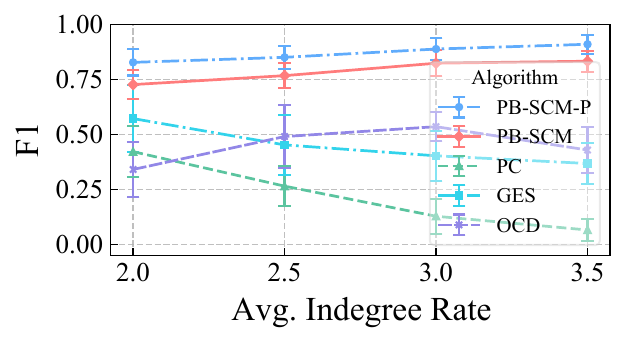}
	\label{fig:sensitivity: Indegree}
}
	\subfigure[Sensitivity to Number of vertices]{
		\includegraphics[width=0.31\textwidth]{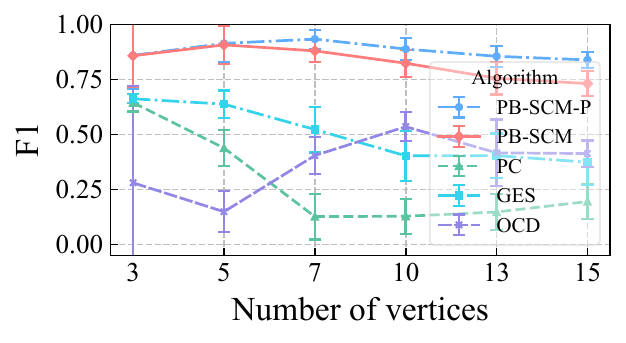}
		\label{fig:sensitivity: Node}
	}
	\subfigure[Sensitivity to Sample Size]{
	\includegraphics[width=0.32\textwidth]{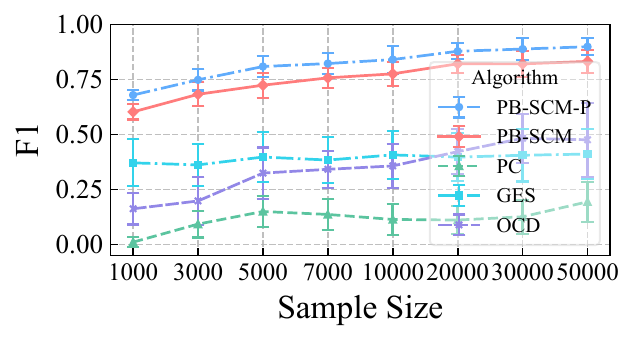}
	\label{fig:sensitivity: sample}
}
	\caption{F1 in the Sensitivity Experiments}	
	\label{fig:sensitivity}
\end{figure*}

\subsection{Synthetic Experiments}

In this section, we test the proposed PB-SCM on synthetic data. We design control experiments using synthetic data to test the sensitivity of sample size, number of vertices, and different indegree rate. The baseline methods include OCD \cite{ni2022ordinal}, PC \cite{spirtes2000causation}, GES \cite{chickering2002optimal}. We further provide the results using the true skeleton as prior knowledge (PB-SCM-P) to demonstrate the effectiveness of learning causal direction.

In the sensitivity experiment, we synthesize data with fixed parameters while traversing the target parameter as shown in Fig. \ref{fig:sensitivity}. The default settings are as follows, sample size=30000, number of vertices=10, indegree rate=3.0, range of causal coefficient $\alpha_{i,j}\in [ 0.1,0.5]$, range of the mean of Poisson noise $\mu_i \in [ 1.0,3.0]$, the max order of cumulant $K=4$. Each simulation is repeated 30 times.

As shown in Fig.\ref{fig:sensitivity}, we conduct three different control experiments for PB-SCM. Overall, our method outperforms all the baseline methods in all three control experiments.

In the control experiments of the indegree rate given in Fig. \ref{fig:sensitivity: Indegree}, as the indegree rate controls the sparse of causal structure, the higher the indegree rate, the less sparse in causal structure leading to a decrease of performance of the baseline methods. 
In contrast, PB-SCM keeps giving the best results in all indegree rates. The reason is that our method benefits from the sparsity of the graph and the denser structure would result in more causal order being identified which verified the theoretical result in our work.

In the control experiments of the number of vertices given in Fig.\ref{fig:sensitivity: Node}. Our method outperforms all the baseline methods, showing a slight decrease as the number of nodes increases, yet still demonstrating reasonable performance. The reason might be that with an increasing number of vertices, the number of paths for both directions also increases, which requires a higher-order cumulant to obtain the asymmetry. However, estimating high-order cumulant is difficult and has a large variance which leads to a decrease in performance.

In the control experiments of sample size shown in Fig.\ref{fig:sensitivity: sample}, as the sample size increases, our method's performance continues to improve and outperforms all the baseline methods. This suggests a sufficient sample size is beneficial for estimating accurate cumulant. 

\subsection{Real World Experiments}
We also test the proposed PB-SCM on a real-world football events dataset\footnote{https://www.kaggle.com/datasets/secareanualin/football-events}, which contains 941,009 events from 9,074 football games across Europe. For this experiment, we focus on the causal relation in the following count of events: Foul, Yellow card, Second yellow card (abbreviated as 2nd Y. card), Red card, and Substitution. These events possess clear causal relationships according to the rules of the football game. Our goal is to identify the causal relationship from the observed count data while reasoning the possible number of paths between two events as a byproduct of our method.

In detail, we employ the bootstrap hypothesis test with 0.05 significance level to test whether $\tilde{\Lambda}_k$ is equal to zero. The result is shown in Table \ref{exp: Real-World}. The column of $X\!\rightarrow\! Y$ shows the highest order of cumulants summation $\tilde{\Lambda}_k(X\!\leadsto\! Y)$ that is not equal to zero while the column of $Y\!\rightarrow\! X$ shows the lowest order of cumulants summation that equals zero.

\renewcommand{\arraystretch}{1.3}

\begin{table}[htp]
\centering
\scalebox{0.95}{
\begin{tabular}{l|l|lc}
\toprule
Cause $(X)$                   & Effect $(Y)$      & \multicolumn{1}{l|}{$X\rightarrow Y$}            & \multicolumn{1}{l}{$Y\rightarrow X$} \\ 
\midrule
\multirow{3}{*}{Foul}        & Yellow card     & \multicolumn{1}{l|}{$\tilde{\Lambda}_{k=2}\neq 0$}   & $\tilde{\Lambda}_{k=2}=0$              \\ \cline{2-4} 
                             & 2nd Y. card & \multicolumn{1}{l|}{$\tilde{\Lambda}_{k=3}\neq 0$} & $\tilde{\Lambda}_{k=2}=0$              \\ \cline{2-4} 
                             & Red card        & \multicolumn{1}{l|}{$\tilde{\Lambda}_{k=1}= 0$} & $\tilde{\Lambda}_{k=1}=0$           \\ \hline
\multirow{2}{*}{Yellow card} & 2nd Y. card & \multicolumn{1}{l|}{$\tilde{\Lambda}_{k=3}\neq 0$} & $\tilde{\Lambda}_{k=2}=0$              \\ \cline{2-4} 
                             & Substitution    & \multicolumn{1}{l|}{$\tilde{\Lambda}_{k=2}\neq 0$} & $\tilde{\Lambda}_{k=2}=0$              \\ \hline
2nd Y. card              & Red card        & \multicolumn{1}{l|}{$\tilde{\Lambda}_{k=2}\neq 0$}   & $\tilde{\Lambda}_{k=2}=0$              \\ 
\bottomrule
\end{tabular}
}\caption{The result of real-world dataset experiment.}
\label{exp: Real-World}

\end{table}

\begin{figure}[t]
	\centering
	\subfigure[Ground Truth]{
		\includegraphics[width=0.18\textwidth]{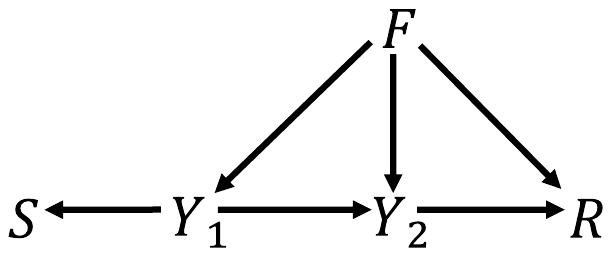}
		\label{real_data_gt}
	}
 	\subfigure[Result]{
	\includegraphics[width=0.18\textwidth]{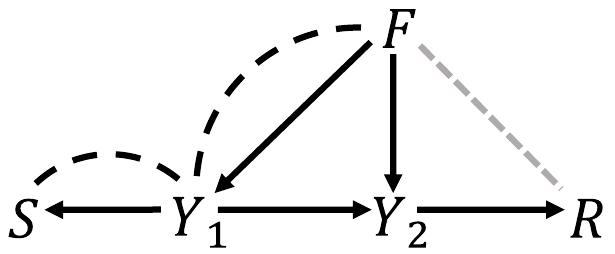}
	\label{real_data_res}
}

 \caption{Football Dataset Result ($F$:Foul, $Y_1$: Yellow card, $Y_2$: Second yellow card, $R$: Red card, $S$: Substitution).}
\end{figure}

The results are given in Fig. \ref{real_data_res}. 
Generally, PB-SCM successfully identifies five cause-effect pairs, except for $\text{Foul}\rightarrow \text{Red card}$. The possible reason might be attributed to the weak causal influence since only a few serious fouls will result in a red card. Interestingly, We find $\tilde{\Lambda} _2\left(\text{Foul}\rightarrow \text{Yellow card}\right) \neq 0$ , indicating two paths from $F$ or its ancestor to Yellow card. This suggests a hidden confounder between Foul and Yellow card, possibly related to the football team's style which also coincides with other path findings.
Moreover, the causal direction between Yellow card and Substitution is identified suggesting a hidden confounder or indirect relation exists. This result suggests the effectiveness of our method when dealing with complex real-world scenarios.

\section{Conclusion}
In this work, we study the identification of the Poisson branching structural causal model using high-order cumulant. 
We establish a link between cumulants and paths in the causal graph under PB-SCM, showing that cumulants encompass information about the number of paths between two vertices, which is retrievable. By leveraging this link, we propose the identifiability of the causal order of PB-SCM and its graphical implication. With the identifiability result, we propose a causal structure learning algorithm for PB-SCM consisting of learning causal skeleton and learning causal direction. Our theoretical results and the practical algorithm will hopefully further inspire a series of future methods to deal with count data and move the research of causal discovery further toward achieving real-world impacts in different respects.

\section{Acknowledgments}
This research was supported in part by National Key R\&D Program of China (2021ZD0111501), National Science Fund for Excellent Young Scholars (62122022), Natural Science Foundation of China (61876043, 61976052), the major key project of PCL (PCL2021A12). ZM's research was supported by the China Scholarship Council (CSC).

\bibliography{aaai24}

\newpage
\onecolumn
\setcounter{table}{0}

\appendix
\clearpage
\newtheorem{innercustomthm}{Theorem}
\newenvironment{customthm}[1]
  {\renewcommand\theinnercustomthm{#1}\innercustomthm}
  {\endinnercustomthm}

\newtheorem{innercustomcor}{Corollary}
\newenvironment{customcor}[1]
  {\renewcommand\theinnercustomcor{#1}\innercustomcor}
  {\endinnercustomcor}

  \newtheorem{innercustomlem}{Lemma}
\newenvironment{customlem}[1]
  {\renewcommand\theinnercustomlem{#1}\innercustomlem}
  {\endinnercustomlem}

  \newtheorem{innercustomprop}{Proposition}
\newenvironment{customprop}[1]
  {\renewcommand\theinnercustomprop{#1}\innercustomprop}
  {\endinnercustomprop}

  \newtheorem{innercustomremark}{Remark}
\newenvironment{customremark}[1]
  {\renewcommand\theinnercustomremark{#1}\innercustomremark}
  {\endinnercustomremark}

\numberwithin{equation}{section}
\setcounter{figure}{0}
\setcounter{remark}{0}
\setcounter{theorem}{0}
\setcounter{section}{0}
\setcounter{secnumdepth}{1}

\section*{Supplementary Material of  ``Causal Discovery from Poisson Branching Structural Causal Model Using High-Order Cumulant with Path Analysis”}

\section{Proof of Theorem 1}
\begin{theorem}[Reducibility]
\label{th: reduce}
Given a Poisson random variable $\epsilon $ and $n$ distinct sequences of coefficients $A_1 ,...,A_n$, we have
\begin{equation}
\begin{aligned}
 \kappa (\underbrace{A_1 \circ \epsilon ,...,A_1 \circ \epsilon }_{k_1 \text{times}} ,...,\underbrace{A_n \circ \epsilon ,...,A_n \circ \epsilon }_{k_n \text{times}} )
 =\kappa ( A_1 \circ \epsilon ,\dotsc , A_n \circ \epsilon )
\end{aligned}
\end{equation}
where each $A_i \circ \epsilon $ repeats $k_i$ times in the original cumulant and only contains one time in the reduced cumulant.
\end{theorem}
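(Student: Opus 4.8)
The plan is to work with the joint cumulant generating function (CGF) of the collection $(A_1 \circ \epsilon, \ldots, A_n \circ \epsilon)$ and to exploit its factorized form. First I would reduce each consecutive thinning to a single thinning: since the composition of independent thinnings satisfies $\alpha \circ (\beta \circ X) \stackrel{d}{=} (\alpha\beta) \circ X$, conditioning on $\epsilon = m$ gives $A_i \circ \epsilon \mid \epsilon = m \sim \mathrm{Binom}(m, a_i)$, where $a_i := \prod A_i$ is the product of the coefficients along $A_i$. Because each thinning $A_i \circ \epsilon$ is carried out with its own independent Bernoulli family, these binomials are conditionally independent given $\epsilon$.

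Next I would assemble the joint probability generating function by conditioning on $\epsilon$ and then averaging over $\epsilon \sim \mathrm{Pois}(\mu)$. Conditional independence gives $\mathbb{E}[\prod_i s_i^{A_i \circ \epsilon} \mid \epsilon = m] = \prod_i (1 - a_i + a_i s_i)^m$, and the Poisson identity $\sum_m e^{-\mu}\mu^m z^m / m! = e^{\mu(z-1)}$ collapses this to $G(\mathbf{s}) = \exp\!\big(\mu(\prod_i(1 - a_i + a_i s_i) - 1)\big)$. Substituting $s_i = e^{t_i}$ yields the CGF $K(\mathbf{t}) = \mu\big(\prod_i f_i(t_i) - 1\big)$ with $f_i(t_i) = 1 - a_i + a_i e^{t_i}$, a product of univariate factors plus a constant.

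The crucial step is to read off the joint cumulant as a mixed partial derivative of $K$ at $\mathbf{t} = 0$. The left-hand cumulant equals $\partial_{t_1}^{k_1} \cdots \partial_{t_n}^{k_n} K \big|_{0}$ with all multiplicities $k_i \geq 1$. Since each $f_i$ depends only on $t_i$ and the constant $-\mu$ is annihilated by any derivative, this factorizes as $\mu \prod_i (\partial_{t_i}^{k_i} f_i)(0)$. Here is where the Poisson structure generalizes: $\partial_{t_i}^{k} f_i \big|_{0} = a_i$ for \emph{every} $k \geq 1$, so the value $\mu \prod_i a_i$ is completely insensitive to the multiplicities $k_i$. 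Taking instead all $k_i = 1$ produces the same expression for the reduced cumulant, which establishes the claimed equality.

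The main obstacle I anticipate is justifying the factorized PGF rigorously, namely arguing that the consecutive thinnings compose into a single binomial thinning and that the thinnings along distinct coefficient sequences are conditionally independent given the shared Poisson seed $\epsilon$; it is precisely this independence that makes the conditional PGF factor across $i$. Once $K(\mathbf{t})$ is in hand the remaining derivative computation is routine, the essential mechanism being that $e^{t_i}$ has all of its derivatives equal to $1$ at the origin, which is the exact analogue of all cumulants of a Poisson law coinciding.
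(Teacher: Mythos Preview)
Your argument coincides with what the paper calls the ``specific case'' and it is correct under an extra hypothesis you are silently assuming: that the consecutive thinnings $A_1\circ\epsilon,\dots,A_n\circ\epsilon$ are conditionally independent given $\epsilon$. This is exactly the point that fails in general. In the PB-SCM, the sequence $A_i$ records a directed path in the graph, and two distinct paths may share an initial segment. Concretely, if $A_1=(a,b)$ and $A_2=(a,c)$, then $A_1\circ\epsilon=b\circ(a\circ\epsilon)$ and $A_2\circ\epsilon=c\circ(a\circ\epsilon)$ use the \emph{same} realization of $a\circ\epsilon$ (it is the value at a single intermediate vertex), not two independent thinnings with parameter $a$. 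Hence $A_1\circ\epsilon$ and $A_2\circ\epsilon$ are \emph{not} conditionally independent given $\epsilon$, your conditional PGF does not factor as $\prod_i(1-a_i+a_is_i)^m$, and the clean formula $K(\mathbf t)=\mu\bigl(\prod_i f_i(t_i)-1\bigr)$ is not available. Your reduction ``$\alpha\circ(\beta\circ X)\stackrel{d}{=}(\alpha\beta)\circ X$'' is only a marginal statement; it does not control the joint law of several thinnings that share intermediate randomness.

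The paper handles this by organizing the sequences $A_1,\dots,A_n$ into a rooted tree $T_{\mathbf R}$ whose internal nodes are the shared prefixes, and then conditioning hierarchically, layer by layer, so that at each step the children of a node become conditionally independent given that node. This yields a recursive expression for the MGF (Lemma~3 in the supplement), and the key conclusion is that after taking one partial derivative in every $t_i$ the mixed derivative of $K_{\mathbf R}$ still has the pure exponential form $\beta\,e^{t_1+\cdots+t_n}$, which is invariant under further differentiation in any $t_i$. Your final mechanism (derivatives of $e^{t_i}$ equal $1$ at the origin) is the right intuition, but to reach a CGF on which it applies you need the tree-based conditioning; simply conditioning on $\epsilon$ is not enough.
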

\subsection{Outline of Proof}
First of all, we introduce the moment-generating function (MGF) and cumulant-generating function (CGF).

\begin{definition}[Moment-generating function]
     For $X =[ X_1 ,...,X_n]^T$, an $n$-dimensional random vector, the moment-generating function of $X$ is given by:
    \begin{equation}\begin{aligned}
    M_X(\mathbf{t}) & :=E\left[ e^{\mathbf{t}^T X}\right] =E\left[ e^{t_1 X_1 +t_2 X_2 +\cdots +t_n X_n}\right]
    \end{aligned}\end{equation}
    where $\mathbf{t} =[ t_1 ,...,t_n]$ is a fixed vector.
\end{definition}

\begin{definition}[Cumulant-generating function]
    For $X =[ X_1 ,...,X_n]^T$, an $n$-dimensional random vector, the cumulant-generating function of $X$ is given by:
    \begin{equation}\begin{aligned}
    K_X(\mathbf{t}) & =\ln M_X(\mathbf{t})
    \end{aligned}\end{equation}
    where $M_X(\mathbf{t})$ is the moment-generating function of $X$.
\end{definition}

With CGF, we can calculate the joint cumulant of a given random vector $X$ by taking deviate of CGF:
\begin{equation}\begin{aligned}
\kappa ( X_1 ,X_2 ,...,X_n) & =\frac{\partial ^n K_X( t_1 ,t_2 ...,t_n)}{\partial t_1 \partial t_2 \cdots \partial t_n}\Bigl|_{t_1 =0,...,t_n =0}
\end{aligned}\end{equation}
Furthermore, if each $X_i$ in the random vector $X$ repeated $k_i$ times, then we only need to take $k_i$ times of derivatives of the CGF with respect to the corresponding $t_i$, i.e., 
\begin{equation}\begin{aligned}
\kappa (\underbrace{X_1 ,...,X_1}_{k_1\text{ times}} ,\underbrace{X_2 ,...,X_2}_{k_2\text{ times}} ,...,\underbrace{X_n ,...,X_n}_{k_n\text{ times}}) & =\frac{\partial ^{k_1 +k_2 +\cdots +k_n} K_X( t_1 ,t_2 ...,t_n)}{\partial t_1^{k_1} \partial t_2^{k_2} \cdots \partial t_n^{k_n}}\Bigl|_{t_1 =0,...,t_n =0}
\end{aligned}\end{equation}
Therefore, Theorem 1 is equivalent to show the following equality hold:
\begin{equation}\begin{aligned}\label{eq:cum_equality}
\frac{\partial ^n K_X( t_1 ,t_2 ...,t_n)}{\partial t_1 \partial t_2 \cdots \partial t_n}\Bigl|_{t_1 =0,...,t_n =0} & =\frac{\partial ^{k_1 +k_2 +\cdots +k_n} K_X( t_1 ,t_2 ...,t_n)}{\partial t_1^{k_1} \partial t_2^{k_2} \cdots \partial t_n^{k_n}}\Bigl|_{t_1 =0,...,t_n =0}.
\end{aligned}\end{equation}
To do show, we will prove that the $\frac{\partial ^n K_X( t_1 ,t_2 ...,t_n)}{\partial t_1 \partial t_2 \cdots \partial t_n}$ has the form of exponential function, i.e. 
\begin{equation}\begin{aligned}
\frac{\partial ^n K_X( t_1 ,t_2 ...,t_n)}{\partial t_1 \partial t_2 \cdots \partial t_n} & =\beta e^{t_1 +t_2 +\cdots t_n} ,
\end{aligned}\end{equation}
which is a function that remains unchanged when taking derivatives with respect to any $t_i$ and thus the Eq. \ref{eq:cum_equality} holds.

Following this outline, consider a random vector ${ \mathbf{R} =[ A_1 \circ \epsilon ,A_2 \circ \epsilon ,\dotsc ,A_n \circ \epsilon ]^T}$, $ A_i \neq A_j$, where ${ \epsilon }$ represents the Poisson noise component of a vertex ${ X}$ in graph ${ G}$, and ${ A_i}$ is a sequence of path coefficients corresponding to a direct path from ${ X}$ to one of its descendant vertices. Then according to the definition of MGF, we have:
\begin{equation}\begin{aligned}
M_{\mathbf{R}}(\mathbf{t}) & =E\left[ e^{\mathbf{t}^T\mathbf{R}}\right] =E\left[ e^{t_1 \times { A_1 \circ \epsilon +\cdots +} t_n \times { A_n \circ \epsilon }}\right]
\end{aligned}\end{equation}
where $ \mathbf{t} =[ t_1 ,t_2 ,...,t_n]^T$ is a fixed vector.

Following the outline, we first provide an intuition of proof through a specific case that each vertex are conditional independce by the root vertex $\epsilon$.

\subsection{Proof of Specific Case}
Given a Poisson random variable $\epsilon \sim Pois( \mu )$ and $n$ distinct sequences of coefficients $A_1 ,...,A_n$, in which $A_i =\left( a_k^{( i)}\right)_{k=1}^{|A_i |}$ where $a_k^{( i)}$ is the $k$-th coefficients of $A_i$.

Assume there exist no $ k=1,2,...\min( |A_i |,|A_j |)$ between any two $ A_i$ and $ A_j$ such that $ ( a_l^{( i)})_{l=1}^k =( a_l^{( j)})_{l=1}^k$, which means that there exist no two paths $ P_i$ and $ P_j$ sharing the same part from the source point. 

We consider the random vector $\mathbf{R} =( A_1 \circ \epsilon ,A_2 \circ \epsilon ,...,A_n \circ \epsilon )$, where each random variable $A_i \circ X$ appears uniquely. The moment generating function (MGF) of $\mathbf{R}$ is:
\begin{equation}\begin{aligned}
M_{\mathbf{R}}(\mathbf{t}) & =E\left[ e^{t_1 \times A_1 \circ \epsilon +\cdots +t_n \times A_n \circ \epsilon }\right].
\end{aligned}\end{equation}

According to the \textbf{law of total expectation}, we have:
\begin{equation}
\label{eq: law of E-1}
\begin{aligned}
M_{\mathbf{R}}(\mathbf{t}) & =E\left[ E\left[ e^{t_1 \times A_1 \circ \epsilon +\cdots +t_n \times A_n \circ \epsilon } |\epsilon \right]\right] =E\left[\prod _{i=1}^n E\left[ e^{t_i \times A_i \circ \epsilon}\bigl| \epsilon \right]\right],
\end{aligned}\end{equation}
since $A_i \circ \epsilon | \epsilon \perp\!\!\!\perp A_j \circ \epsilon | \epsilon$ for all $i\neq j$.

Next, according to the property of thinning operation, we have  $A_i \circ \epsilon |\epsilon \stackrel{d}{=} Binorm\left( n=\epsilon,p=\prod _{j=1}^{|A_i |} a_j^{(i)}\right)$, where `$\stackrel{d}{=}$' means distribution equality and $Binorm( n,p)$ is the binomial distribution, then the $E\left[\exp(\mathbf{t}_i \times A_i \circ \epsilon )\Bigl| \epsilon \right]$ is the MGF of a binomial variable $A_i \circ \epsilon |\epsilon $, we have:

\begin{equation}\label{eq:MGF of binomial}\begin{aligned}
E\left[\exp(\mathbf{t}_i \times A_i \circ \epsilon)\Bigl| \epsilon\right] & =\left( 1-\prod _{j=1}^{|A_i |} a_j^{( i)} +\prod _{j=1}^{|A_i |} a_j^{( i)} e^{t_i}\right)^{\epsilon }.
\end{aligned}\end{equation}

Substituting equation \ref{eq:MGF of binomial} into equation \ref{eq: law of E-1}, we have:

\begin{equation}\begin{aligned}
M_{\mathbf{R}}(\mathbf{t}) & =E\left[\prod _{i=1}^n\left( 1-\prod _{j=1}^{|A_i |} a_j^{( i)} +\prod _{j=1}^{|A_i |} a_j^{( i)} e^{t_i}\right)^{\epsilon }\right] =\sum _{k=0}^{+\infty }\left[ P( \epsilon =k)\prod _{i=1}^n\left( 1-\prod _{j=1}^{|A_i |} a_j^{( i)} +\prod _{j=1}^{|A_i |} a_j^{( i)} e^{t_i}\right)^k\right]\\
 & =\sum _{k=0}^{+\infty }\left[\frac{\mu ^k e^{-\mu }}{k!}\prod _{i=1}^n\left( 1-\prod _{j=1}^{|A_i |} a_j^{( i)} +\prod _{j=1}^{|A_i |} a_j^{( i)} e^{t_i}\right)^k\right]\\
 & =\exp( -\mu )\sum _{k=0}^{+\infty }\frac{\left[ \mu \prod _{i=1}^n\left( 1-\prod _{j=1}^{|A_i |} a_j^{( i)} +\prod _{j=1}^{|A_i |} a_j^{( i)} e^{t_i}\right)\right]^k}{k!}.
\end{aligned}\end{equation}

According to the power series expansion for the exponential function, i.e. $\exp x=\sum _{x=0}^{+\infty }\frac{x^n}{n!}$, we have:
\begin{equation}\begin{aligned}
M_{\mathbf{R}}(\mathbf{t}) & =\exp( -\mu )\exp\left[ \mu \prod _{i=1}^n\left( 1-\prod _{j=1}^{|A_i |} a_j^{( i)} +\prod _{j=1}^{|A_i |} a_j^{( i)} e^{t_i}\right)\right],
\end{aligned}\end{equation}
then the cumulant-generating function (CGF) of $\mathbf{R}$ is given by:
\begin{equation}
K_{\mathbf{R}}(\mathbf{t}) =\log M_{\mathbf{R}}(\mathbf{t}) =\mu \prod _{i=1}^n\left( 1-\prod _{j=1}^{|A_i |} a_j^{( i)} +\prod _{j=1}^{|A_i |} a_j^{( i)} e^{t_i}\right) -\mu, 
\end{equation}

We obtain the cumulant by the partial derivatives of the cumulant generating function:
\begin{equation}\begin{aligned}
\frac{\partial ^n K_{\mathbf{R}}(\mathbf{t})}{\partial t_1 \partial t_2 \cdots \partial t_n} & =\mu \prod _{i=1}^n\prod _{j=1}^{|A_i |} a_j^{( i)} e^{t_i}.
\end{aligned}\end{equation}

Since $\frac{\partial ^n K_{\mathbf{R}}(\mathbf{t})}{\partial t_1 \partial t_2 \cdots \partial t_n}$ has the form of the exponential function, further partial derivatives of it will also retain the same form:
\begin{equation}\begin{aligned}
\frac{\partial ^{k_1 +k_2 +\cdots k_n} K_{\mathbf{R}}(\mathbf{t})}{\partial t_1^{k_1} \partial t_2^{k_2} \cdots \partial t_n^{k_n}} & =\frac{\partial ^n K_{\mathbf{R}}(\mathbf{t})}{\partial t_1 \partial t_2 \cdots \partial t_n} =\mu \prod _{i=1}^n\prod _{j=1}^{|A_i |} a_j^{( i)} e^{\mathbf{t}_i}.
\end{aligned}\end{equation}

Therefore, we have:
\begin{equation}\begin{aligned}
\kappa ( A_1 \circ \epsilon ,A_2 \circ \epsilon ,...,A_n \circ \epsilon ) & =\frac{\partial ^nK_{\mathbf{R}}(\mathbf{t})}{\partial t_1 \partial t_2 \cdots \partial t_n}\Bigl|_{t_1 =0,\dotsc ,t_n =0} =\mu \prod _{i=1}^n\prod _{j=1}^{|A_i |} a_j^{( i)},\\
\kappa (\underbrace{A_1 \circ \epsilon ,...,A_1 \circ \epsilon }_{k_1 \text{times}} ,...,\underbrace{A_n \circ \epsilon ,...,A_n \circ \epsilon }_{k_n \text{times}}) & =\frac{\partial ^{k_1 +k_2 +\cdots k_n}K_{\mathbf{R}}(\mathbf{t})}{\partial t_1^{k_1} \partial t_2^{k_2} \cdots \partial t_n^{k_n}}\Bigl|_{t_1 =0,\dotsc ,t_n =0} =\mu \prod _{i=1}^n\prod _{j=1}^{|A_i |} a_j^{( i)},
\end{aligned}\end{equation}
which finishes the proof:
\begin{equation}\begin{aligned}
\kappa (\underbrace{A_1 \circ \epsilon ,...,A_1 \circ \epsilon }_{k_1 \text{times}} ,...,\underbrace{A_n \circ \epsilon ,...,A_n \circ \epsilon }_{k_n \text{times}}) & =\kappa ( A_1 \circ \epsilon ,A_2 \circ \epsilon ,...,A_n \circ \epsilon )
\end{aligned}\end{equation}

The above proof of specific case highlights that the way to calculate the MGF involves decomposing the expectation $E\left[ e^{\mathbf{t}^T\mathbf{R}}\right]$ using the law of total expectation to establish conditional independence. In the specific case, the conditional independence can be simply established by condition on the $ \epsilon $ since there is no common sub-sequence between any $ A_i$ and $ A_j$. 

However, given $ \epsilon $ is not enough to build the conditional independence if $ A_i$ and $ A_j$ share the same sub-sequence, i.e. there exist a $ k$ \ such that $ ( A_i)_{1:k} =( A_j)_{1:k}$.

Before going into the formal proof, we provide an example to illustrate why simply conditioning on $ \epsilon $ cannot establish conditional independence. Subsequently, we further show how to establish conditional independence in the presence of a common sub-sequence.

\subsection{An Example when Common sub-sequence Exists}
Consider random variables:
\begin{equation*}
A_1 \circ \epsilon =b\circ a\circ \epsilon, A_2 \circ \epsilon =c\circ a\circ \epsilon \ \text{and} \ A_3 \circ \epsilon =c\circ d\circ \epsilon ,
\end{equation*}
where $ A_1 =( a,b) ,\ A_2 =( a,c) ,\ A_3 =( d,c)$ and $ A_1$ and $ A_2$ has the common sub-sequence $ ( a)$. The generating process can be represented through a tree structure, as shown in Fig. \ref{fig: example of tree}.

\begin{figure}[h]
    \centering
    \includegraphics[width=0.2\textwidth]{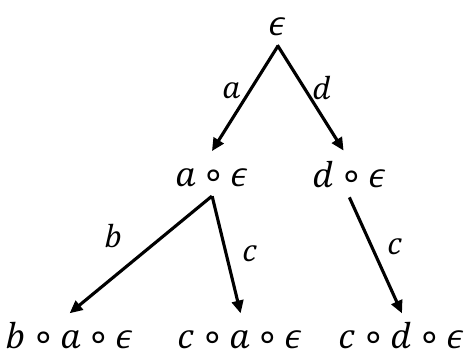}
    \caption{Generating process of $A_1 \circ \epsilon , A_2 \circ \epsilon, \ A_3 \circ \epsilon$, each leaf node corresponds to an original random variable.}
    \label{fig: example of tree}
\end{figure}

Now, if we condition on ${ \epsilon }$, we obtain conditional independence ${ A_1 \circ \epsilon |\epsilon \perp \!\!\! \perp A_3 \circ \epsilon |\epsilon }$ and ${ A_2 \circ \epsilon |\epsilon \perp \!\!\! \perp A_3 \circ \epsilon |\epsilon }$, \ however, ${ A_1 \circ \epsilon |\epsilon \not{\perp \!\!\! \perp } A_2 \circ \epsilon |\epsilon }$. This is because both ${ A_1 \circ \epsilon |\epsilon }$ and ${ A_2 \circ \epsilon |\epsilon }$ are dependent on the binomial random variable ${ a\circ \epsilon |\epsilon \stackrel{d}{=} B(n=\epsilon ,p=\alpha )}$ generated by the common sub-sequence ${ (\alpha )}$. 

Such dependence occurs due to performing the thinning operation $\circ $ on a random variable, resulting in the creation of a new random variable, distinct from the straightforward linear operations involving mere coefficient multiplication.

Therefore, to build conditional independence between ${ A_1 \circ \epsilon |\epsilon }$ and ${ A_2 \circ \epsilon |\epsilon }$, we need to further condition on ${ a\circ \epsilon |\epsilon }$. Such a process of establishing conditional independence step by step can be represented through the tree structure in Fig.\ref{fig: example of tree}, as shown in Fig. \ref{fig: example of tree 2}.

\begin{figure}[h]
    \centering
    \includegraphics[width=0.9\textwidth]{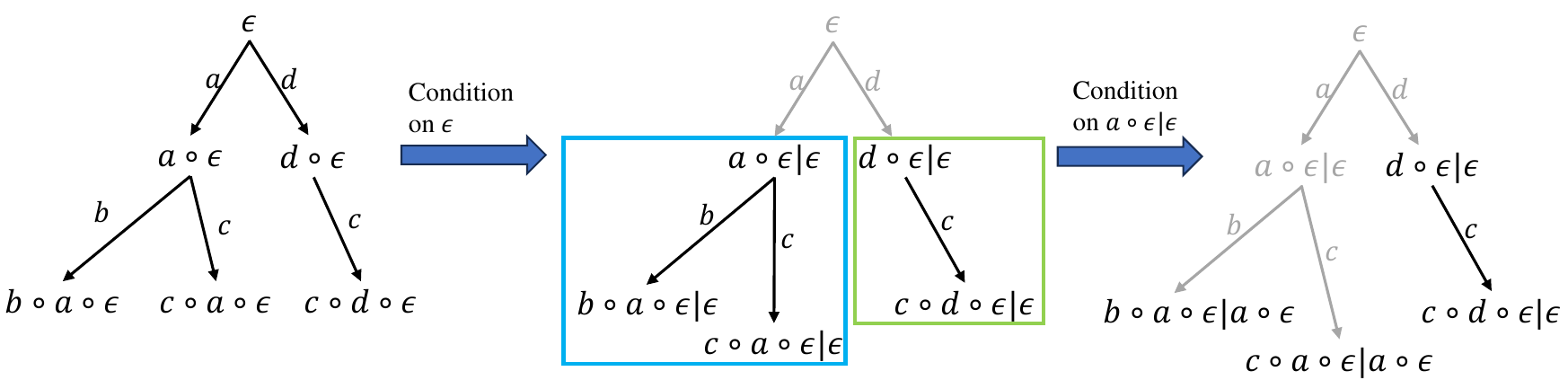}
    \caption{Obtain conditional independence according to the hierarchical structure of the tree}
    \label{fig: example of tree 2}
\end{figure}

Specifically, the MGF of $ \mathbf{R} =[{ A_1 \circ \epsilon ,A_2 \circ \epsilon ,A_3} \circ \epsilon ]^T$ is given by

\begin{equation}\begin{aligned}
M_{\mathbf{R}}( t_1 ,t_2 ,t_3) & =E\left[ e^{t_1{ \times A_1 \circ \epsilon }} e^{t_2{ \times A_2 \circ \epsilon }} e^{t_3{ \times A_3 \circ \epsilon }}\right] =E\left[ e^{t_1{ \times } b\circ a\circ \epsilon } e^{t_2{ \times } c\circ a\circ \epsilon } e^{t_3{ \times } c\circ d\circ \epsilon }\right]\\
 & =E_{\epsilon }\left[ E\left[ e^{t_1{ \times } b\circ a\circ \epsilon } e^{t_2{ \times } c\circ a\circ \epsilon } e^{t_3{ \times } c\circ d\circ \epsilon } |\epsilon \right]\right]\\
 & =E_{\epsilon }\left[ E\left[ e^{t_1{ \times } b\circ a\circ \epsilon } e^{t_2{ \times } c\circ a\circ \epsilon } |\epsilon \right] E\left[ e^{t_3{ \times } c\circ d\circ \epsilon } |\epsilon \right]\right],
\end{aligned}\end{equation}
where ${ E\left[ e^{t_1 b\circ a\circ \epsilon } e^{t_2 c\circ a\circ \epsilon } |\epsilon \right]}$ and ${ E\left[ e^{t_3 c\circ d\circ \epsilon } |\epsilon \right]}$ correspond to the blue box and the green box in Fig. \ref{fig: example of tree 2}, respectively.

The next step is to establish conditional independence and separate ${ E\left[ e^{t_1 b\circ a\circ \epsilon } e^{t_2 c\circ a\circ \epsilon } |\epsilon \right]}$. By applying the law of total expectation to it, we obtain
\begin{equation}\begin{aligned}
E\left[ e^{t_1 b\circ a\circ \epsilon } e^{t_2 c\circ a\circ \epsilon } |\epsilon \right] & =E\left[ E\left[ e^{t_1 b\circ a\circ \epsilon } |a\circ \epsilon \right] E\left[ e^{t_2 c\circ a\circ \epsilon } |a\circ \epsilon \right] |\epsilon \right] ,
\end{aligned}\end{equation}
which is calculable since $ E\left[ e^{t_1 b\circ a\circ \epsilon } |a\circ \epsilon \right]$ is the MGF of $ Binorm( n=a\circ \epsilon ,p=b)$ and the same for $ E\left[ e^{t_2 c\circ a\circ \epsilon } |a\circ \epsilon \right]$. 

Motivated by the above example, when conditioning on a vertex in a tree, conditional independence is established among the random variables corresponding to each subtree of that vertex (if the subtree exists), enabling the separation of expectations.

Therefore, one can calculate the MGF by conditioning the random variables layer by layer according to the hierarchical structure of the tree in the generating process. 

Here, to formalize the computation of the MGF, we introduce the following definition of a tree to model the generating process of the random vector $ \mathbf{R} =( A_1 \circ \epsilon ,A_2 \circ \epsilon ,...,A_n \circ \epsilon )$.

\begin{definition}[Tree representation of the generating process of random vector in PB-SCM]
For a given random vector $\mathbf{R} =[A_1 \circ \epsilon ,A_2 \circ \epsilon ,\dotsc ,A_n \circ \epsilon ]^T$, $\forall _{i,j} A_i \neq A_j$, the generating process of each random variable in $\mathbf{R}$ can be summarized by a tree $T_{\mathbf{R}}$. Let $\{T_0 ,T_1 ,T_2 ,...\}$ denote all the vertices of $T_{\mathbf{R}}$, where $T_0 =\epsilon $ is the root vertex of the tree and $T_j =\alpha _{i\rightarrow j} \circ T_i$. Let $ L=\{L_1 ,L_2 ,...,L_n\}$ with index $ i=1,2,...,n$ denote the leaf vertices in the tree, such that $ L_i =A_i \circ \epsilon $
\end{definition}

Moreover, let $A_i^j$ denotes the sub-sequence of $A_i$ that start from $T_j$. For example, $A_i=\{\alpha_{0\to 1},\alpha_{1\to 2},\alpha_{2\to 3}\}$, then $A_i^0=A_i$ and $A_i^1=\{\alpha_{1\to 2},\alpha_{2\to 3}\}$. Let $L(T_i)=\{k|T_k\text{ is leaf}\wedge k\in L(T_i)\}$ denotes the set of leaf vertex in the tree.

\subsection{The recursive relation of MGF}
Based on this definition, several lemmas are introduced to establish the recursive relation for the MGF of ${ \mathbf{R} =[A_1 \circ \epsilon ,A_2 \circ \epsilon ,\dotsc ,A_n \circ \epsilon ]^T}$.

\begin{lemma}[Start from root vertex $ T_0$]\label{le:0 to j}
    For a given random vector ${ \mathbf{R} =[ A_1 \circ \epsilon ,A_2 \circ \epsilon ,\dotsc ,A_n \circ \epsilon ]^T}$ and its tree representation $ T_{\mathbf{R}}$, $ A_i \neq A_j$, the MGF of ${ \mathbf{R}}$ satisfy
\begin{equation}\label{eq:0 to j}
\begin{aligned}
M_{\mathbf{R}} (\mathbf{t} )=E\left[ e^{\sum _{i=1}^n t_i \times A_i \circ \epsilon }\right] & =E_{T_0}\left[\prod _{j\in Ch(T_0 )} E\left[ e^{\sum _{i\in L(T_j )} t_i \times A_i^j \circ T_j} |T_0\right]\right]
\end{aligned}
\end{equation}
\end{lemma}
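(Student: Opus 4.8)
The plan is to establish Eq. \ref{eq:0 to j} through a single conditioning on the root vertex $T_0 = \epsilon$ followed by a factorization of the resulting conditional expectation along the children subtrees of $T_0$. First I would record the combinatorial fact underlying the tree representation: each leaf $L_i = A_i \circ \epsilon$ lies in the subtree of exactly one child of the root, so the leaf indices partition as $\{1,\dots,n\} = \bigsqcup_{j \in Ch(T_0)} L(T_j)$. This lets me split the exponent,
\[
\sum_{i=1}^n t_i \times A_i \circ \epsilon = \sum_{j \in Ch(T_0)} \sum_{i \in L(T_j)} t_i \times A_i \circ \epsilon,
\]
and then, using that the generating path of each leaf $i \in L(T_j)$ passes through $T_j$, rewrite $A_i \circ \epsilon = A_i^j \circ T_j$ with $A_i^j$ the subsequence of $A_i$ starting at $T_j$ (the empty-subsequence case, when a child is itself a leaf, reduces to the identity thinning $A_i^j \circ T_j = T_j$). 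Applying the law of total expectation conditioned on $T_0$ then gives
\[
M_{\mathbf{R}}(\mathbf{t}) = E_{T_0}\left[E\left[\prod_{j \in Ch(T_0)} e^{\sum_{i \in L(T_j)} t_i \times A_i^j \circ T_j} \,\Big|\, T_0\right]\right].
\]

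The decisive step is to pass the product outside the inner conditional expectation, i.e.\ to show the factors are conditionally independent given $T_0$. Here I would formalize the intuition from the motivating example: conditioned on the count $T_0$, each child $T_j = \alpha_{0 \to j} \circ T_0$ is produced by a thinning drawing on its own family of Bernoulli coins, and every further thinning inside the subtree rooted at $T_j$ uses a disjoint coin family; hence the entire subtree process attached to $T_j$ is a measurable function of the common variable $T_0$ together with a block of i.i.d.\ coins used in no other subtree. Since functions of disjoint independent coin-blocks remain independent after conditioning on $T_0$, the inner conditional expectation of the product factorizes into $\prod_{j \in Ch(T_0)} E[\,\cdot\,|T_0]$, which is exactly Eq. \ref{eq:0 to j}.

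I expect the main obstacle to be making this conditional-independence argument rigorous rather than pictorial—precisely identifying each subtree process as a function of $T_0$ and a disjoint block of the underlying Bernoulli variables, and then invoking the standard fact that such functions stay independent once the shared variable $T_0$ is fixed. Everything else (the leaf partition, the rewriting $A_i \circ \epsilon = A_i^j \circ T_j$, and the law of total expectation) is bookkeeping that follows directly from the tree representation of the generating process.
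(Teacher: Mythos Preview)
Your proposal is correct and follows essentially the same approach as the paper: condition on $T_0$ via the law of total expectation, partition the exponent over the children subtrees, rewrite each $A_i\circ\epsilon$ as $A_i^j\circ T_j$, and factor the inner conditional expectation using conditional independence of the subtree processes given $T_0$. Your version is in fact more explicit than the paper's (which simply asserts the conditional independence), so your plan to justify that step via disjoint Bernoulli coin-blocks is a welcome elaboration rather than a deviation.
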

\begin{proof}
The result is straightforward since $T_0$ the the root of the tree, then given the condition of $T_0$ each child of $T_0$ will be conditional independence:
\begin{equation}
\begin{aligned}
M_{\mathbf{R}} (\mathbf{t} ) & =E\left[ e^{\sum _{i=1}^n t_i \times A_i \circ \epsilon }\right]\\
 & =E\left[ e^{\sum _{i=1}^n t_i \times A_i \circ T_0}\right]\\
 & =E\left[\prod _{j\in Ch(T_0 )} E\left[ e^{\sum _{i\in L(T_j )} t_i \times A_i^j \circ \alpha _{0\rightarrow j} \circ T_0} |T_0\right]\right]\\
 & =E\left[\prod _{j\in Ch(T_0 )} E\left[ e^{\sum _{i\in L(T_j )} t_i \times A_i^j \circ T_j} |T_0\right]\right]
\end{aligned}
\end{equation}
\end{proof}
By Lemma \ref{le:0 to j}, MGF can be decomposed into separated conditional expectation in the first level of the tree. Next, we will investigate how such conditional expectation can be further decomposed.

\begin{lemma}[From vertex $ T_j$ to $ T_k$]\label{le:j to k}
For a given random vector $\mathbf{R} =[A_1 \circ \epsilon ,A_2 \circ \epsilon ,\dotsc ,A_n \circ \epsilon ]^T$ and its tree representation $T_{\mathbf{R}}$. Let $\displaystyle T_j$ be a node in a level that decomposed the conditional expectation into the product of its child. Then, one of such decomposed expectation of its child $\displaystyle T_k$, can be further decomposed if $T_k$ is not leaf,
\begin{equation}
E\left[ e^{\sum _{i\in L(T_k )} t_i \times A_i^k \circ T_k} |T_j\right] =E\left[\prod _{l\in Ch(k)} E\left[ e^{\sum _{i\in L(T_l )} t_i \times A_i^l \circ T_l} |T_k\right] |T_j\right]
\end{equation}
and if $\displaystyle T_k$ is leaf,
\begin{equation}
E\left[ e^{\sum _{i\in L(T_k )} t_i \times A_i^k \circ T_k} |T_j\right] =[ M_{B(\alpha _{j\rightarrow k} )} (t_{L(T_k)} )]^{T_j} .
\end{equation}

\end{lemma}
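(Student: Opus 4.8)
The plan is to prove both cases directly from the tower property of conditional expectation together with the defining properties of the binomial thinning operation, exploiting the hierarchical independence encoded in the tree $T_{\mathbf{R}}$. In essence, the non-leaf case is a factorization step that peels off one more level of the tree, while the leaf case is the elementary thinning-to-binomial identity already used in the specific-case proof.

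For the first case (where $T_k$ is not a leaf), first I would introduce an inner conditioning on $T_k$: since $T_k$ is a descendant of $T_j$, the tower property gives
\[
E\left[ e^{\sum_{i\in L(T_k)} t_i \times A_i^k \circ T_k} \mid T_j\right] = E\left[ E\left[ e^{\sum_{i\in L(T_k)} t_i \times A_i^k \circ T_k} \mid T_k\right] \mid T_j\right].
\]
The key structural observation I would then establish is that the leaf set partitions over the children, $L(T_k)=\bigcup_{l\in Ch(k)} L(T_l)$, and that for any leaf $i\in L(T_l)$ the coefficient sub-sequences factor as $A_i^k \circ T_k = A_i^l \circ \alpha_{k\rightarrow l}\circ T_k = A_i^l \circ T_l$. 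Conditioned on the value of $T_k$, each child $T_l=\alpha_{k\rightarrow l}\circ T_k$ together with all subsequent thinnings in its subtree is generated by a fresh, mutually independent batch of Bernoulli coins; hence the exponent blocks indexed by distinct children are conditionally independent given $T_k$. This factors the inner expectation into $\prod_{l\in Ch(k)} E[\,\cdot \mid T_k]$, and substituting back into the display yields the claimed identity.

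For the second case ($T_k$ a leaf), the sub-sequence $A_i^k$ starting at $T_k$ is empty, so $A_i^k\circ T_k = T_k$ and $L(T_k)$ is a single index, reducing the exponent to $t_{L(T_k)}\times T_k$. Writing $T_k=\alpha_{j\rightarrow k}\circ T_j$ and using that $\alpha_{j\rightarrow k}\circ T_j \mid T_j \stackrel{d}{=} Binorm(n=T_j,\ p=\alpha_{j\rightarrow k})$, the inner expectation is exactly the MGF of a binomial variable evaluated at $t_{L(T_k)}$. Since a binomial MGF is the $T_j$-th power of the single-coin Bernoulli MGF $M_{B(\alpha_{j\rightarrow k})}(t)=1-\alpha_{j\rightarrow k}+\alpha_{j\rightarrow k}e^{t}$, this equals $[M_{B(\alpha_{j\rightarrow k})}(t_{L(T_k)})]^{T_j}$, which is the stated formula; this step mirrors the binomial-MGF computation already carried out in the specific-case proof.

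The main obstacle is the rigorous justification of the conditional independence of the child subtrees given $T_k$. One must argue that, once $T_k$ is fixed, the coins realizing each $\alpha_{k\rightarrow l}\circ T_k$ and all downstream thinnings in distinct subtrees are drawn from disjoint, independent families of i.i.d.\ Bernoulli variables, so that the thinning operation (unlike plain scalar multiplication) genuinely injects fresh randomness that does not couple sibling subtrees. This is intuitively transparent from the tree construction but requires careful bookkeeping of which Bernoulli coins enter each leaf's generation; once that is in place, the rest is a routine application of the tower property and the thinning-to-binomial identity.
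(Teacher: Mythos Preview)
Your proposal is correct and follows essentially the same approach as the paper's proof: both cases are handled by the tower property (conditioning on $T_k$), the partition $L(T_k)=\bigcup_{l\in Ch(k)}L(T_l)$ with $A_i^k\circ T_k=A_i^l\circ T_l$, conditional independence of sibling subtrees given $T_k$, and the thinning-to-binomial MGF identity for the leaf case. The only cosmetic difference is that the paper first rewrites the exponent as a sum over children and then applies the law of total expectation, whereas you condition on $T_k$ first and then split; the content is identical.
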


\begin{proof}
If $T_k$ is not leaf, we can separate the expectation according its child: 
\begin{equation}
E\left[ e^{\sum _{i\in L(T_k )} t_i \times A_i^k \circ T_k} |T_j\right] =E\left[ e^{\sum _{l\in Ch( T_k)}\sum _{i\in L(T_l )} t_i \times A_i^l \circ \alpha _{k\rightarrow l} \circ T_k} |T_j\right] =E\left[\prod _{l\in Ch(k)} e^{\sum _{i\in L(T_l )} t_i \times A_i^l \circ \alpha _{k\rightarrow l} \circ T_k} |T_j\right] .
\end{equation}
Then, according to the law of total expectation, we have
\begin{equation}
\begin{aligned}
E\left[\prod _{l\in Ch(k)} e^{\sum _{i\in L(T_l )} t_i \times A_i^l \circ \alpha _{k\rightarrow l} \circ T_k} |T_j\right] & =E\left[\prod _{l\in Ch(k)} E\left[ e^{\sum _{i\in L(T_l )} t_i \times A_i^l \circ \alpha _{k\rightarrow l} \circ T_k} |T_k\right] |T_j\right]\\
 & =E\left[\prod _{l\in Ch(k)} E\left[ e^{\sum _{i\in L(T_l )} t_i \times A_i^l \circ T_l} |T_k\right] |T_j\right]
\end{aligned}
\end{equation}
If $T_k$ is leaf, which means $i=L(T_k )$ is the exactly index of the leaf vertex and $A_i^k$ is empty, and then we have: 
\begin{equation}
E\left[ e^{\sum _{i\in L(T_k )} t_i \times A_i^k \circ T_k} |T_j\right] =E\left[ e^{t_{L(T_k)} \times T_k} |T_j\right] =E\left[ e^{t_{L(T_k)} \times \alpha _{j\rightarrow k} \circ T_j} |T_j\right] .
\end{equation}
According to the definition of thin operator, we have $\alpha _{j\rightarrow k} \circ T_j =\sum\nolimits _{l=1}^{T_j} \xi _l^{(\alpha _{j\rightarrow k} )}$ with $\xi _l^{(\alpha _{j\rightarrow k} )}\stackrel{\mathrm{i.i.d.}}{\sim } B(\alpha _{j\rightarrow k} )$, where $B(\alpha _{j\rightarrow k} )$ is Bernoulli distribution with parameter $\alpha _{j\rightarrow k}$. Thus,
\begin{equation}
\begin{aligned}
E\left[ e^{t_{L(T_k)} \times \alpha _{j\rightarrow k} \circ T_j} |T_j\right] & =E\left[ e^{t_{L(T_k)} \times \sum\nolimits _{l=1}^{T_j} \xi _l^{(\alpha _{j\rightarrow k} )}} |T_j\right] =E\left[\prod\nolimits _{l=1}^{T_j} e^{t_{L(T_k)} \times \xi _l^{(\alpha _{j\rightarrow k} )}} |T_j\right] =\prod\nolimits _{l=1}^{T_j} E\left[ e^{t_{L(T_k)} \times \xi _l^{(\alpha _{j\rightarrow k} )}}\right] =E\left[ e^{t_{L(T_k)} \times \xi _l^{(\alpha _{j\rightarrow k} )}}\right]^{T_j} .
\end{aligned}
\end{equation}
Note that $E\left[ e^{t_{L(T_k)} \times \xi _l^{(\alpha _{j\to k} )}}\right]$ is the MGF of $\xi _l^{(\alpha _{j\to k} )}$. In the end, we obtain:
\begin{equation}
\begin{aligned}
E\left[ e^{t_{L(T_k)} \times \xi _l^{(\alpha _{j\rightarrow k} )}}\right]^{T_j} =[M_{B(\alpha _{j\rightarrow k} )} (t_{L(T_k)} )]^{T_j} .
\end{aligned}
\end{equation}
\end{proof}

To represent the recursive relation, we now introduce the probability-generating function (PGF):

\begin{definition}[Probability-generating function]
     For $X =[ X_1 ,...,X_n]^T$, where each $X_i$ is a discrete random variable, the probability-generating function of $X$ is given by:
    $G_X(\mathbf{z}) := E[z_1^{X_1} z_2^{X_2} \cdots  z_n^{X_n}],$
    where $\mathbf{z} =[ z_1 ,...,z_n]$.
\end{definition}

Then, following lemma disclose the recursive relation of MGF. 

\begin{lemma}\label{lem:joint MGF}
    For a given random vector $\mathbf{R} =[A_1 \circ \epsilon ,A_2 \circ \epsilon ,\dotsc ,A_n \circ \epsilon ]^T$ and its tree representation $T_{\mathbf{R}}$. Let $\displaystyle M_{j,k}(\mathbf{t}) :=E\left[ e^{\sum _{i\in L(T_k )} t_i \times A_i^k \circ T_k} |T_j\right]$ and $\displaystyle \tilde{M}_{j,k}(\mathbf{t}_{L(T_k)}) =[ M_{j,k}(\mathbf{t}_{L(T_k)})]^{\frac{1}{T_j}}$, where $\textbf{t}_{L(T_k)}=\{t_i|i\in L(T_k)\}$. The joint MGF can be expressed as follows:
\begin{equation}
M_{\mathbf{R}} (\mathbf{t} )=G_{T_0}\left(\prod _{j\in Ch(T_0 )}\tilde{M}_{0,j}(\mathbf{t}_{L(T_j)})\right) ,
\end{equation}
where
\begin{equation}
\tilde{M}_{j,k}(\mathbf{t}_{L(T_k)}) =\begin{cases}
G_{B( \alpha _{j\rightarrow k})}\left(\prod _{l\in Ch(k)}\tilde{M}_{k,l}(\mathbf{t}_{L(T_k)})\right) & \text{if } T_k\text{ is not leaf vertex}\\
M_{B(\alpha _{j\rightarrow k} )} (t_{L( T_k)} ) & \text{otherwise}
\end{cases} .
\end{equation}
\end{lemma}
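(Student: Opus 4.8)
The plan is to prove a single structural claim by induction on the tree $T_{\mathbf{R}}$ and then read off the stated identity. Concretely, for every edge $T_j \rightarrow T_k$ I would show that the conditional MGF $M_{j,k}(\mathbf{t})$ is a perfect $T_j$-th power of a \emph{deterministic} function of $\mathbf{t}$, i.e. $M_{j,k}(\mathbf{t}) = [\tilde{M}_{j,k}(\mathbf{t}_{L(T_k)})]^{T_j}$ with $\tilde{M}_{j,k}$ not depending on the random variable $T_j$. This is exactly what makes the definition $\tilde{M}_{j,k} = [M_{j,k}]^{1/T_j}$ well posed, and it is the property that lets the recursion close. The induction runs bottom-up, from the leaves toward the root, with Lemma \ref{le:j to k} supplying the one-step decomposition and the binomial thinning relation supplying the passage from child to parent.

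For the base case I would take $T_k$ to be a leaf. Lemma \ref{le:j to k} directly gives $M_{j,k}(\mathbf{t}) = [M_{B(\alpha_{j\rightarrow k})}(t_{L(T_k)})]^{T_j}$, so $\tilde{M}_{j,k}(\mathbf{t}_{L(T_k)}) = M_{B(\alpha_{j\rightarrow k})}(t_{L(T_k)})$, which is deterministic and matches the second branch of the stated recursion. For the inductive step I would take $T_k$ not a leaf and assume the claim for every child $T_l$, $l \in Ch(k)$. Lemma \ref{le:j to k} gives
\begin{equation}
M_{j,k}(\mathbf{t}) = E\left[\prod_{l\in Ch(k)} M_{k,l}(\mathbf{t}) \,\Big|\, T_j\right] = E\left[\left(\prod_{l\in Ch(k)}\tilde{M}_{k,l}\right)^{T_k} \,\Big|\, T_j\right],
\end{equation}
where the second equality uses the inductive hypothesis $M_{k,l} = [\tilde{M}_{k,l}]^{T_k}$ and pulls the common exponent $T_k$ out of the product; crucially each $\tilde{M}_{k,l}$ is deterministic and so behaves as a constant inside the conditional expectation. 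Writing $z = \prod_{l\in Ch(k)}\tilde{M}_{k,l}$ and using that $T_k = \alpha_{j\rightarrow k}\circ T_j$ is, conditional on $T_j$, a binomial variable with $T_j$ trials and success probability $\alpha_{j\rightarrow k}$, the inner expectation is a binomial PGF, $E[z^{T_k}\mid T_j] = [G_{B(\alpha_{j\rightarrow k})}(z)]^{T_j}$. Hence $M_{j,k} = [G_{B(\alpha_{j\rightarrow k})}(z)]^{T_j}$, which both confirms the $T_j$-th-power form and yields the first branch of the recursion, $\tilde{M}_{j,k} = G_{B(\alpha_{j\rightarrow k})}(\prod_{l\in Ch(k)}\tilde{M}_{k,l})$.

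It then remains to handle the root. Starting from Lemma \ref{le:0 to j}, $M_{\mathbf{R}}(\mathbf{t}) = E_{T_0}[\prod_{j\in Ch(T_0)} M_{0,j}(\mathbf{t})]$; applying the structural claim to each edge $T_0\rightarrow T_j$ gives $M_{0,j} = [\tilde{M}_{0,j}]^{T_0}$, so the product equals $(\prod_{j\in Ch(T_0)}\tilde{M}_{0,j})^{T_0}$. Since $T_0 = \epsilon$ and $E_{T_0}[z^{T_0}] = G_{T_0}(z)$ is by definition the PGF of the root, I obtain $M_{\mathbf{R}}(\mathbf{t}) = G_{T_0}(\prod_{j\in Ch(T_0)}\tilde{M}_{0,j})$, which is the claimed expression.

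The main obstacle is establishing and propagating the ``deterministic $T_j$-th power'' form through the nested conditional expectations, since this is what guarantees that $\tilde{M}_{j,k}$ is genuinely a function of $\mathbf{t}$ alone and may be treated as a constant at the next level up. Everything else reduces to two clean facts: the multiplicativity of the binomial PGF, namely the $n$-fold Bernoulli product $[G_{B(\alpha)}(z)]^{n}$, which converts a conditional expectation of $z^{T_k}$ into a $T_j$-th power, and the definitional identity $E[z^{T_0}] = G_{T_0}(z)$ for the Poisson root. A minor bookkeeping point is to verify that $L(T_k) = \bigcup_{l\in Ch(k)} L(T_l)$, so that the argument lists $\mathbf{t}_{L(T_k)}$ and $\mathbf{t}_{L(T_l)}$ are compatible across levels and $\tilde{M}_{k,l}$ really depends only on the $t_i$ indexed by leaves below $T_l$.
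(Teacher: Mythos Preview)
Your proposal is correct and follows essentially the same approach as the paper: both invoke Lemma~\ref{le:j to k} for the one-step decomposition, use the binomial/Bernoulli PGF identity $E[z^{\alpha_{j\to k}\circ T_j}\mid T_j]=[G_{B(\alpha_{j\to k})}(z)]^{T_j}$ to pass from child to parent, and finish at the root via Lemma~\ref{le:0 to j} and $E[z^{T_0}]=G_{T_0}(z)$. Your write-up is somewhat more careful in framing the argument as an explicit bottom-up induction and in isolating the key invariant (that $\tilde{M}_{j,k}$ is a deterministic function of $\mathbf{t}$, so the $1/T_j$-th root is well defined), whereas the paper derives the same recursion by direct substitution without spelling out the induction; but the mathematical content is the same.
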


\begin{proof}
First, by Lemma 2, we have the following recursive formula
\begin{equation}\label{eq:recusive}
M_{j,k}(\mathbf{t}_{L(T_k)}) =\begin{cases}
E\left[\prod _{l\in Ch(k)} M_{k,l}(\mathbf{t}_{L(T_l)}) |T_j\right] & \text{if } T_k\text{ is not leaf vertex}\\
[ M_{B(\alpha _{j\rightarrow k} )} (t_i )]^{T_j} & \text{otherwise}
\end{cases}
\end{equation}
and thus 
\begin{equation}
M_{\mathbf{R}} (\mathbf{t} )=E_{T_0}\left[\prod _{j\in Ch(T_0 )} M_{0,j}(\mathbf{t}_{L(T_j)})\right] .
\end{equation}
Then, since $\displaystyle \tilde{M}_{j,k}(\mathbf{t}_{L(T_k)}) =[ M_{j,k}(\mathbf{t}_{L(T_k)})]^{\frac{1}{T_j}}$, based on the recursive formula in Eq. \ref{eq:recusive}, we have
\begin{equation}\label{eq:recursive2}
\begin{aligned}
\tilde{M}_{j,k}(\mathbf{t}_{L(T_k)}) & =M_{j,k}(\mathbf{t}_{L(T_k)})^{\frac{1}{T_j}}\\
 & =\begin{cases}
\left( E\left[\prod _{l\in Ch(k)}\tilde{M}_{k,l}(\mathbf{t}_{L(T_l)})^{T_k} |T_j\right]\right)^{\frac{1}{T_j}} & \text{if } T_k\text{ is not leaf vertex}\\
M_{B(\alpha _{j\rightarrow k} )} (t_i ) & \text{otherwise}
\end{cases}\\
 & =\begin{cases}
\left( E\left[\left(\prod _{l\in Ch(k)}\tilde{M}_{k,l}(\mathbf{t}_{L(T_l)})\right)^{\alpha _{j\rightarrow k} \circ T_j} |T_j\right]\right)^{\frac{1}{T_j}} & \text{if } T_k\text{ is not leaf vertex}\\
M_{B(\alpha _{j\rightarrow k} )} (t_{L(T_k)} ) & \text{otherwise}
\end{cases}
\end{aligned}
\end{equation}
Consider the expectation in Eq. \ref{eq:recursive2} when $\displaystyle T_k$ is not leaf vertex. Since $\displaystyle T_j$ is conditioned such that $\displaystyle \alpha _{j\rightarrow k} \circ T_j$ follows the distribution $\displaystyle Binorm( n=T_j ,p=\alpha _{j\rightarrow k})$, then by the probability generating function of Binomial distribution, we have
\begin{equation}\label{eq:recursive3}
E\left[\left(\prod _{l\in Ch(k)}\tilde{M}_{k,l}(\mathbf{t}_{L(T_l)})\right)^{\alpha _{j\rightarrow k} \circ T_j} |T_j\right] =G_{B( \alpha _{j\rightarrow k})}\left(\prod _{l\in Ch(k)}\tilde{M}_{k,l}(\mathbf{t}_{L(T_l)})\right)^{T_j} .
\end{equation}
where $\displaystyle G_{B( \alpha _{j\rightarrow k})}( \cdot )$ is the probability generating function of Bernoulli distribution according to the relation between Bernoulli and Binomial distribution. Substituting Eq. \ref{eq:recursive3} into Eq. \ref{eq:recursive2} we have
\begin{equation}
\tilde{M}_{j,k}(\mathbf{t}_{L(T_k)}) =\begin{cases}
G_{B( \alpha _{j\rightarrow k})}\left(\prod _{l\in Ch(k)}\tilde{M}_{k,l}(\mathbf{t}_{L(T_l)})\right) & \text{if } T_k\text{ is not leaf vertex}\\
M_{B(\alpha _{j\rightarrow k} )} (t_i ) & \text{otherwise}
\end{cases}.
\end{equation}
As for the joint MGF, similarly, we have
\begin{equation}
M_{\mathbf{R}} (\mathbf{t} )=E_{T_0}\left[\prod _{j\in Ch(T_0 )} M_{0,j}(\mathbf{t}_{L(T_j)})\right] =E_{T_0}\left[\left(\prod _{j\in Ch(T_0 )}\tilde{M}_{0,j}(\mathbf{t}_{L(T_j)})\right)^{T_0}\right] =G_{T_0}\left(\prod _{j\in Ch(T_0 )}\tilde{M}_{0,j}(\mathbf{t}_{L(T_j)})\right) ,
\end{equation}
which finishes the proof.

\end{proof}

After deriving the recursive relation, we now step into the formal proof of Theorem 1 following the proof outline. 

\subsection{Formal Proof of Theorem 1}

According to the Lemma \ref{lem:joint MGF}, for a given random vector $\mathbf{R} =[A_1 \circ \epsilon ,A_2 \circ \epsilon ,\dotsc ,A_n \circ \epsilon ]^T$ and its tree representation $T_{\mathbf{R}}$, the MGF of it is $\displaystyle M_{\mathbf{R}} (\mathbf{t} )=G_{T_0}\left(\prod\nolimits _{j\in Ch(T_0 )}\tilde{M}_{0,j} (\mathbf{t}_{L(T_j)} )\right)$, where $\displaystyle T_0 =\epsilon \sim \text{Pois}( \mu )$ is a Poisson random variable. The cumulant generating function of $\mathbf{R}$ is given by: 
\begin{equation}
\begin{aligned}
K_{\mathbf{R}} (\mathbf{t} )=\log M_{\mathbf{R}} (\mathbf{t} ) & =\log\exp\left[ \mu \left(\prod\nolimits _{j\in Ch(T_0 )}\tilde{M}_{0,j} (\mathbf{t}_{L(T_j)} )-1\right)\right] =\mu \left(\prod\nolimits _{j\in Ch(T_0 )}\tilde{M}_{0,j} (\mathbf{t}_{L(T_j)} )-1\right) .
\end{aligned}
\end{equation}

Our goal is to show the derivative of $K_{\mathbf{R}} (\mathbf{t} )$ is of the exponential form:
\begin{equation}
\frac{\partial ^n K_{\mathbf{R}} (\mathbf{t} )}{\partial t_1 \partial t_1 \cdots \partial t_n} =\beta e^{t_1 +t_2 +\cdots +t_n}.
\end{equation}


We start with

\begin{equation}
\frac{\partial ^n K_{\mathbf{R}} (\mathbf{t} )}{\partial t_1 \partial t_1 \cdots \partial t_n} =\frac{\partial ^n}{\partial t_1 \partial t_1 \cdots \partial t_n} \mu \left(\prod\limits _{j\in Ch(T_0 )}\tilde{M}_{0,j} (\mathbf{t}_{L( T_j)} )-1\right) =\mu \frac{\partial ^n}{\partial t_1 \partial t_1 \cdots \partial t_n}\prod\limits _{j\in Ch(T_0 )}\tilde{M}_{0,j} (\mathbf{t}_{L( T_j)}),
\end{equation}
where $\tilde{M}_{0,j} (\mathbf{t}_{L( T_j)})$ is a function involving only $\{t_i|i \in L(T_j)\}$, we then have:

\begin{equation}
 \mu \frac{\partial ^n}{\partial t_1 \partial t_1 \cdots \partial t_n}\prod\limits _{j\in Ch(T_0 )}\tilde{M}_{0,j} (\mathbf{t}_{L( T_j)})=\mu \prod\nolimits _{j\in Ch(T_0 )}\frac{\partial ^{|L( T_j) |}}{\prod _{i \in L( T_j)} \partial t_i}\tilde{M}_{0,j} (\mathbf{t}_{L( T_j)} ).
\end{equation}

We then introduce the recursive representation of $\frac{\partial ^{|L( T_j) |}}{\prod _{i \in L( T_j)} \partial t_i}\tilde{M}_{0,j} (\mathbf{t}_{L( T_j)} ).$

\begin{lemma}
\label{lem:derivative}
The higher order partial derivative of $\tilde{M}_{j,k} (\mathbf{t}_{L( T_k)} )$ can be given by:

\begin{equation}
    \begin{aligned}
    \frac{\partial ^{|L( T_k) |}\tilde{M}_{j,k} (\mathbf{t}_{L( T_k)} )}{\prod _{i \in L( T_k)} \partial t_i} & =\begin{cases}
    \alpha _{j\rightarrow k}\prod _{l\in Ch(k)}\frac{\partial ^{|L( T_l) |} \tilde{M}_{k,l} (\mathbf{t}_{L( T_l)} )}{\prod _{t_i \in L( T_l)} \partial t_i}, & \text{if} \ T_k \ \text{is not leaf vertex},\\
    \alpha _{j\rightarrow k} e^{t_{L(T_k)}}, & \text{otherwise}.
    \end{cases}
    \end{aligned}
\end{equation}
\end{lemma}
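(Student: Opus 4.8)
The plan is to prove the recursion by a direct case analysis on whether $T_k$ is a leaf vertex, exploiting the fact that both the Bernoulli PGF and the Bernoulli MGF are \emph{affine} functions of their argument, so that a single differentiation already exhausts their nonlinearity. Throughout, the governing identities are $G_{B(\alpha)}(z)=1-\alpha+\alpha z$ and $M_{B(\alpha)}(t)=1-\alpha+\alpha e^{t}$, together with the recursive form of $\tilde{M}_{j,k}$ established in Lemma \ref{lem:joint MGF}.

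First I would dispose of the leaf case. When $T_k$ is a leaf we have $\tilde{M}_{j,k}(\mathbf{t}_{L(T_k)})=M_{B(\alpha_{j\rightarrow k})}(t_{L(T_k)})=1-\alpha_{j\rightarrow k}+\alpha_{j\rightarrow k}e^{t_{L(T_k)}}$, a function of the single variable $t_{L(T_k)}$. Since $|L(T_k)|=1$, the required derivative is simply $\partial_{t_{L(T_k)}}\tilde{M}_{j,k}=\alpha_{j\rightarrow k}e^{t_{L(T_k)}}$, which is exactly the second branch.

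Next comes the internal-vertex case, where $\tilde{M}_{j,k}=G_{B(\alpha_{j\rightarrow k})}\!\bigl(\prod_{l\in Ch(k)}\tilde{M}_{k,l}\bigr)$. Substituting the affine Bernoulli PGF gives $\tilde{M}_{j,k}=1-\alpha_{j\rightarrow k}+\alpha_{j\rightarrow k}\prod_{l\in Ch(k)}\tilde{M}_{k,l}$. Differentiating once with respect to every $t_i$ with $i\in L(T_k)$ annihilates the constant $1-\alpha_{j\rightarrow k}$ (because $|L(T_k)|\ge 1$) and pulls out the scalar factor $\alpha_{j\rightarrow k}$, leaving the mixed partial of the product $\prod_{l\in Ch(k)}\tilde{M}_{k,l}$ to be evaluated.

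The crux, and the one genuinely non-routine step, is factorizing that mixed partial across the children. The key observation is that the leaf index sets $\{L(T_l)\}_{l\in Ch(k)}$ form a partition of $L(T_k)$, and that each factor $\tilde{M}_{k,l}$ depends only on its own block $\mathbf{t}_{L(T_l)}$. Consequently, in the Leibniz expansion of the high-order derivative of $\prod_{l\in Ch(k)}\tilde{M}_{k,l}$, any term that assigns a derivative $\partial_{t_i}$ to a factor $\tilde{M}_{k,l}$ with $i\notin L(T_l)$ vanishes; the sole surviving term routes each $\partial_{t_i}$ to the unique child whose subtree contains leaf $i$. This yields $\prod_{l\in Ch(k)}\frac{\partial^{|L(T_l)|}\tilde{M}_{k,l}}{\prod_{i\in L(T_l)}\partial t_i}$, matching the first branch. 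I would record this disjoint-variable product rule as a one-line remark, justified either by induction on $|Ch(k)|$ or simply by noting that differentiation operators acting on disjoint variable blocks factor through the product, and then conclude. The main obstacle is therefore purely bookkeeping: ensuring the partition and dependence claims are stated cleanly so that the factorization is transparent rather than hand-waved.
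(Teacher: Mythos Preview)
Your proposal is correct and follows essentially the same approach as the paper: both handle the leaf case by directly differentiating the explicit Bernoulli MGF $1-\alpha_{j\to k}+\alpha_{j\to k}e^{t_{L(T_k)}}$, and both handle the internal case by substituting the affine Bernoulli PGF into the recursion of Lemma~\ref{lem:joint MGF}, pulling out $\alpha_{j\to k}$, and then factoring the mixed partial of $\prod_{l\in Ch(k)}\tilde{M}_{k,l}$ using the fact that each factor depends only on its own block $\mathbf{t}_{L(T_l)}$. Your explicit justification via the Leibniz expansion and the partition $\{L(T_l)\}_{l\in Ch(k)}$ of $L(T_k)$ is, if anything, more carefully argued than the paper's, which simply asserts the factorization in one line.
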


\begin{proof}
When $\displaystyle T_k$ is not a leaf vertex, according to the Lemma \ref{lem:joint MGF}, we have:
\begin{equation}
    \begin{aligned}
        \frac{\partial ^{|L( T_k) |}\tilde{M}_{j,k} (\mathbf{t}_{L( T_k)} )}{\prod _{i \in L( T_k)} \partial t_i} & =\frac{\partial ^{|L( T_k) |}}{\prod _{i \in L( T_k)} \partial t_i} G_{B(\alpha _{j\rightarrow k} )}\left(\prod _{l\in Ch(k)}\tilde{M}_{k,l} (\mathbf{t}_{L( T_l)} )\right)\\
 & =\frac{\partial ^{|L( T_k) |}}{\prod _{i \in L( T_k)} \partial t_i}\left( 1-\alpha _{j\rightarrow k} +\alpha _{j\rightarrow k}\prod _{l\in Ch(k)}\tilde{M}_{k,l} (\mathbf{t}_{L( T_l)} )\right) \\
 &=\alpha _{j\rightarrow k}\frac{\partial ^{|L( T_k) |}}{\prod _{i \in L( T_k)} \partial t_i}\prod _{l\in Ch(k)}\tilde{M}_{k,l} (\mathbf{t}_{L( T_l)} ).
    \end{aligned}
\end{equation}
Since $\tilde{M}_{k,l} (\mathbf{t}_{L( T_l)})$ is a function involving only $\{t_i|i \in L(T_l)\}$, we have:
\begin{equation}
\alpha _{j\rightarrow k}\frac{\partial ^{|L( T_k) |}}{\prod _{i \in L( T_k)} \partial t_i}\prod _{l\in Ch(k)}\tilde{M}_{k,l} (\mathbf{t}_{L( T_l)} ) =\alpha _{j\rightarrow k}\prod _{l\in Ch(k)}\frac{\partial ^{|L( T_l) |}\tilde{M}_{k,l} (\mathbf{t}_{L( T_l)} )}{\prod _{i \in L( T_k)} \partial t_i}.
\end{equation}

Otherwise, when $T_k$ is a leaf vertex, we have:
\begin{equation}
\frac{\partial ^{|L( T_k) |}\tilde{M}_{j,k} (\mathbf{t}_{L( T_k)} )}{\prod _{i \in L( T_k)} \partial t_i} =\frac{\partial M_{B(\alpha _{j\rightarrow k} )} (t_{L(T_k)} )}{\partial t_{L(T_k)}} =\frac{\partial \left( 1-\alpha _{j\rightarrow k} +\alpha _{j\rightarrow k} e^{t_{L(T_k)}}\right)}{\partial t_{L(T_k)}} =\alpha _{j\rightarrow k} e^{t_{L(T_k)}},
\end{equation}
which finishes the proof.
\end{proof}

According to Lemma \ref{lem:derivative}, as the recursion terminates with an exponential function upon reaching the leaf vertex, we can deduce that the expansion of $\frac{\partial ^n K_{\mathbf{R}} (\mathbf{t} )}{\partial t_1 \partial t_1 \cdots \partial t_n}$ results in the product of $e^{t_i}$ for all $i \in [n]$, along with a series of corresponding path coefficients. Moreover, our focus does not lie in the specific form of these coefficients, and thus we denote the coefficient as $\beta$. We conclude:

\begin{equation}
\frac{\partial ^n K_{\mathbf{R}} (\mathbf{t} )}{\partial t_1 \partial t_1 \cdots \partial t_n} =\beta e^{t_1 +t_2 +\cdots +t_n}.
\end{equation}

Finally, we obtain:
\begin{equation}
\begin{aligned}
\kappa (A_1 \circ \epsilon ,A_2 \circ \epsilon ,...,A_n \circ \epsilon ) & =\frac{\partial ^n K_{\mathbf{R}} (\mathbf{t} )}{\partial t_1 \partial t_2 \cdots \partial t_n}\Bigl|_{t_1 =0,\dotsc ,t_n =0} =\beta,\\
\kappa (\underbrace{A_1 \circ \epsilon ,...,A_1 \circ \epsilon }_{k_1\text{times}} ,...,\underbrace{A_n \circ \epsilon ,...,A_n \circ \epsilon }_{k_n\text{times}} ) & =\frac{\partial ^{k_1 +k_2 +\cdots k_n} K_{\mathbf{R}} (\mathbf{t} )}{\partial t_1^{k_1} \partial t_2^{k_2} \cdots \partial t_n^{k_n}}\Bigl|_{t_1 =0,\dotsc ,t_n =0} =\beta,
\end{aligned}
\end{equation}
which finishes the proof:
\begin{equation}
\begin{aligned}
\kappa (\underbrace{A_1 \circ \epsilon ,...,A_1 \circ \epsilon }_{k_1\text{times}} ,...,\underbrace{A_n \circ \epsilon ,...,A_n \circ \epsilon }_{k_n\text{times}} ) & =\kappa (A_1 \circ \epsilon ,A_2 \circ \epsilon ,...,A_n \circ \epsilon ).
\end{aligned}
\end{equation}

\section{Proof of Remark 1}
\begin{remark}
    For any two variables causal graph, the causal direction of PB-SCM is not identifiable and a distributed equivalent reversed model exists.
\end{remark}
\begin{proof}
    We prove by the equality of PGFs of both directions.
    Given a two variables causal graph $X\xrightarrow{\alpha} Y$, where $X=\epsilon_X, Y=\alpha \circ X + \epsilon_Y, \epsilon_i\sim \text{Pois}(\mu_i)$, and we denote the reverse model by $Y\xrightarrow{\hat{\alpha}}X$, where $Y=\hat{\epsilon_Y}, X=\hat\alpha \circ Y + \hat{\epsilon_X}$. We now show the solution of $\hat{\alpha}, \hat{\epsilon_X}$ and $\hat{\epsilon_Y}$.

    For the causal direction, the PGF is given by:
    \begin{equation}
    \begin{aligned}
    G_{X,Y}( z_1 ,s_2) & =E\left[ z_1^X z_2^{\alpha \circ X+\epsilon _Y}\right]\\
     & =E\left[ z_1^{\epsilon _X} z_2^{\alpha \circ \epsilon _X}\right] E\left[ z_2^{\epsilon _Y}\right]\\
     & =G_{\epsilon _X}( z_1 G_{B( \alpha )}( z_2)) G_{\epsilon _Y}( z_2)\\
     & =G_{\epsilon _X}( z_1( 1-\alpha +\alpha z_2)) G_{\epsilon _Y}( z_2)\\
     & =e^{\mu _X( z_1( 1-\alpha +\alpha z_2) -1)} e^{\mu _Y( z_2 -1)}.
    \end{aligned}
    \end{equation}

    For the reverse direction, the PGF is given by:

    \begin{equation}
    \begin{aligned}
\hat{G}_{X,Y}( z_{1} ,z_{2}) & =E\left[ z_{1}^{\hat{\alpha } \circ Y+\hat{\epsilon }_{X}} z_{2}^{Y}\right]\\
 & =E\left[ z_{1}^{\hat{\alpha } \circ Y} z_{2}^{Y}\right] E\left[ z_{1}^{\hat{\epsilon }_{X}}\right]\\
 & =G_{Y}( z_{2} G_{B(\hat{\alpha })}( z_{1})) G_{\hat{\epsilon }_{X}}( z_{1})\\
 & =G_{Y}( z_{2}( 1-\hat{\alpha } +\hat{\alpha } z_{1})) G_{\hat{\epsilon }_{X}}( z_{1})\\
 & =e^{E[ Y]( z_{2}( 1-\hat{\alpha } +\hat{\alpha } z_{1}) -1)} e^{\hat{\mu }_{x}( z_{1} -1)}.
    \end{aligned}
    \end{equation}

    If these two models are equivalent, we have 
$G_{X,Y}( z_{1} ,z_{2})  =\hat{G}_{X,Y}( z_{1} ,z_{2})$, i.e.
    \begin{equation}
    \mu _{X}( z_{1}( 1-\alpha +\alpha z_{2}) -1) +\mu _{Y}( z_{2} -1) =E[ Y]( z_{2}( 1-\hat{\alpha } +\hat{\alpha } z_{1}) -1) +\hat{\mu }_{X}( z_{1} -1).
    \end{equation}

    As $Y$ is a root vertex in the reverse model, we have $\epsilon_Y\sim \text{Pois}(E[Y])=\text{Pois}(\alpha \mu _X +\mu _Y)$. Then we have:
\begin{equation}
    \mu _{X}( z_{1}( 1-\alpha +\alpha z_{2}) -1) +\mu _{Y}( z_{2} -1) =( \alpha \mu _{x} +\mu _{y})( z_{2}( 1-\hat{\alpha } +\hat{\alpha } z_{1}) -1) +\hat{\mu }_{X}( z_{1} -1).
\end{equation}

    Expanding the expression and simplifying, we obtain
\begin{equation}
    \begin{aligned}
     & \alpha \mu_{X} z_{1} z_{2} +\mu _{X}( 1-\alpha ) z_{1} +\mu_{Y} z_{2} -\mu _{X} -\mu _{Y}\\
     & =( \alpha \mu_{X} +\mu _{Y})\hat{\alpha } z_{1} z_{2} +\hat{\mu }_{X} z_{1} +( \alpha \mu _{X} +\mu _{Y})( 1-\hat{\alpha }) z_{2} -( \alpha \mu _{X} +\mu _{Y}) -\hat{\mu }_{X}
    \end{aligned}
    \end{equation}

    To ensure the equality holds, we equate the coefficients, resulting in following system of equations:
\begin{equation}
    \begin{aligned}
    \alpha \mu _{X} & =( \alpha \mu _{X} +\mu _{Y})\hat{\alpha }\\
    \mu _{X}( 1-\alpha ) & =\hat{\mu }_{X}\\
    \mu _{Y} & =( \alpha \mu _{X} +\mu _{Y})( 1-\hat{\alpha })\\
    \mu _{X} +\mu _{Y} & =( \alpha \mu _{X} +\mu _{Y}) +\hat{\mu }_{X},
    \end{aligned}
    \end{equation}
    where the solution of it is $\hat{\alpha } =\alpha \mu _{X} /( \alpha \mu _{X} +\mu _{Y}) ,\ \hat{\mu }_{X} =\mu _{X}( 1-\alpha )$. This completes the proof.
    
\end{proof}

\section{Proof of Theorem 2}
\begin{theorem}\label{th:lambda and cumulant root}
For any two vertex $i$ and $j$ where $i$ is root vertex, i.e., vertex $i$ has empty parent set, the 2D slice of joint cumulant $\mathcal{C}_{i,j}^{(n)}$ satisfies:
\begin{equation}\label{eq:cumulant_lambda_root}
\begin{aligned}
\mathcal{C}_{i,j}^{(n)} & =\sum\nolimits _{k=1}^{n-1}\!\!\!\!\sum\limits _{\overset{m_1 +\cdots +m_k =n-1}{m_l  >0}}\!\!\binom{n-1}{m_1 m_2 \cdots m_k} \Lambda_k^{i \leadsto j}( 1\!\circ\! X_i \!\leadsto\! X_j).
\end{aligned}
\end{equation}
where $\binom{n-1}{m_1 m_2 \cdots m_k} =\frac{(n-1)!}{m_1 !m_2 !\cdots m_k !}$ is the multinomial coefficients.
\end{theorem}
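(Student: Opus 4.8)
The plan is to reduce the joint cumulant $\mathcal{C}_{i,j}^{(n)}=\kappa(X_i,\underbrace{X_j,\dots,X_j}_{n-1})$ to a sum over directed paths by combining the root assumption, multilinearity, the reducibility of Theorem~\ref{th: reduce}, and a surjection-counting argument. First I would use that $i$ is a root vertex, so $X_i=\epsilon_i$, and recursively unfold the structural equations to write $X_j=\sum_{k=1}^{K}A_k^{i\leadsto j}\circ\epsilon_i+R_j$, where $K=|\mathbf{P}^{i\leadsto j}|$, the first sum collects the contributions that travel along the directed paths emanating from the common noise $\epsilon_i$, and $R_j$ gathers all contributions originating from the other (mutually independent) noise terms $\epsilon_m$, $m\neq i$. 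The key structural claim, which rests on the additivity-in-distribution of the thinning operator (each thinning is split according to the source population member, exactly as in the tree construction underlying Theorem~\ref{th: reduce}), is that the path terms $Y_k:=A_k^{i\leadsto j}\circ\epsilon_i$ involve only $\epsilon_i$ together with thinning variables that are independent of those appearing in $R_j$; consequently $R_j \perp \!\!\! \perp \{\epsilon_i,Y_1,\dots,Y_K\}$.

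Next I would expand $\mathcal{C}_{i,j}^{(n)}=\kappa(\epsilon_i,X_j,\dots,X_j)$ by the multilinearity of the cumulant, substituting $X_j=\sum_k Y_k+R_j$ into each of the $n-1$ copies of $X_j$. This produces one summand per assignment $f$ of a term to each slot, ranging over $\{Y_1,\dots,Y_K,R_j\}$. Every assignment that selects $R_j$ in at least one slot contributes a cumulant in which the block $\{R_j\}$ is independent of all remaining arguments, so that cumulant vanishes. Only assignments $f:\{1,\dots,n-1\}\to\{1,\dots,K\}$ survive, giving $\mathcal{C}_{i,j}^{(n)}=\sum_{f}\kappa(\epsilon_i,Y_{f(1)},\dots,Y_{f(n-1)})$.

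Then I would invoke reducibility (Theorem~\ref{th: reduce}): since the paths are distinct, any repeated occurrence of a given $Y_k$ collapses to a single occurrence, so $\kappa(\epsilon_i,Y_{f(1)},\dots,Y_{f(n-1)})$ depends only on the image set $S=\mathrm{image}(f)\subseteq\{1,\dots,K\}$ and equals $\kappa\bigl(\epsilon_i,\{A_l^{i\leadsto j}\circ\epsilon_i\}_{l\in S}\bigr)$; here $\epsilon_i=1\circ\epsilon_i$ remains a separate argument because the trivial path differs from every path of positive length. Grouping the assignments by their image $S$, the number of $f$ whose image is exactly a given $k$-element set is the number of surjections from an $(n-1)$-set onto a $k$-set, namely $\sum_{m_1+\cdots+m_k=n-1,\,m_l>0}\binom{n-1}{m_1\,m_2\cdots m_k}$. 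Summing the surviving cumulants over all $k$-subsets $S$ reassembles $\Lambda_k^{i\leadsto j}(1\circ\epsilon_i\leadsto X_j)$ by its very definition (Eq.~\ref{eq:lambda_root}), yielding the claimed formula; terms with $k>K$ are automatically zero under the convention $\Lambda_k\equiv 0$ for $k>|\mathbf{P}^{i\leadsto j}|$, and terms with $k>n-1$ drop out because no surjection exists.

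The main obstacle is the very first step: rigorously justifying the additive path decomposition of $X_j$ and, in particular, the independence of $R_j$ from the $\epsilon_i$-path terms $Y_k$. The thinning operator is not linear, and repeated thinnings along shared sub-paths create genuine dependences among the $Y_k$ themselves, so one must carefully track which thinning variables are shared and which are fresh. I expect to lean on the tree/MGF machinery already developed for Theorem~\ref{th: reduce} to make this bookkeeping precise; once the independence $R_j \perp \!\!\! \perp \{\epsilon_i,Y_1,\dots,Y_K\}$ is secured, the remaining steps (multilinear expansion, reducibility, and the surjection count) are routine combinatorics.
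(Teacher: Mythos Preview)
Your proposal is correct and follows essentially the same route as the paper's proof: expand $X_j$ along the directed paths from $\epsilon_i$, apply multilinearity, discard terms involving other noise sources by independence, invoke reducibility (Theorem~\ref{th: reduce}) to collapse repeated path arguments, and then count surjections from an $(n-1)$-set onto a $k$-set to obtain the multinomial coefficients multiplying $\Lambda_k^{i\leadsto j}$. The paper is in fact less explicit than you are about the independence of the remainder $R_j$ from the $\epsilon_i$-path terms---it simply writes the expansion with only the $\epsilon_i$-terms already in place---so the obstacle you flag receives no separate justification there.
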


\subsection{Proof}
For any two vertex $i$ and $j$, where $i$ is root vertex, let $\mathbf{P}^{i\leadsto j} =\left\{P_1^{i\leadsto j} ,P_2^{i\leadsto j} ,...,P_{|\mathbf{P}^{i\leadsto j} |}^{i\leadsto j}\right\}$ be the set of paths from vertex $i$ to $j$ with the corresponding set of sequences of coefficients$\mathbf{A}^{i\leadsto j} =\left\{A_1^{i\leadsto j} ,A_2^{i\leadsto j} ,...,A_{|\mathbf{P}^{i\leadsto j} |}^{i\leadsto j}\right\}$. According to the definition of $\mathcal{C}_{i,j}^{(n)}$, we have:

\begin{equation}\begin{aligned}
\mathcal{C}_{i,j}^{(n)} & =\kappa \left( X_i ,\underbrace{X_j ,\dotsc ,X_j}_{n-1\text{times}}\right) =\kappa \left( \epsilon _i ,\underbrace{X_j ,\dotsc ,X_j}_{n-1\text{times}}\right)
\end{aligned}\end{equation}

then we expand $X_j$ according to the structural equation of $X_j$:

\begin{equation}
\label{eq: Cij_expand}
\begin{aligned}
\mathcal{C}_{i,j}^{(n)} & =\kappa \Big( \epsilon _i ,\underbrace{\underbrace{A_1^{i\leadsto j} \circ \epsilon _i +\cdots +A_{|\mathbf{P}^{i\leadsto j} |}^{i\leadsto j} \circ \epsilon _i}_{|\mathbf{P}^{i\leadsto j} |\text{times}} ,\dotsc ,A_1^{i\leadsto j} \circ \epsilon _i +\cdots +A_{|\mathbf{P}^{i\leadsto j} |}^{i\leadsto j} \circ \epsilon _i}_{n-1\text{times}}\Big) .
\end{aligned}\end{equation}
By applying the multilinearity of cumulant, we obtain the following decomposition:
\begin{equation}\label{eq:cumulant_full_root}
\begin{aligned}
\mathcal{C}_{i,j}^{(n)} & =\sum _{l_1 =1}^{|\mathbf{P}^{i\leadsto j} |} \cdots \sum _{l_{n-1} =1}^{|\mathbf{P}^{i\leadsto j} |} \kappa \left( \epsilon _i ,A_{l_1}^{i\leadsto j} \circ \epsilon _i ,A_{l_2}^{i\leadsto j} \circ \epsilon _i ,\dotsc ,A_{l_{n-1}}^{i\leadsto j} \circ \epsilon _i\right) ,
\end{aligned}\end{equation}
which yields $|\mathbf{P}^{i\leadsto j} |\times ( n-1)$ cumulant term where each term correspondent to a different combination of coefficient $A_l^{i\leadsto j}$. 

To characterize the combinations of $A_l^{i\leadsto j}$ within each cumulant in Eq. \ref{eq: Cij_expand}, we can conceptualize that choose different $A_l^{i\leadsto j}$ into $n-1$ box from $|\mathbf{P}^{i\leadsto j} |$ number of different coefficient, , i.e., we can select a $A_l^{i\leadsto j}$ from $\mathbf{A}^{i\leadsto j} =\left\{A_1^{i\leadsto j} ,A_2^{i\leadsto j} ,...,A_{|\mathbf{P}^{i\leadsto j} |}^{i\leadsto j}\right\}$ for each position in the decomposed cumulant.

To establish the connection between the cumulant and the $k$-path summation, we first recall the definition of $\Lambda _k^{i\leadsto j} (1\circ X_i \leadsto X_j )$:
\begin{equation}
\label{eq: definition of lambda}
\begin{aligned}
\Lambda _k^{i\leadsto j} (1\circ \epsilon _i \leadsto X_j ) & =\sum _{1\leq l_1 < l_2 < ...< l_k \leq |\mathbf{P}^{i\leadsto j} |} \kappa (1\circ \epsilon _i ,A_{l_1}^{i\leadsto j} \circ \epsilon _i ,...,A_{l_k}^{i\leadsto j} \circ \epsilon _i ),
\end{aligned}
\end{equation}
which is the sum of cumulants and each cumulant involve $k$ distinct $A_l^{i\leadsto j}$. 

Note that due to the reducibility, the $\mathcal{C}_{i,j}^{(n)}$ can be reduced to several distinct cumulants. In particular, the $k$-path summation contains all the distinct cumulants of $\mathcal{C}_{i,j}^{(n)}$ which involve $k$ distinct $A_l^{i\leadsto j}$. Therefore, the connection between Eq. \ref{eq: definition of lambda} and Eq. \ref{eq:cumulant_full_root} can be formulated as how many numbers for each distinct $A_l^{i\leadsto j}$ occurs after the reducing.

For each distinct cumulant in $\Lambda _k^{i\leadsto j} (1\circ \epsilon _i \leadsto X_j )$, the number of occurrences is the same because each cumulant is constructed by $k$ different paths sharing the same property in term of number.

Thus, we only need to count the number of each distinct cumulant for some specific $k$ paths. Without loss of generality, consider the cumulant with $k$ path information: $\kappa (1\circ \epsilon _i ,A_1^{i\leadsto j} \circ \epsilon _i ,...,A_k^{i\leadsto j} \circ \epsilon _i )$. Since before the reducing step, there are $n-1$ positions for each $A$, and the count can be formulated by counting the number of ways to place $k$ distinguishable $A$ into $n-1$ indistinguishable boxes with replacement such that each ball must appear at least once. Such a number can be calculated by $\sum\limits _{\overset{m_1 +\cdots +m_k =n-1}{m_l  >0}}\binom{n-1}{m_1 m_2 \cdots m_k}$, which is the coefficient of $\Lambda _k^{i\leadsto j} (1\circ \epsilon _i \leadsto X_j )$. By combining each order of $k$, we can obtain the close-form solution of $\mathcal{C}_{i,j}^{(n)}$ in Eq. \ref{eq: Cij_expand}. This completes the proof.

\section{Proof of Theorem 3 and Theorem 4}

\begin{theorem}[Identifiability for root vertex]\label{thm:identifiability root}
    For any vertex $i$ and $j$, where $i$ is the root vertex in graph $G$, if $\mathcal{C}_{i,j}^{(3)}-\mathcal{C}_{i,j}^{(2)}\ne 0$, then $\mathcal{C}_{j,i}^{(3)}-\mathcal{C}_{j,i}^{(2)}=0$ and $X_i$ is the ancestor of $X_j$.
\end{theorem}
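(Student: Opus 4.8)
The plan is to read both cumulant differences off the path decomposition and then use reducibility for the reverse direction. First I would specialize the decomposition of Theorem~\ref{th:lambda and cumulant root} to the two relevant orders. Setting $n=2$ leaves only the term $k=1$, $m_1=1$, so $\mathcal{C}_{i,j}^{(2)}=\Lambda_1^{i\leadsto j}(1\circ\epsilon_i\leadsto X_j)$; setting $n=3$ gives the $k=1$ term ($m_1=2$) together with the $k=2$ term ($m_1=m_2=1$), whose multinomial coefficient is $\binom{2}{1\ 1}=2$, so $\mathcal{C}_{i,j}^{(3)}=\Lambda_1^{i\leadsto j}(1\circ\epsilon_i\leadsto X_j)+2\,\Lambda_2^{i\leadsto j}(1\circ\epsilon_i\leadsto X_j)$. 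Subtracting, $\mathcal{C}_{i,j}^{(3)}-\mathcal{C}_{i,j}^{(2)}=2\,\Lambda_2^{i\leadsto j}(1\circ\epsilon_i\leadsto X_j)$, a positive multiple of $\Lambda_2^{i\leadsto j}$; the constant is irrelevant to whether the difference vanishes.

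Next I would translate the hypothesis into a statement about paths. By its definition, $\Lambda_2^{i\leadsto j}$ is a sum over unordered pairs of \emph{distinct} directed paths from $i$ to $j$, and is set to $0$ whenever $2>|\mathbf{P}^{i\leadsto j}|$. Hence $\mathcal{C}_{i,j}^{(3)}-\mathcal{C}_{i,j}^{(2)}\ne0$ forces $|\mathbf{P}^{i\leadsto j}|\ge2$, so in particular there is at least one directed path from $i$ to $j$ and $X_i$ is an ancestor of $X_j$. This already yields the ancestral conclusion.

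For the reverse quantity I would argue directly rather than through Theorem~\ref{th:lambda and cumulant root}, whose first index must be a root whereas here it is $j$. Because $i$ is a root, $X_i=\epsilon_i$, so $\mathcal{C}_{j,i}^{(n)}=\kappa(X_j,\underbrace{\epsilon_i,\dots,\epsilon_i}_{n-1})$. Writing the structural unrolling of $X_j$ as $X_j=\sum_{k=1}^{|\mathbf{P}^{i\leadsto j}|}A_k^{i\leadsto j}\circ\epsilon_i+R_j$, where $R_j$ collects all contributions built from noises other than $\epsilon_i$ and is therefore independent of $\epsilon_i$, multilinearity in the first slot splits the cumulant into the path terms plus $\kappa(R_j,\epsilon_i,\dots,\epsilon_i)$; the latter vanishes since its arguments partition into the two independent groups $\{R_j\}$ and $\{\epsilon_i,\dots,\epsilon_i\}$. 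Thus $\mathcal{C}_{j,i}^{(n)}=\sum_{k}\kappa(A_k^{i\leadsto j}\circ\epsilon_i,\epsilon_i,\dots,\epsilon_i)$, and reducibility (Theorem~\ref{th: reduce}) collapses the $n-1$ repeated copies of $\epsilon_i$ to a single one, giving $\mathcal{C}_{j,i}^{(n)}=\sum_{k}\kappa(A_k^{i\leadsto j}\circ\epsilon_i,\epsilon_i)$ for every $n\ge2$. Being independent of $n$, this yields $\mathcal{C}_{j,i}^{(3)}=\mathcal{C}_{j,i}^{(2)}$, i.e. $\mathcal{C}_{j,i}^{(3)}-\mathcal{C}_{j,i}^{(2)}=0$ (in fact unconditionally, once $i$ is a root).

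The main obstacle I anticipate is justifying the path decomposition $X_j=\sum_k A_k^{i\leadsto j}\circ\epsilon_i+R_j$ rigorously: when distinct directed paths share edges, the relevant thinnings are intertwined, so one must appeal to the additivity of the binomial thinning operator (equivalently, the tree/MGF bookkeeping used in the proof of Theorem~\ref{th: reduce}) to confirm that the $\epsilon_i$-dependent part of $X_j$ is exactly the sum of the per-path thinned copies $A_k^{i\leadsto j}\circ\epsilon_i$ and that $R_j$ is genuinely independent of $\epsilon_i$. Once this bookkeeping is in place, the two remaining cumulant manipulations above are routine.
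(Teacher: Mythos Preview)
Your proposal is correct and follows essentially the same approach as the paper: specialize Theorem~\ref{th:lambda and cumulant root} to $n=2,3$ to read off $\mathcal{C}_{i,j}^{(3)}-\mathcal{C}_{i,j}^{(2)}=2\Lambda_2^{i\leadsto j}$ and infer $|\mathbf{P}^{i\leadsto j}|\ge2$, then use $X_i=\epsilon_i$ together with reducibility (Theorem~\ref{th: reduce}) to collapse the repeated $\epsilon_i$'s and get $\mathcal{C}_{j,i}^{(3)}=\mathcal{C}_{j,i}^{(2)}$. The only difference is cosmetic: the paper treats the reverse direction first and silently uses the same path decomposition of $X_j$ that you single out as the main bookkeeping point.
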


\begin{proof}
        For the reverse direction, since $X_i$ is a root vertex, we have:
    \begin{equation}
    \label{eq: Cji3-Cji2}
    \begin{aligned}
    \mathcal{C}_{j,i}^{(2)} & =\kappa ( X_j ,X_i) =\kappa \left(\sum _{l=1}^{|\mathbf{P}^{i\leadsto j} |} A_l^{i\leadsto j} \circ \epsilon _i ,\epsilon _i\right) ,\\
    \mathcal{C}_{j,i}^{(3)} & =\kappa ( X_j ,X_i ,X_i) =\kappa \left(\sum _{l=1}^{|\mathbf{P}^{i\leadsto j} |} A_l^{i\leadsto j} \circ \epsilon _i ,\epsilon _i ,\epsilon _i\right) .
    \end{aligned}\end{equation}
    Based on Theorem 1, Eq. \ref{eq: Cji3-Cji2} can be reduced as follow:
    \begin{equation}\begin{aligned}
    \mathcal{C}_{j,i}^{(3)} & =\kappa \left(\sum _{l=1}^{|\mathbf{P}^{i\leadsto j} |} A_l^{i\leadsto j} \circ \epsilon _i ,\epsilon _i ,\epsilon _i\right) =\kappa \left(\sum _{l=1}^{|\mathbf{P}^{i\leadsto j} |} A_l^{i\leadsto j} \circ \epsilon _i ,\epsilon _i\right) =\mathcal{C}_{j,i}^{(2)}
    \end{aligned}\end{equation}
    thus $ \mathcal{C}_{j,i}^{(3)} -\mathcal{C}_{j,i}^{(2)} =0$.
    
    For the causal direction, based on Theorem 2, we have:
    \begin{equation}\begin{aligned}
    \mathcal{C}_{i,j}^{(2)} & =\Lambda _1^{i\leadsto j} (X_i \leadsto X_j )\\
    \mathcal{C}_{i,j}^{(3)} & =\Lambda _1^{i\leadsto j} (X_i \leadsto X_j )+2\Lambda _2^{i\leadsto j} (X_i \leadsto X_j )
    \end{aligned}\end{equation}
    Then we have $
    \mathcal{C}_{i,j}^{(3)} -\mathcal{C}_{i,j}^{(2)} =2\Lambda _2^{i\leadsto j} (X_i \leadsto X_j )\neq 0
    $ which means that there are more than one path from $ i$ to $ j$, i.e. $|\mathbf{P}^{i\leadsto j} |\geq 2$. Therefore, $X_i$ is the ancestor of $X_j$. This completes the proof.
\end{proof}

\begin{theorem}[Graphical Implication of Identifiability for Root Vertex]
    For a pair of vertices $i$ and $j$ in graph $G$, if vertex $i$ is a root vertex and exists at least two directed paths from $i$ to $j$, i.e., $|\mathbf{P}^{i\leadsto j}|\geq 2$ then the causal order between $i$ and $ j$ is identifiable. 
\end{theorem}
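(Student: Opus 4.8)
The plan is to reduce this graphical statement to the algebraic identifiability criterion already established in Theorem \ref{thm:identifiability root}, by showing that the hypothesis $|\mathbf{P}^{i\leadsto j}|\geq 2$ forces the defining inequality $\mathcal{C}_{i,j}^{(3)}-\mathcal{C}_{i,j}^{(2)}\ne 0$. First I would invoke Theorem \ref{th:lambda and cumulant root} at orders $n=2$ and $n=3$. Because $i$ is a root vertex, the decomposition collapses to $\mathcal{C}_{i,j}^{(2)}=\Lambda_1^{i\leadsto j}(1\circ\epsilon_i\leadsto X_j)$ and $\mathcal{C}_{i,j}^{(3)}=\Lambda_1^{i\leadsto j}(1\circ\epsilon_i\leadsto X_j)+2\,\Lambda_2^{i\leadsto j}(1\circ\epsilon_i\leadsto X_j)$, exactly as in the proof of Theorem \ref{thm:identifiability root}. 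Subtracting gives $\mathcal{C}_{i,j}^{(3)}-\mathcal{C}_{i,j}^{(2)}=2\,\Lambda_2^{i\leadsto j}(1\circ\epsilon_i\leadsto X_j)$, so it suffices to prove that this second-order path summation is nonzero under the hypothesis.

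Next I would unfold the definition of the second-order path summation,
\begin{equation}
\Lambda_2^{i\leadsto j}(1\circ\epsilon_i\leadsto X_j)=\sum_{1\leq l_1<l_2\leq |\mathbf{P}^{i\leadsto j}|}\kappa\!\left(\epsilon_i,\,A_{l_1}^{i\leadsto j}\circ\epsilon_i,\,A_{l_2}^{i\leadsto j}\circ\epsilon_i\right),
\end{equation}
and observe that $|\mathbf{P}^{i\leadsto j}|\geq 2$ guarantees that this sum contains at least one term. The core of the argument is to show that every summand is strictly positive, so that no cancellation can occur. For this I would reuse the closed-form computation underlying Theorem \ref{th: reduce}: for a Poisson root $\epsilon_i\sim\text{Pois}(\mu_i)$, the mixed derivative of the cumulant-generating function evaluated at the origin equals $\mu_i$ times a product of the edge coefficients $\alpha$ traversed along the paths involved (the quantity denoted $\beta$ in that proof, obtained recursively through Lemma \ref{lem:derivative}, where each internal node contributes a factor $\alpha_{j\to k}$ and each leaf an $e^{t}$). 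Since the three sequences $1$, $A_{l_1}^{i\leadsto j}$, $A_{l_2}^{i\leadsto j}$ are pairwise distinct, $\mu_i>0$, and every coefficient $\alpha\in(0,1]$ is strictly positive, each such cumulant is strictly positive, whence $\Lambda_2^{i\leadsto j}(1\circ\epsilon_i\leadsto X_j)>0$.

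Finally, combining these steps yields $\mathcal{C}_{i,j}^{(3)}-\mathcal{C}_{i,j}^{(2)}=2\,\Lambda_2^{i\leadsto j}(1\circ\epsilon_i\leadsto X_j)>0$, so Theorem \ref{thm:identifiability root} applies and certifies that the reverse quantity $\mathcal{C}_{j,i}^{(3)}-\mathcal{C}_{j,i}^{(2)}$ vanishes and that $X_i$ is an ancestor of $X_j$; in particular the causal order between $i$ and $j$ is identifiable. The main obstacle is precisely the strict-positivity claim: a priori the summands of $\Lambda_2$ could interfere destructively, so I must appeal to the explicit positive sign of $\beta$ coming from the reducibility computation rather than to reducibility alone. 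A minor technical point I would state explicitly is the nondegeneracy assumption $\mu_i>0$ on the root noise, without which $X_i\equiv 0$ and all cumulants vanish trivially.
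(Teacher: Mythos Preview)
Your proof is correct and follows essentially the same route as the paper: both reduce to Theorem~\ref{thm:identifiability root} by showing that $|\mathbf{P}^{i\leadsto j}|\geq 2$ forces $\mathcal{C}_{i,j}^{(3)}-\mathcal{C}_{i,j}^{(2)}=2\Lambda_2^{i\leadsto j}(1\circ\epsilon_i\leadsto X_j)\neq 0$, the paper phrasing it contrapositively and you directly. Your argument is in fact more careful than the paper's, which simply asserts that $\Lambda_2=0$ ``indicates zero or one path'' without justifying---as you do via the explicit positive product form of $\beta$ from the proof of Theorem~\ref{th: reduce}---that the individual summands are strictly positive and hence cannot cancel.
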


\begin{proof}
Suppose the causal order between $i$ and $j$ can not be identified by Theorem \ref{thm:identifiability root}. Then there must be in the following cases: (i) $\mathcal{C}_{i,j}^{(3)}-\mathcal{C}_{i,j}^{(2)}=0$; (ii) $\mathcal{C}_{j,i}^{(3)}-\mathcal{C}_{j,i}^{(2)}\ne0$.

For (i), since $\Lambda _2^{i\leadsto j} (1\circ \epsilon _i \leadsto X_j )= \mathcal{C}_{i,j}^{(3)}-\mathcal{C}_{i,j}^{(2)}=0$ indicating that there exists zero or one path from $i$ to $j$ which contradict to the fact that $|\mathbf{P}^{i\leadsto j}|\geq 2$.

For (ii), $\mathcal{C}_{j,i}^{(3)}-\mathcal{C}_{j,i}^{(2)}\ne0$ is contradicted to Theorem \ref{thm:identifiability root} that  $\mathcal{C}_{j,i}^{(3)}-\mathcal{C}_{j,i}^{(2)}=0$ when $\mathcal{C}_{i,j}^{(3)}-\mathcal{C}_{i,j}^{(2)}\ne 0$. By combining these two cases, we complete the proof.
\end{proof}

\section{Proof of Theorem 5}
\begin{theorem}
    For any two vertex $i$ and $j$, the 2D slice of joint cumulant $\mathcal{C}_{i,j}^{(n)}$ satisfies:
\begin{equation}\label{eq:cumulant_lambda}
    \begin{aligned}
    \mathcal{C}_{i,j}^{(n)} & =\sum\nolimits _{k=1}^{n-1}\!\!\!\!\sum\limits _{\overset{m_1 +\cdots +m_k =n-1}{m_l  >0}}\!\!\binom{n-1}{m_1 m_2 \cdots m_k} \tilde{\Lambda}_k( X_i \!\leadsto\! X_j).
    \end{aligned}
    \end{equation}
    where $\binom{n-1}{m_1 m_2 \cdots m_k} =\frac{(n-1)!}{m_1 !m_2 !\cdots m_k !}$ is the multinomial coefficients.
\end{theorem}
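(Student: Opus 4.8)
The plan is to reduce Theorem 5 to the already-established root-vertex decomposition (Theorem 2) by linearizing over every noise component that $X_i$ and $X_j$ share. First I would write each variable in its \emph{path-expanded} form. Unrolling the structural equations of the PB-SCM and repeatedly using the additivity of the thinning operator over sums (each element of $U+V$ survives independently, so $\alpha\circ(U+V)=\alpha\circ U+\alpha\circ V$ with independent Bernoulli draws), every variable becomes a sum of thinned noises, one summand per (ancestor, directed path) pair:
\[
X_i=\sum_{m\in An(i)\cup\{i\}}\;\sum_{h=1}^{|\mathbf{P}^{m\leadsto i}|}A_h^{m\leadsto i}\circ\epsilon_m,\qquad X_j=\sum_{m'\in An(j)\cup\{j\}}\;\sum_{h'=1}^{|\mathbf{P}^{m'\leadsto j}|}A_{h'}^{m'\leadsto j}\circ\epsilon_{m'},
\]
where the trivial path $\mathbf{P}^{i\leadsto i}$ contributes $1\circ\epsilon_i=\epsilon_i$ at $m=i$. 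Establishing this as a genuine distributional identity, with the thinnings along shared sub-paths splitting into independent copies, is the content of this first step.

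Next I would substitute these expansions into $\mathcal{C}_{i,j}^{(n)}=\kappa(X_i,X_j,\dots,X_j)$ and expand by multilinearity, so that the cumulant becomes a sum over independent choices of one (noise, path) summand in each of the $n$ slots. The key simplification is that any joint cumulant whose arguments split into two groups driven by independent noises vanishes; since each chosen summand carries a single noise $\epsilon_m$, only the terms in which \emph{all} $n$ slots carry the same noise survive. That common noise must lie in $(An(i)\cup\{i\})\cap(An(j)\cup\{j\})=An(i,j)\cup\{i:i\in An(j)\}\cup\{j:j\in An(i)\}$, which is exactly the index set underlying $\tilde{\Lambda}_k$: the $m=i$ contribution is the first term $\Lambda_k^{i\leadsto j}(1\circ\epsilon_i\leadsto X_j)$ (nonzero only when directed paths from $i$ to $j$ exist), while the remaining $m\in An(i,j)\cup\{j\}$ supply the summation in the definition.

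Then, for each fixed common noise $\epsilon_m$ and each path $h$ from $m$ to $i$ feeding the first slot, the surviving sub-sum is structurally identical to the root-vertex setting of Theorem 2: the first argument is the fixed thinned noise $A_h^{m\leadsto i}\circ\epsilon_m$ and the remaining $n-1$ arguments range over the paths in $\mathbf{P}^{m\leadsto j}$. Because the proof of Theorem 2 only expands the $n-1$ trailing slots over the paths to $j$ and leaves the first argument fixed, it applies verbatim with $1\circ\epsilon_i$ replaced by $A_h^{m\leadsto i}\circ\epsilon_m$ (indeed the definition of $\Lambda_k^{i\leadsto j}(A\circ\epsilon_i\leadsto X_j)$ already allows an arbitrary first coefficient sequence $A$); invoking reducibility (Theorem 1) to collapse repeated path arguments and the same count of placements of $k$ distinguishable paths into $n-1$ nonempty slots, this sub-sum equals $\sum_{k=1}^{n-1}\sum_{m_1+\cdots+m_k=n-1,\,m_l>0}\binom{n-1}{m_1\cdots m_k}\Lambda_k^{m\leadsto j}(A_h^{m\leadsto i}\circ\epsilon_m\leadsto X_j)$. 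Aggregating over all common noises $m$ and all first-slot paths $h$ recombines these root-vertex contributions into $\tilde{\Lambda}_k(X_i\leadsto X_j)$, yielding the stated formula.

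The main obstacle I anticipate is the third step: justifying that restricting to a single shared noise reproduces the root-vertex case \emph{exactly}, with the bookkeeping that (i) distinct first-slot paths $h$ each spawn their own $\Lambda_k^{m\leadsto j}(A_h^{m\leadsto i}\circ\epsilon_m\leadsto X_j)$ term, and (ii) the multinomial counting is identical for every noise $m$, so it factors out cleanly and matches Theorem 2. Once the path-expansion identity of the first step and this per-noise reduction are in place, Theorem 5 is simply a linear aggregation of Theorem 2 over the common noises.
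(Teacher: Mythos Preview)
Your proposal is correct and follows essentially the same route as the paper: expand $X_i$ into its sum of path-thinned noises, use multilinearity in the first slot so that only terms indexed by a shared noise $\epsilon_m$ (with $m\in An(i,j)$ or $m=i$) and a path $h$ from $m$ to $i$ survive, then apply the Theorem~2 argument verbatim to each piece $\kappa(A_h^{m\leadsto i}\circ\epsilon_m,X_j,\dots,X_j)$ before recombining into $\tilde{\Lambda}_k$. The only cosmetic difference is that you also expand $X_j$ upfront, whereas the paper leaves $X_j$ intact and defers its expansion to the per-noise invocation of Theorem~2; the counting and the final aggregation are identical.
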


\begin{proof}
    Since $ i$ is not a root vertex, the structural equation of $ X_i$ is $ X_i =\sum _{m\in An( i)}\sum _{h=1}^{|P^{m\leadsto i} |} A_h^{m\leadsto i} \circ \epsilon _m +\epsilon _i$, where $ A_h^{m\leadsto i}$ is the sequence of coefficients corresponding to the $ h$-th path from $ m$, one of the ancestor of $ i$ ,to $ i$.

According the structural equation of $ X_i$, we have: 
\begin{equation}\begin{aligned}
\mathcal{C}_{i,j}^{(n)} & =\kappa \left( X_i ,\underbrace{X_j ,\dotsc ,X_j}_{n-1\text{times}}\right) =\kappa \left(\sum _{m\in An( i)}\sum _{h=1}^{|P^{m\leadsto i} |} A_h^{m\leadsto i} \circ \epsilon _m +\epsilon _i ,\underbrace{X_j ,\dotsc ,X_j}_{n-1\text{times}}\right)
\end{aligned}\end{equation}
then we decompose $ \mathcal{C}_{i,j}^{(n)}$ according to the structural equation of $ X_i$, we have:
\begin{equation}
\label{eq: Cij^n in T5}
\begin{aligned}
\mathcal{C}_{i,j}^{(n)} & =\sum _{m\in An( i)}\sum _{h=1}^{|P^{m\leadsto i} |} \kappa \left( A_h^{m\leadsto i} \circ \epsilon _m ,\underbrace{X_j ,\dotsc ,X_j}_{n-1\text{times}}\right) +\kappa \left( \epsilon _i ,\underbrace{X_j ,\dotsc ,X_j}_{n-1\text{times}}\right)\\
 & =\sum _{m\in An( i,j)}\sum _{h=1}^{|P^{m\leadsto i} |} \kappa \left( A_h^{m\leadsto i} \circ \epsilon _m ,\underbrace{X_j ,\dotsc ,X_j}_{n-1\text{times}}\right) +\kappa \left( \epsilon _i ,\underbrace{X_j ,\dotsc ,X_j}_{n-1\text{times}}\right)
\end{aligned}\end{equation}
As those cumulants involve independent noise components equal to zero, $ m$ is the common ancestor of $ i$ and $ j$ in the Eq. \ref{eq: Cij^n in T5}.

Now, we consider the decomposition of cumulant: (i) $ \kappa \left( A_h^{m\leadsto i} \circ \epsilon _m ,X_j ,\dotsc ,X_j\right)$ and (ii) $ \kappa ( \epsilon _i ,X_j ,\dotsc ,X_j)$. For (i), the term $ \kappa \left( A_h^{m\leadsto i} \circ \epsilon _m ,X_j ,\dotsc ,X_j\right)$ has the similar form as the cumulant we proved in Theorem 2, i.e.,
\begin{equation}\begin{aligned}
\kappa \left( A_h^{m\leadsto i} \circ \epsilon _m ,X_j ,\dotsc ,X_j\right) & =\sum _{l_1 =1}^{|\mathbf{P}^{m\leadsto j} |} \cdots \sum _{l_{n-1} =1}^{|\mathbf{P}^{m\leadsto j} |} \kappa \left( A_h^{m\leadsto i} \circ \epsilon _m ,A_{l_1}^{m\leadsto j} \circ \epsilon _m ,A_{l_2}^{m\leadsto j} \circ \epsilon _m ,\dotsc ,A_{l_{n-1}}^{m\leadsto j} \circ \epsilon _m\right)
\end{aligned}\end{equation}
where the only difference is the first noise component, which is $ A_h^{m\leadsto i} \circ \epsilon _m$ instead of $ \epsilon _m$. This variation does not impact the result in Theorem 2 leading to
\begin{equation}
\label{eq: copy T2 in T4 1}
\begin{aligned}
\kappa \left( A_h^{m\leadsto i} \circ \epsilon _m ,X_j ,\dotsc ,X_j\right) & =\sum _{k=1}^{n-1}\sum\limits _{\overset{m_1 +\cdots +m_k =n-1}{m_l  >0}}\binom{n-1}{m_1 m_2 \cdots m_k} \Lambda _k^{m\leadsto j} (A_h^{m\leadsto i} \circ \epsilon _m \leadsto X_j )
\end{aligned}\end{equation}
For (ii), $ \kappa ( \epsilon _i ,X_j ,\dotsc ,X_j)$ has the same form as the cumulant we proved in Theorem 2, we have:
\begin{equation}
\label{eq: copy T2 in T4 2}
\begin{aligned}
\kappa ( \epsilon _i ,X_j ,\dotsc ,X_j) & =\sum\nolimits _{k=1}^{n-1}\sum\limits _{\overset{m_1 +\cdots +m_k =n-1}{m_l  >0}}\binom{n-1}{m_1 m_2 \cdots m_k} \Lambda _k^{i\leadsto j} (1\circ X_i \leadsto X_j ).
\end{aligned}\end{equation}
Substituting Eq. \ref{eq: copy T2 in T4 1} and Eq. \ref{eq: copy T2 in T4 2} into Eq. \ref{eq: Cij^n in T5}, we have
\begin{equation}\label{eq:cumulant_mid}
\begin{aligned}
\mathcal{C}_{i,j}^{(n)} & =\sum _{m\in An( i,j)}\sum _{h=1}^{|P^{m\leadsto i} |}\sum _{k=1}^{n-1}\sum\limits _{\overset{m_1 +\cdots +m_k =n-1}{m_l  >0}}\binom{n-1}{m_1 m_2 \cdots m_k} \Lambda _k^{m\leadsto j} (A_h^{m\leadsto i} \circ \epsilon _m \leadsto X_j )\\
 & +\sum\nolimits _{k=1}^{n-1}\sum\limits _{\overset{m_1 +\cdots +m_k =n-1}{m_l  >0}}\binom{n-1}{m_1 m_2 \cdots m_k} \Lambda _k^{i\leadsto j} (1\circ X_i \leadsto X_j ).
\end{aligned}\end{equation}
By rewriting Eq. \ref{eq:cumulant_mid}, we have
\begin{equation}\begin{aligned}
\mathcal{C}_{i,j}^{(n)} & =\sum _{k=1}^{n-1}\sum\limits _{\overset{m_1 +\cdots +m_k =n-1}{m_l  >0}}\binom{n-1}{m_1 m_2 \cdots m_k}\left[\sum _{m\in An( i,j)}\sum _{h=1}^{|P^{m\leadsto i} |} \Lambda _k^{m\leadsto j} (A_h^{m\leadsto i} \circ \epsilon _m \leadsto X_j )+\Lambda _k^{i\leadsto j} (1\circ X_i \leadsto X_j )\right]\\
 & =\sum _{k=1}^{n-1}\sum\limits _{\overset{m_1 +\cdots +m_k =n-1}{m_l  >0}}\binom{n-1}{m_1 m_2 \cdots m_k} \tilde{\Lambda }_k (X_i \leadsto X_j ).
\end{aligned}\end{equation}
This completes the proof.
\end{proof}

\section{Proof of Theorem 6 and Theorem 7}
\begin{theorem}[Identification of PB-SCM]\label{thm:identification}
    If there exist $k\in \mathbb{Z}^+$ such that $\tilde{\Lambda}_k(X_i\leadsto X_j)\ne0$ and $\tilde{\Lambda}_k(X_j\leadsto X_i)=0$ for any two adjacency vertex $i$ and $j$, then $X_i$ is the ancestor of $X_j$
\end{theorem}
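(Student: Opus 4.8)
The plan is to argue by contradiction and reduce the statement to a purely combinatorial fact about path counts. Since $i$ and $j$ are adjacent, exactly one of $i\to j$ or $j\to i$ holds, so it suffices to rule out $j\to i$ under the stated hypothesis.

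First I would pin down the sign behaviour of $\tilde{\Lambda}_k$. By Theorem \ref{th: reduce} together with the explicit evaluation in its proof, every cumulant appearing in a term $\Lambda_k^{m\leadsto j}(A\circ\epsilon_m\leadsto X_j)$ reduces to the closed form $\mu_m\prod(\cdots)$, a product of path coefficients (all in $(0,1]$) times the Poisson mean $\mu_m>0$; hence each such cumulant is \emph{strictly positive}. Consequently $\tilde{\Lambda}_k(X_i\leadsto X_j)$ is a sum of nonnegative terms and vanishes if and only if it has no term at all. The crucial bookkeeping step is then to identify the contributing sources: a noise $\epsilon_m$ feeds into both $X_i$ and $X_j$ exactly when $m$ lies in the \emph{symmetric} set $S:=(\{i\}\cup An(i))\cap(\{j\}\cup An(j))$. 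The index set $\{i\}\cup An(i,j)\cup\{j\}$ appearing in the definition of $\tilde{\Lambda}_k$ produces a nonzero term only for sources in $S$, and such a term is nonzero precisely when $\Lambda_k^{m\leadsto j}\neq 0$, i.e. when $m$ has at least $k$ directed paths to $j$ (for $m=j$ this degenerates to $|\mathbf{P}^{j\leadsto j}|=1\ge k$, consistent with $\Lambda^{j\leadsto j}_1\ne0$, $\Lambda^{j\leadsto j}_{k>1}\equiv0$). This gives the characterization
\begin{equation*}
\tilde{\Lambda}_k(X_i\leadsto X_j)\neq0\iff \max_{m\in S}\,|\mathbf{P}^{m\leadsto j}|\ge k ,
\end{equation*}
and, since $S$ is symmetric in $i$ and $j$, the same set governs $\tilde{\Lambda}_k(X_j\leadsto X_i)$ with $|\mathbf{P}^{m\leadsto i}|$ in place of $|\mathbf{P}^{m\leadsto j}|$.

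Feeding the hypothesis into this characterization, $\tilde{\Lambda}_k(X_i\leadsto X_j)\neq0$ yields $\max_{m\in S}|\mathbf{P}^{m\leadsto j}|\ge k$, while $\tilde{\Lambda}_k(X_j\leadsto X_i)=0$ yields $\max_{m\in S}|\mathbf{P}^{m\leadsto i}|<k$, so that $\max_{m\in S}|\mathbf{P}^{m\leadsto i}|<\max_{m\in S}|\mathbf{P}^{m\leadsto j}|$. Now suppose toward a contradiction that $j\to i$. In the DAG $G$, every directed path $P:m\leadsto j$ extends by the edge $j\to i$ to a directed path $m\leadsto j\to i$; acyclicity forbids $i$ from lying on $P$ (it would create the cycle $i\leadsto j\to i$), so the extension is a valid simple path, and distinct $P$ give distinct extensions. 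This defines an injection $\mathbf{P}^{m\leadsto j}\hookrightarrow\mathbf{P}^{m\leadsto i}$ for every $m$, whence $|\mathbf{P}^{m\leadsto i}|\ge|\mathbf{P}^{m\leadsto j}|$ for all $m\in S$ and in particular $\max_{m\in S}|\mathbf{P}^{m\leadsto i}|\ge\max_{m\in S}|\mathbf{P}^{m\leadsto j}|$, contradicting the strict inequality above. Hence $j\to i$ is impossible and $X_i$ must be the parent of $X_j$.

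I expect the delicate part to be the positivity / no-cancellation claim, since the whole argument rests on reading $\tilde{\Lambda}_k\neq0$ as a path-count threshold rather than an accidentally nonzero sum; this requires the strict positivity of each reduced cumulant (equivalently $\mu_m>0$ for the relevant noises) and a careful check that the effective summation index set of $\tilde{\Lambda}_k$ really is the symmetric common-source set $S$. Once these are in hand, the path-extension injection and the resulting contradiction are routine.
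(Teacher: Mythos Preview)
Your argument is correct and rests on the same contradiction-via-path-counting idea as the paper: assume $j\to i$, observe that every directed path $m\leadsto j$ extends by the edge $j\to i$ to a path $m\leadsto i$, and derive a contradiction between the two $\tilde{\Lambda}_k$ conditions. The paper's own proof is considerably terser: it splits into the cases ``$i$ is a root'' and ``$i$ is not a root'' and in the non-root case simply asserts that $\tilde{\Lambda}_k(X_i\leadsto X_j)\neq 0$ forces some common ancestor to have $k$ paths to $X_i$, which contradicts $\tilde{\Lambda}_k(X_j\leadsto X_i)=0$.

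Your write-up is actually more careful than the paper's in two respects. First, you make the no-cancellation step explicit: the paper silently uses that $\tilde{\Lambda}_k$ is a sum of strictly positive cumulants (so that vanishing is equivalent to having no terms), whereas you trace this back to the closed form $\beta=\mu_m\prod\alpha>0$ coming out of the proof of Theorem~\ref{th: reduce}. Second, by introducing the symmetric source set $S=(\{i\}\cup An(i))\cap(\{j\}\cup An(j))$ and showing it governs both $\tilde{\Lambda}_k(X_i\leadsto X_j)$ and $\tilde{\Lambda}_k(X_j\leadsto X_i)$, you obtain a clean characterization $\tilde{\Lambda}_k\neq 0\iff\max_{m\in S}|\mathbf{P}^{m\leadsto\cdot}|\ge k$ that handles the root and non-root cases uniformly; the paper's case split is then unnecessary. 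The only cosmetic point is that in the general (tree) case the reduced cumulant equals $\mu_m$ times a product over \emph{tree} edges rather than a single path, but this is still strictly positive, so your positivity claim stands.
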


\begin{proof}
    For the case that $X_i$ is a root vertex. Suppose $X_i$ is not the ancestor of $X_j$, then $X_i$ and $X_j$ are independent since $X_i$ is the root vertex and $X_j$ is not the ancestor of $X_i$. In this case, the $\tilde{\Lambda}_k(X_i\leadsto X_j)=0$ for each $k$ since $X_i$ and $X_j$ are independent.
    
    For the case that $X_i$ is not a root vertex. Suppose $X_i$ is not the ancestor of $X_j$, then there must exist $k$ path from the common ancestor to $X_i$ since $\tilde{\Lambda}_k(X_i\leadsto X_j)\ne0$. 
    However, this contradicts the condition $\tilde{\Lambda }_k (X_j \leadsto X_i )=0$ as it indicates that there not exist $k$ paths from the common ancestor to $X_i$. Hence, we conclude that $X_i$ is the ancestor of $X_j$.
\end{proof}

\begin{theorem}[Graphical Implication of Identifiability]\label{th:graphical criteria 2}
    For a pair of vertices $i$ and $j$, if $i$ is an ancestor of $j$. The causal order of $i,j$ is identifiable by Theorem \ref{thm:identification}, if (i) vertex $i$ is a root vertex and $|\mathbf{P}^{i\leadsto j}|\geq 2$; or (ii) there exists a common ancestor $ {k\in \underset{l}{\arg\max}\left\{|\mathbf{P}^{l\leadsto i} ||l\in An( i,j)\right\}}$ has a directed path from $k$ to $j$ without passing $i$ in $G$. 
\end{theorem}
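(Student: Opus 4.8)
The plan is to turn the whole statement into a counting problem about directed paths, exploiting Theorem \ref{thm:identification}: to orient $i\to j$ it suffices to exhibit a \emph{single} order $k$ with $\tilde{\Lambda}_k(X_i\leadsto X_j)\neq 0$ and $\tilde{\Lambda}_k(X_j\leadsto X_i)=0$. The decisive enabling fact is that the reduced cumulants are strictly positive: by the explicit evaluation underlying Theorem \ref{th: reduce}, each term $\kappa(A_1\circ\epsilon,\dots,A_q\circ\epsilon)$ equals $\mu$ times a product of the (positive) edge coefficients along the relevant paths, hence is $>0$. Consequently every root-vertex summation $\Lambda_k^{m\leadsto t}(\cdot\leadsto X_t)$ is a sum of strictly positive cumulants, so it is $>0$ exactly when it is a nonempty sum, i.e. when $|\mathbf{P}^{m\leadsto t}|\ge k$, and $0$ otherwise. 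Therefore $\tilde{\Lambda}_k(X_i\leadsto X_j)$, a sum of such nonnegative contributions over its sources, is $>0$ iff \emph{some} source still has at least $k$ paths to the target and $=0$ iff \emph{every} source has fewer than $k$; no accidental cancellation can occur, and the claim becomes exact path counting.

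First I would record two structural facts about the oriented edge $i\to j$ (the pair is adjacent in the context of Theorem \ref{thm:identification}). Every ancestor of $i$ is an ancestor of $j$ by concatenation with the edge $i\to j$, so $An(i,j)=An(i)$; and each directed path $m\leadsto i$ extends by that edge to a distinct directed path $m\leadsto j$, giving $|\mathbf{P}^{m\leadsto j}|\ge|\mathbf{P}^{m\leadsto i}|$ for every common ancestor $m$. Reading off the reverse summation $\tilde{\Lambda}_k(X_j\leadsto X_i)$, its sources are $j$ (no path to $i$, contributing $0$), the vertex $i$ itself (whose self-term $\Lambda_k^{i\leadsto i}$ is nonzero only at $k=1$), and the common ancestors $m$ (contributing only when $k\le|\mathbf{P}^{m\leadsto i}|$). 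Setting $r:=\max_{m\in An(i,j)}|\mathbf{P}^{m\leadsto i}|$, with $r=0$ precisely when $i$ is a root, we conclude $\tilde{\Lambda}_k(X_j\leadsto X_i)=0$ for every $k>\max(1,r)$.

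For case (i), $i$ is a root, so $r=0$ and the reverse side vanishes for all $k\ge 2$; taking $k=2$, the forward side equals $\Lambda_2^{i\leadsto j}(1\circ\epsilon_i\leadsto X_j)>0$ because $|\mathbf{P}^{i\leadsto j}|\ge 2$, and Theorem \ref{thm:identification} orients $i\to j$. For case (ii), let $c$ be the designated common ancestor attaining $r=|\mathbf{P}^{c\leadsto i}|\ge 1$. The $r$ paths $c\leadsto i\to j$ are distinct directed paths from $c$ to $j$ that pass through $i$, while the hypothesis supplies at least one directed path $c\leadsto j$ avoiding $i$; being disjoint from the former, it yields $|\mathbf{P}^{c\leadsto j}|\ge r+1$. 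I then choose $k=r+1$: since $r+1>\max(1,r)$ the reverse side is $0$ by the previous paragraph, while on the forward side the contribution of source $c$, namely $\Lambda_{r+1}^{c\leadsto j}(A_h^{c\leadsto i}\circ\epsilon_c\leadsto X_j)$, is a nonempty sum of positive cumulants because $|\mathbf{P}^{c\leadsto j}|\ge r+1$, so $\tilde{\Lambda}_{r+1}(X_i\leadsto X_j)>0$; Theorem \ref{thm:identification} again orients $i\to j$.

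The step I expect to demand the most care is the combinatorial inequality $|\mathbf{P}^{c\leadsto j}|\ge r+1$: it is exactly the ``without passing $i$'' clause that supplies one path beyond the $r$ forced through $i$, and dropping it collapses the separation between the two directions at order $r+1$, which is why the clause cannot be removed. The second delicate point is the bookkeeping of the self-term $\Lambda_k^{i\leadsto i}$ on the reverse side: it survives only at $k=1$, and it is precisely this term that forces us to work at order $\ge 2$ and explains why $k=1$ (the covariance) can never separate the two directions. Once these two points are handled, positivity does the rest and no faithfulness or genericity assumption on the coefficients is needed.
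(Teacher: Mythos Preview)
Your proposal is correct and follows the same strategy as the paper's proof: reduce identifiability to path counting, show the reverse summation $\tilde{\Lambda}_k(X_j\leadsto X_i)$ dies once $k$ exceeds $\max_{m}|\mathbf{P}^{m\leadsto i}|$, and keep the forward summation alive at that order using the extra $c\leadsto j$ path that avoids $i$. The differences are minor but worth naming. You take $k=r+1$ while the paper takes $k=|\mathbf{P}^{c\leadsto j}|$; both lie in the admissible window $(\max(1,r),\,|\mathbf{P}^{c\leadsto j}|\,]$, so either works. More substantively, you make explicit two points the paper's argument leaves unsaid: the strict positivity of every reduced cumulant $\kappa(A_1\circ\epsilon,\dots,A_q\circ\epsilon)$ (so the nonempty forward sum cannot accidentally cancel to zero), and the bookkeeping of the self-term $\Lambda_k^{i\leadsto i}$ on the reverse side (forcing $k\ge 2$). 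These are exactly the spots where the paper's proof is tacit, so your write-up is in fact the more complete of the two.
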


\begin{proof}
    (i) If vertex $i$ is a root vertex and $|\mathbf{P}^{i\leadsto j} |\geq 2$, we have
\begin{equation}\begin{aligned}
\tilde{\Lambda }_2 (X_i \leadsto X_j ) & =\Lambda _2^{i\leadsto j} (1\circ \epsilon _i \leadsto X_j )\neq 0\\
\tilde{\Lambda }_2 (X_j \leadsto X_i ) & =\sum _{h=1}^{|\mathbf{P}^{j\leadsto i} |} \Lambda _2^{j\leadsto i}\left( A_h^{j\leadsto i} \circ \epsilon _i \leadsto X_i\right) =0
\end{aligned}\end{equation}
Since there are $ \geqslant 2$ paths from $ i$ to $ j$ and no two paths from $ i$ to $ i$ (from noise component of $ i$ to $ i$), we have $ \tilde{\Lambda }_2 (X_i \leadsto X_j )\neq 0$ and $ \tilde{\Lambda }_2 (X_j \leadsto X_i )=0$. Based on Theorem 6, $i$ is the ancestor of $j$.

(ii) According to the acyclic constraints, the number of paths from their common ancestors to $ j$ is either equal or more than the number of paths from their common ancestors to $ i$, since $i$ is an ancestor of $j$ and those paths to $ i$ must reach $ j$.

If there exists a common ancestor $k\in \underset{l}{\arg\max}\left\{|\mathbf{P}^{l\leadsto i} ||l\in An(i,j)\right\}$ has a directed path from $k$ to $j$ without passing $i$ in $G$, it implies that the number of paths from $ k$ to $ j$ greater than that to $ i$,  Consequently, there must exist a value $ n=|\mathbf{P}^{k\leadsto j} |$ such that $ \tilde{\Lambda }_n (X_i \leadsto X_j )\neq 0$ and $ \tilde{\Lambda }_n (X_j \leadsto X_i )=0$ , and the causal order of $i,j$ is identifiable based on Theorem 6.
\end{proof}

\section{Proof of Theorem 8}

\begin{theorem}
    Let $G_{X_i |X_{Pa( i)}}\!( s)$ be the PGF of random variable $X_i$ given its parents variable $X_{Pa( i)}$, we have:
    \begin{equation}
    \label{eq: pgf}
    \begin{aligned}
     & P( X_i =k|X_{Pa( i)} =x_{Pa( i)})=\frac{1}{k!}\frac{\partial ^k G_{X_i |X_{Pa( i)}}( s)}{( \partial s)^k}\Bigl|_{s=0}\\
     & =\!\!\sum _{
        t_i +\!\!\sum\limits_{j\in Pa( i)}\!\! t_j =k
        }\!\! \frac{\mu _i^{t_i}\exp( -\mu _i)}{t_i !}\!\!\prod\limits _{j\in Pa(i)}\! \!\!\frac{( x_j)_{t_j}\alpha _{j,i}^{t_j} (1-\alpha _{j,i} )^{x_j -t_j}}{t_j !},
    \end{aligned}
    \end{equation}
where $t_j \leq x_j$, $ ( x_j)_{t_j} \!=\!\frac{x_j !}{( x_j -t_j) !}$ is the falling factorial, $\mu _i \!=\!E[ \epsilon _i]$, and $\epsilon _i$ is the noise component of $X_i$.
\end{theorem}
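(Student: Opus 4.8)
The plan is to exploit the fact that, conditional on the parents, $X_i$ is a sum of mutually independent random variables, so that its PGF factorizes and the stated formula can be read off as the coefficient of $s^k$. First I would observe that, given $X_{Pa(i)}=x_{Pa(i)}$, the structural equation $X_i=\sum_{j\in Pa(i)}\alpha_{j,i}\circ X_j+\epsilon_i$ expresses $X_i$ as the independent sum of the pieces $\{\alpha_{j,i}\circ x_j\}_{j\in Pa(i)}$ and the noise $\epsilon_i$. By the definition of the thinning operator, $\alpha_{j,i}\circ x_j=\sum_{n=1}^{x_j}\xi_n^{(\alpha_{j,i})}$ with i.i.d.\ $\mathrm{Bern}(\alpha_{j,i})$ summands, so conditionally it is distributed as $\mathrm{Binomial}(x_j,\alpha_{j,i})$, and the Bernoulli trials of distinct parents together with $\epsilon_i$ remain mutually independent. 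Hence the conditional PGF factorizes into the product of the individual PGFs,
\[ G_{X_i|X_{Pa(i)}}(s)=G_{\epsilon_i}(s)\prod_{j\in Pa(i)}G_{\alpha_{j,i}\circ x_j}(s)=e^{\mu_i(s-1)}\prod_{j\in Pa(i)}(1-\alpha_{j,i}+\alpha_{j,i}s)^{x_j}, \]
using that $\mathrm{Pois}(\mu_i)$ has PGF $e^{\mu_i(s-1)}$ and $\mathrm{Binomial}(x_j,\alpha_{j,i})$ has PGF $(1-\alpha_{j,i}+\alpha_{j,i}s)^{x_j}$.

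Next I would recover the conditional PMF from this PGF. Since $G_{X_i|X_{Pa(i)}}(s)=\sum_{k\ge0}P(X_i=k\mid X_{Pa(i)}=x_{Pa(i)})\,s^k$ is the ordinary generating function of the PMF, the probability is exactly the coefficient of $s^k$, equivalently $\frac{1}{k!}$ times the $k$-th $s$-derivative of $G_{X_i|X_{Pa(i)}}$ evaluated at $s=0$; this is the first equality in the statement. To obtain the closed form I would extract that coefficient from the factorized PGF: expanding $e^{\mu_i(s-1)}=e^{-\mu_i}\sum_{t_i\ge0}\frac{(\mu_i s)^{t_i}}{t_i!}$ via the exponential series and each factor $(1-\alpha_{j,i}+\alpha_{j,i}s)^{x_j}=\sum_{t_j=0}^{x_j}\binom{x_j}{t_j}\alpha_{j,i}^{t_j}(1-\alpha_{j,i})^{x_j-t_j}s^{t_j}$ via the binomial theorem, the coefficient of $s^k$ in the product is the Cauchy-product sum, over all nonnegative $t_i,\{t_j\}$ with $t_i+\sum_{j\in Pa(i)}t_j=k$, of the product of the corresponding coefficients. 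Rewriting $\binom{x_j}{t_j}=(x_j)_{t_j}/t_j!$ with the falling factorial yields precisely the claimed expression, with the bound $t_j\le x_j$ inherited from the binomial support.

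The computation is essentially routine once the factorization is in hand, so the one point that genuinely needs care is the independence justification in the first step: I must verify that conditioning on $X_{Pa(i)}$ leaves the thinning trials of distinct parents, and the noise $\epsilon_i$, mutually independent, which is exactly what licenses multiplying the PGFs. The remaining work—the exponential and binomial expansions and collecting the $s^k$ coefficient—is direct bookkeeping whose only subtlety is tracking the index constraint $t_i+\sum_{j}t_j=k$.
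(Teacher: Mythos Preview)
Your proposal is correct and follows essentially the same route as the paper: factorize the conditional PGF into the Poisson and binomial pieces and then extract the coefficient of $s^k$. The only cosmetic difference is that the paper extracts the coefficient via the generalized Leibniz rule for the $k$-th derivative of a product, whereas you expand each factor as a power series and take the Cauchy product; these are two equivalent bookkeeping devices for the same computation.
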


\begin{proof}
    For probability mass function $ P(X_i |X_{Pa(i)} )$, which can be decomposed as follow:
\begin{equation}\begin{aligned}
P(X_i |X_{Pa(i)} ) & =P( \epsilon _i)\prod _{j\in Pa(i)} P( \alpha _{j,i} \circ X_j |X_j) .
\end{aligned}\end{equation}
Let $G_{\epsilon _i} (s)$ represent the probability generating function (PGF) of $ P( \epsilon _i)$, which is the noise component of $ X_i$, and $G_{\alpha _{j,i} \circ X_j |X_j} (s)$ denote the PGF of $ P( \alpha _{j,i} \circ X_j |X_j)$, we have:
\begin{equation}\begin{aligned}
G_{X_i |X_{Pa(i)}} (s)=G_{\epsilon _i} (s)\prod _{j\in Pa(i)} G_{\alpha _{j,i} \circ X_j |X_j} (s),
\end{aligned}\end{equation}
where $G_{\epsilon _i} (s)=\exp [\mu _i (s-1)]$ and $G_{\alpha _{j,i} \circ X_j |X_j} (s)=(1-\alpha _{j,i} +\alpha _{j,i} s)^{X_j}$.

According to the property of PGF, we can calculate the probability mass function by taking derivatives of $ G_{X_i |X_{Pa(i)}} (s)$, and the derivative is expressed as:
\begin{equation}\begin{aligned}
\frac{\partial ^k G_{X_i |X_{Pa(i)}} (s)}{(\partial s)^k} & =\frac{\partial ^k\left( G_{\epsilon _i} (s)\prod _{j\in Pa(i)} G_{\alpha _{j,i} \circ X_j |X_j} (s)\right)}{(\partial s)^k}
\end{aligned}\end{equation}
According to the product rule of higher derivatives, i.e. 
\begin{equation}\begin{aligned}
\left(\prod _{i=1}^n f_i\right)^{(k)} & =\sum _{t_1 +t_2 +\cdots +t_n =k}\binom{k}{t_1 ,t_2 ,\dotsc ,t_n}\prod _{i=1}^n f_i^{(t_i )}\\
 & =\sum _{t_1 +t_2 +\cdots +t_n =k}\frac{k!}{t_1 !t_2 !\cdots t_n !}\prod _{i=1}^n f_i^{(t_i )} =k!\sum _{t_1 +t_2 +\cdots +t_n =k}\prod _{i=1}^n\frac{f_i^{(t_i )}}{t_i !} ,
\end{aligned}\end{equation}
we have:
\begin{equation}\begin{aligned}
\frac{\partial ^k G_{X_i |X_{Pa(i)}} (s)}{(\partial s)^k} & =k!\sum _{t_i +\sum\limits _{j\in Pa(i)} t_j =k}\frac{G_{\epsilon _i}^{( t_i)} (s)}{t_i !}\prod _{j\in Pa(i)}\frac{G_{\alpha _{j,i} \circ X_j |X_j}^{( t_j)} (s)}{t_j !} .
\end{aligned}\end{equation}
Furthermore, we have
\begin{equation}\begin{aligned}
G_{\epsilon _i}^{( t_i)} (s) & =\mu _i^{t_i}\exp (\mu _i( s-1) ),\\
G_{\alpha _{j,i} \circ X_j |X_j}^{( t_j)} (s) & =\begin{cases}
(X_j )_{t_j} \alpha _{j,i}^{t_j} (1-\alpha _{j,i} )^{X_j -t_j} & t_j \leqslant X_j\\
0 & t_j  >X_j
\end{cases}.
\end{aligned}\end{equation}
Given $ X_i =x_i$, $ X_{Pa(i)} =x_{Pa(i)}$, along with the model parameter $\Theta =\left\{\mathbf{A} =[\alpha _{i,j} ]\in [0,1]^{|V|\times |V|} ,\boldsymbol{\mu } =[\mu _i ]\in \mathbb{R}_{\geq 0}^{|V|}\right\}$, we can compute the probability mass function $ P(X_i |X_{Pa(i)} )$ as follow:

\begin{equation}
\label{eq: solution of likelihood}
\begin{aligned}
P(X_i =k|X_{Pa(i)} =x_{Pa(i)} ) & =\frac{1}{k!}\frac{\partial ^k G_{X_i |X_{Pa(i)} =x_{Pa(i)}} (s)}{(\partial s)^k}\Bigl|_{s=0}\\
 & =\frac{1}{k!}\sum _{t_i +\sum\limits _{j\in Pa(i)} t_j =k}\frac{G_{\epsilon _i}^{( t_i)} (0)}{t_i !}\prod _{j\in Pa(i)}\frac{G_{\alpha _{j,i} \circ X_j |X_j =x_j}^{( t_j)} (0)}{t_j !}\\
 & =\frac{1}{k!}\sum _{t_i +\sum\limits _{j\in Pa(i)} t_j =k}\frac{G_{\epsilon _i}^{( t_i)} (0)}{t_i !}\prod _{j\in Pa(i)}\frac{G_{\alpha _{j,i} \circ X_j |X_j =x_j}^{( t_j)} (0)}{t_j !}\\
 & =\sum _{t_i +\sum\limits _{j\in Pa(i)} t_j =k,}\frac{\mu _i^{t_i}\exp (-\mu _i )}{t_i !}\prod\limits _{j\in Pa(i)}\frac{(x_j )_{t_j} \alpha _{j,i}^{t_j} (1-\alpha _{j,i} )^{x_j -t_j}}{t_j !},
\end{aligned}\end{equation}
where $ t_j \leqslant x_j$. This completes the proof.
\end{proof}

\subsection{Accelerating Likelihood Computation Using FFT}
To compute the likelihood function, we have to calculate the Eq. \ref{eq: solution of likelihood}. However, this task remains computationally intensive due to the numerous parameter combinations satisfying the specific summation condition $ t_i +\sum\limits _{j\in Pa(i)} t_j =k$ and $ t_j \leqslant x_j$.

To address this issue, we show that the likelihood in Eq. \ref{eq: pgf} can be formulated as the problem of obtaining the coefficient of a polynomial product.

Specifically, the production of polynomials can be constructed as follows:
\begin{equation}
\label{eq: production of polynomial}
\begin{aligned}
    F( y) & =\left(\frac{\mu _i^0\exp (-\mu _i )}{0!} +\frac{\mu _i^1\exp (-\mu _i )}{1!} y+\cdots +\frac{\mu _i^k\exp (-\mu _i )}{k!} y^k\right)\\
     & \ \ \ \ \ \times \prod\limits _{j\in Pa(i)}\left(\frac{(x_j )_0 \alpha _{j,i} (1-\alpha _{j,i} )^{x_j}}{0!} +\frac{(x_j )_1 \alpha _{j,i} (1-\alpha _{j,i} )^{x_j -1}}{1!} y+\cdots +\frac{(x_j )_{x_j} \alpha _{j,i} (1-\alpha _{j,i} )^{x_j -x_j}}{x_j !} y^{x_j}\right)
\end{aligned}
\end{equation}

Then the likelihood in Eq. \ref{eq: solution of likelihood} is exactly the coefficient of $y^k$ after the production. To obtain the coefficient of such production, we can employ the Fast Fourier Transform (FFT). In detail, we can create a series of vectors of the coefficient of each polynomial in \ref{eq: production of polynomial}, and pad the list with 0 since the highest power of $ x$ is $ k\times |Pa( i) |$:
\begin{equation}
\label{}
    \begin{aligned}
\mathbf{a}_0 & =\left[\frac{\mu _i^0\exp (-\mu _i )}{0!} ,\frac{\mu _i^1\exp (-\mu _i )}{1!} ,...,\frac{\mu _i^k\exp (-\mu _i )}{k!} ,\underbrace{0,...,0}_{k\times |Pa( i) |-k+1\text{ times}}\right]\\
\mathbf{a}_{j_p} & =\left[\frac{(x_{j_p} )_0 \alpha _{j_p ,i} (1-\alpha _{j_p ,i} )^{x_{j_p}}}{0!} ,\frac{(x_{j_p} )_1 \alpha _{j_p ,i} (1-\alpha _{j_p ,i} )^{x_{j_p} -1}}{1!} ,...,\frac{(x_{j_p} )_{x_{j_p}} \alpha _{j_p ,i} (1-\alpha _{j_p ,i} )^{x_{j_p} -x_{j_p}}}{x_{j_p} !} ,\underbrace{0,...,0}_{k\times |Pa( i) |-x_{j_p} +1\text{ times}}\right]
\end{aligned}
\end{equation}
where $j_p\in Pa(i)$ and $p=1,2,...,|Pa(i)|$. Then the coefficient vector of the expansion of Eq. \ref{eq: production of polynomial} is given by:
\begin{equation}\begin{aligned}
\hat{\mathbf{a}} & =\text{IFFT}\left(\text{FFT}(\mathbf{a}_0) \odot \text{FFT}(\mathbf{a}_{j_1}) \odot \cdots \odot \text{FFT}(\mathbf{a}_{j_{|Pa( i) |}})\right)
\end{aligned}\end{equation}
Here, $ \odot $ is the element-wise multiplication, $ \text{FFT}( \cdot )$ and $ \text{IFFT}( \cdot )$ is the implementation of fast Fourier transform and Inverse fast Fourier transform respectively. Consequently, the $ k+1$-th element in the vector $ \hat{\mathbf{a}}$ is the coefficient of $ y^k$ in the expansion of Eq. \ref{eq: production of polynomial}, which is the likelihood given Eq. \ref{eq: solution of likelihood}.

\section{Additional Experiments}
The main paper has shown the F1 scores and other baselines in synthetic data experiments. Here, we further provide the Precision, Recall, and Structural Hamming Distance (SHD) in these experiments, as shown in Fig. \ref{fig:sensitivity Precision}, Fig. \ref{fig:sensitivity Recall} and Fig. \ref{fig:sensitivity SHD}. 

\begin{figure*}[h]
	\centering
	\subfigure[Sensitivity to Avg. Indegree Rate]{
	\includegraphics[width=0.31\textwidth]{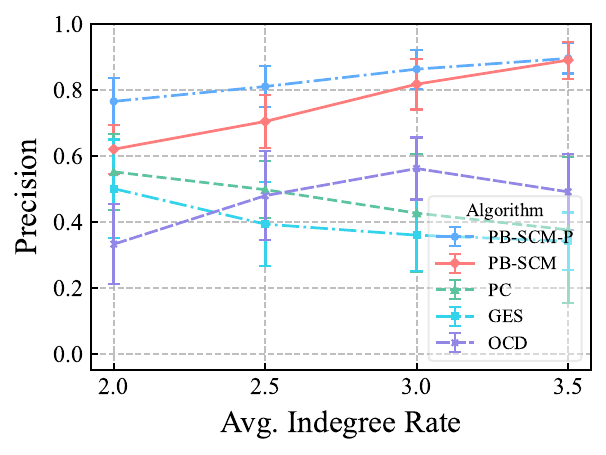}
	\label{fig:sensitivity: Indegree Precision}
}
	\subfigure[Sensitivity to Number of vertices]{
		\includegraphics[width=0.31\textwidth]{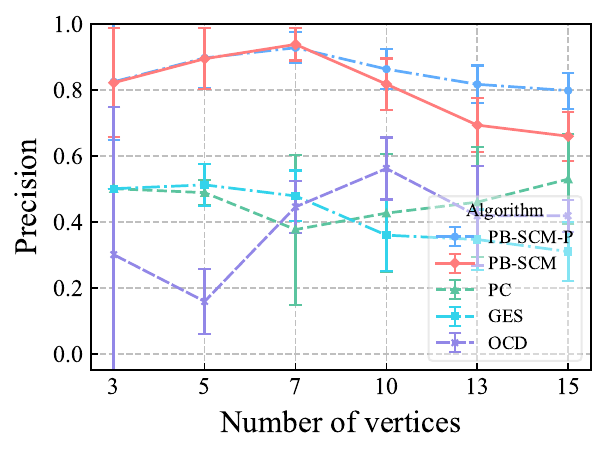}
		\label{fig:sensitivity: Node Precision}
	}
	\subfigure[Sensitivity to Sample Size]{
	\includegraphics[width=0.32\textwidth]{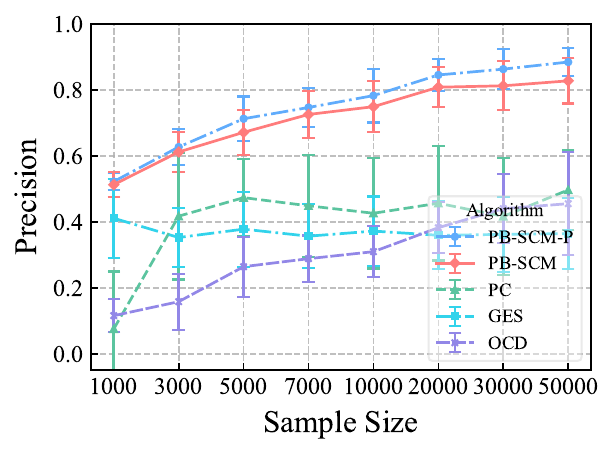}
	\label{fig:sensitivity: sample Precision}
}
	\caption{Precision in the Sensitivity Experiments}	
	\label{fig:sensitivity Precision}
\end{figure*}

\begin{figure*}[h]
	\centering
	\subfigure[Sensitivity to Avg. Indegree Rate]{
	\includegraphics[width=0.31\textwidth]{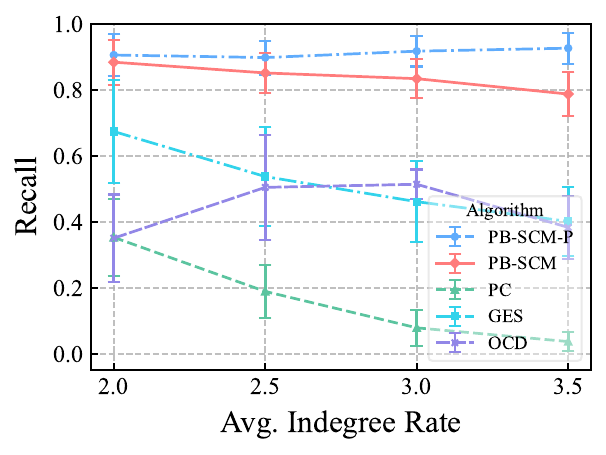}
	\label{fig:sensitivity: Indegree Recall}
}
	\subfigure[Sensitivity to Number of vertices]{
		\includegraphics[width=0.31\textwidth]{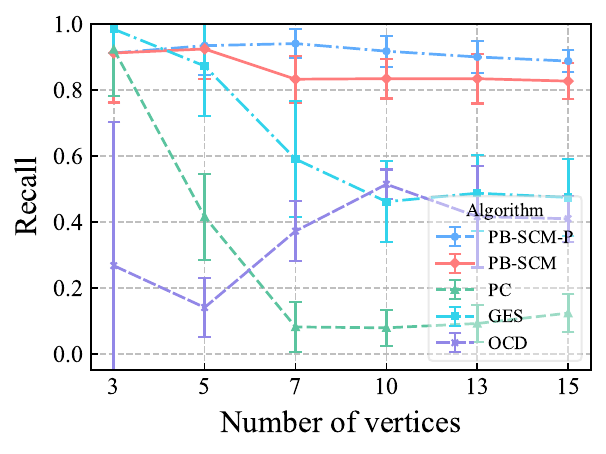}
		\label{fig:sensitivity: Node Recall}
	}
	\subfigure[Sensitivity to Sample Size]{
	\includegraphics[width=0.32\textwidth]{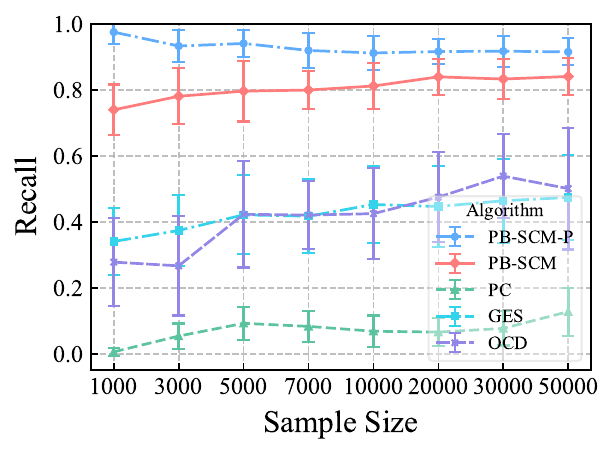}
	\label{fig:sensitivity: sample Recall}
}
	\caption{Recall in the Sensitivity Experiments}	
	\label{fig:sensitivity Recall}
\end{figure*}

\begin{figure*}[h]
	\centering
	\subfigure[Sensitivity to Avg. Indegree Rate]{
	\includegraphics[width=0.31\textwidth]{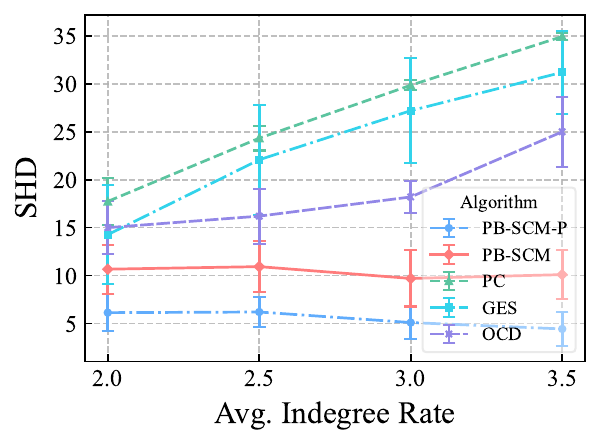}
	\label{fig:sensitivity: Indegree SHD}
}
	\subfigure[Sensitivity to Number of vertices]{
		\includegraphics[width=0.31\textwidth]{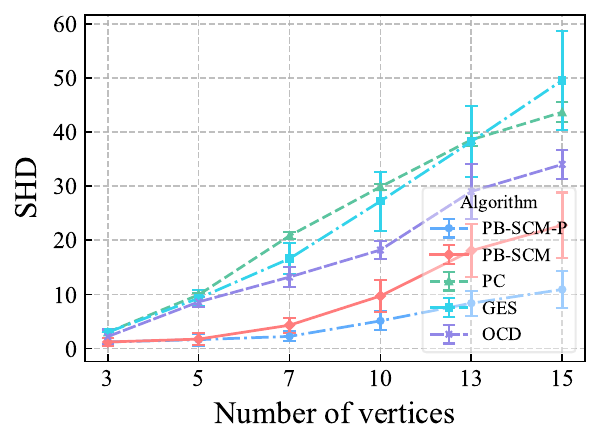}
		\label{fig:sensitivity: Node SHD}
	}
	\subfigure[Sensitivity to Sample Size]{
	\includegraphics[width=0.32\textwidth]{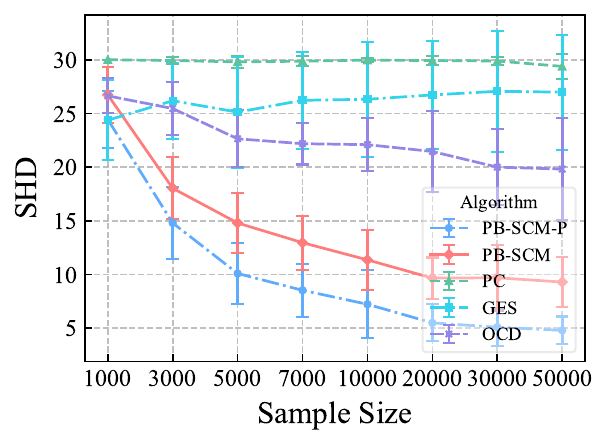}
	\label{fig:sensitivity: sample SHD}
}
	\caption{SHD in the Sensitivity Experiments}	
	\label{fig:sensitivity SHD}
\end{figure*}

\begin{table}[h]
  \centering
  \caption{Sensitivity to the max order of cumulant $K$}
  \label{tab:data}
  \begin{tabular}{ccccccc}
    \toprule
    \multirow{2}{*}{K} & \multicolumn{4}{c}{Score type} \\
    \cmidrule(lr){2-5}
    & F1 & Precision & Recall & SHD \\
    \midrule
    2 & $0.69 \pm 0.04$ & $0.56 \pm 0.05$ & $0.90 \pm 0.05$ & $23.38 \pm 3.88$ \\
    3 & $0.82 \pm 0.06$ & $0.82 \pm 0.08$ & $0.83 \pm 0.06$ & $9.53 \pm 2.71$ \\
    4 & $0.82 \pm 0.06$ & $0.82 \pm 0.07$ & $0.83 \pm 0.06$ & $9.47 \pm 2.78$ \\
    5 & $0.83 \pm 0.05$ & $0.82 \pm 0.07$ & $0.84 \pm 0.05$ & $9.47 \pm 2.68$ \\
    \bottomrule
  \end{tabular}
\end{table}

\end{document}